\theoremstyle{plain}
\newtheorem{theorem}{Theorem}[section]
\newtheorem*{theorem*}{Theorem}
\newtheorem{proposition}[theorem]{Proposition}
\newtheorem{lemma}[theorem]{Lemma}
\newtheorem*{lemma*}{Lemma}
\newtheorem{corollary}[theorem]{Corollary}
\newtheorem*{corollary*}{Corollary}
\theoremstyle{definition}
\newtheorem{definition}[theorem]{Definition}
\newtheorem{assumption}[theorem]{Assumption}
\theoremstyle{remark}
\newcommand{\cL}{\mathcal{L}}
\newcommand{\bI}{\mathbf{I}}
\newcommand{\W}{\mathbf{W}}
\newcommand{\R}{\mathbb{R}}
\newcommand{\X}{\mathbf{x}}
\newcommand{\Y}{\mathbf{y}}
\newcommand{\one}{\mathbf{1}}
\newcommand{\be}{\begin{equation}}
\newcommand{\ee}{\end{equation}}
\newcommand{\ba}{\begin{array}}
\newcommand{\ea}{\end{array}}
\newcommand{\bad}{\begin{aligned}}
\newcommand{\ead}{\end{aligned}}
\newcommand{\normone}[1]{\| #1 \|_1}
\newcommand{\normtwo}[1]{\| #1 \|_2}
\newcommand{\normfro}[1]{\| #1 \|_{\text{F}}}
\newcommand{\sk}{\mathrm{skew}}
\newcommand{\sym}{\mathrm{sym}}
\newcommand{\st}{\mathrm{s.t. }}
\newcommand{\St}{\mathrm{St}}
 \newcommand{\grad}{\mathrm{grad}}
\newcommand{\Proj}{\mathrm{Proj}}
\title{\LARGE \bf
Retraction-Free Decentralized Non-convex Optimization with Orthogonal Constraints
}
\author{Youbang Sun$^{1}$, Shixiang Chen$^{2}$, Alfredo Garcia$^{3}$ and Shahin Shahrampour$^{1}$
\thanks{ This work is supported in part by NSF ECCS-2240788 and NSF ECCS-2240789 Awards.}
\thanks{$^{1}$ Y. Sun and S. Shahrampour are with the Department of Mechanical and Industrial Engineering at Northeastern University, Boston, MA 02115, USA. 
        {\tt\small email:\{sun.youb,s.shahrampour\}@northeastern.edu}.}
\thanks{$^{2}$  Shixiang Chen is with the School of Mathematical Sciences, University of Science and Technology of China (USTC), Hefei, China.
{\tt \small email: shxchen@ustc.edu.cn }.}
\thanks{$^{3}$  Alfredo Garcia is with the Department of Industrial \& Systems Engineering at Texas A \& M University, College Station, TX 77845, USA. {\tt \small email: alfredo.garcia@tamu.edu }.}
}
\begin{document}

\maketitle
\thispagestyle{empty}
\pagestyle{empty}


\begin{abstract}
In this paper, we investigate decentralized non-convex optimization with orthogonal constraints. Conventional algorithms for this setting require either manifold retractions or other types of projection to ensure feasibility, both of which involve costly linear algebra operations (e.g., SVD or matrix inversion). On the other hand, infeasible methods are able to provide similar performance with higher computational efficiency. 
Inspired by this, we propose the first decentralized version of the retraction-free landing algorithm, called
\textbf{D}ecentralized \textbf{R}etraction-\textbf{F}ree \textbf{G}radient \textbf{T}racking (DRFGT).
We theoretically prove that DRFGT enjoys the ergodic  convergence rate of $\mathcal{O}(1/K)$, matching the convergence rate of centralized, retraction-based methods. We further establish that under a local Riemannian PŁ condition, DRFGT achieves the much faster linear convergence rate. Numerical experiments demonstrate that DRFGT performs on par with the state-of-the-art retraction-based methods with substantially reduced computational overhead.
\end{abstract}

\section{Introduction}
\label{sec:introduction}

This paper focuses on the decentralized non-convex optimization problem under orthogonality constraints:
 \begin{equation}\label{eq:dist_problem}
\begin{aligned}
   \min_{x_1, \ldots, x_n \in \R^{d\times r}} &\ \  \left\{\frac{1}{n} \sum_{i=1}^nf_i(x_i)\right\} \\
    \st \ \  x_i  &\in \St(d,r)\triangleq\{x\mid x^\top x=I_r\},\ \  \forall i \in [n],\\
      x_1 &=\ldots = x_n,    
\end{aligned}
\end{equation}
where $\St(d,r)$ denotes the orthogonal constraint, also known as the Stiefel manifold, and each individual function $f_i(x_i)$ is assumed to be smooth and non-convex.
Problem \eqref{eq:dist_problem} seeks to find the minimum of a global objective function (or network function) $f(x)\triangleq \frac{1}{n} \sum_{i=1}^nf_i(x)$ while maintaining consensus across the network, which naturally appears in many machine learning applications, e.g., principal component analysis (PCA) \cite{hotelling1933analysis}, canonical correlation analysis (CCA) \cite{hotelling1935canonical}, decentralized spectral analysis  \cite{kempe2008decentralized}, low-rank matrix approximation \cite{kishore2017literature} and dictionary learning \cite{raja2015cloud}.  More recently, due to the distinctive properties of orthogonal matrices,  
orthogonality constraints and regularization methods have been used for deep neural networks \cite{arjovsky2016unitary,vorontsov2017orthogonality}, providing improvements in model robustness and stability \cite{trockman2021orthogonalizing} and adaptive fine-tuning of large language models \cite{zhang2023adaptive}.

Typically, Problem \eqref{eq:dist_problem} is numerically solved by extending the classical gradient descent (GD) to the Riemannian framework using manifold optimization algorithms. Instead of the Euclidean gradient, the Riemannian gradient is calculated, and a retraction step follows in the direction of the Riemannian gradient \cite{absil2012projection}. Many retraction-based algorithms, often referred to as ``feasible" methods, exhibit similar iteration complexity rates \cite{Boumal2016,zhang2016first} to the Euclidean problems. In many instances, however, computing retractions becomes prohibitively {\it expensive} or {\it even infeasible}, especially in high-dimensional problems. For example, performing retractions on the Stiefel manifold $\St(d,r)$ requires $\mathcal{O}(dr^2)$ algebraic operations. 
Therefore, when $r$ is large, the computation of retractions becomes the predominant factor in terms of the algorithm runtime.	Moreover, retractions typically necessitate SVD or QR operations that are not as efficient and GPU-friendly as matrix multiplications; these operations introduce bottlenecks for large-scale problems.	

To address these issues, ``retraction-free" or ``infeasible" methods have been proposed. These methods only require matrix multiplications and are especially desirable when strict feasibility is not enforced during the optimization process. In retraction-free methods, iterates do not always remain on a manifold $\mathcal{M}$ but gradually converge towards it. A prominent example of retraction-free methods is the landing algorithm \cite{ablin2022fast}, analyzed in the centralized setting. Similar to the majority of modern machine learning algorithms, when retraction-free algorithms are applied to large-scale problems, scenarios such as distributed datasets or intractable computational complexity in the centralized setting necessitate distributed and/or decentralized implementation across a set of agents in a network.
However, compared to the Euclidean setting, this task is extremely challenging partly due to the non-convexity of the Stiefel manifold in Problem \eqref{eq:dist_problem}. 

\begin{table*}[h]
\caption{A comparison of existing algorithms: DRCS and DeEPCA achieve linear rate only for special cases of Problem \eqref{eq:dist_problem} (consensus and PCA, respectively). Other competitors solve \eqref{eq:dist_problem} with sublinear rates. $L$: smoothness factor, $\kappa$: condition number, $n$: network size.}

\begin{minipage}{\textwidth}
  \centering
  \begin{adjustbox}{max width=\textwidth}
    \begin{tabular}{l c c c c c c}
    \toprule
        &DRGTA
        &DRCS
        &DeEPCA
        &DESTINY
        &DRFGT
        &DRFGT\\
              & \cite{chen2021decentralized}
          &  \cite{chen2023consensus}
          &  \cite{ye2021deepca} 
          & \cite{wang2022decentralized}
          & (\textbf{This paper})
          &  (\textbf{This paper}) \\
    \midrule
    Retraction & yes & yes & yes\textsuperscript{\ref{footnote-deepca}}   & no & no &no \\
    \midrule
    PŁ Condition & no & N/A  & implied  & no &no & yes \\
    \midrule
    Communication & multi-step & multi-step &  single-step     & single-step\textsuperscript{\ref{footnote}}  & single-step\textsuperscript{\ref{footnote}}  & single-step\textsuperscript{\ref{footnote}} \\
            \midrule
    Optimal Step size & $\mathcal{O}(1/L)$ & N/A  & N/A  & $\min\{\mathcal{O}(L^{-4}), \mathcal{O}(L^{-2}/n)\}$  & $\mathcal{O}(1/L)$  & $\mathcal{O}(\kappa^{-1/4}/L) $  \\
        \midrule
    Convergence Rate & $\mathcal{O}(1/K)$ 
    &  \makecell{ linear (consensus only) }
    & \makecell{ linear  (PCA only) }
    & \multicolumn{1}{c}{$\mathcal{O}(1/K)$} & \multicolumn{1}{c}{$\mathcal{O}(1/K)$} & \multicolumn{1}{c}{linear} \\
    \bottomrule
    \end{tabular}%
    \end{adjustbox}
\label{tab:summary_exsiting}
\end{minipage}
\end{table*}

\subsection{Contributions}

In this paper, we consider a decentralized, retraction-free algorithm to solve \eqref{eq:dist_problem}, referred to as the \textbf{D}ecentralized \textbf{R}etraction-\textbf{F}ree \textbf{G}radient \textbf{T}racking (DRFGT) algorithm. The algorithm is fully decentralized and only requires agents to communicate with their neighbors to ensure convergence with consensus. We list our main contributions as follows:

\begin{itemize}
    \item The iterates of retraction-free algorithm must stay close enough to the manifold in safety regions. We characterize the safety constraint for iterates of DRFGT to remain in the neighborhood of the Stiefel manifold for the purpose of ensuring eventual feasibility (Proposition \ref{prop:safe_dist}).
    \item We prove that for general smooth functions, the iteration complexity of obtaining an $\epsilon$-stationary point for DRFGT is $\mathcal{O}(1/\epsilon)$ (Theorem \ref{thm:ergodic}). This convergence result matches the centralized version and is the first convergence result for a decentralized version of the landing algorithm.
    \item We also establish that under a local Riemannian Polyak-Łojasiewicz (PŁ) condition on $\St(d,r)$, a much faster local linear convergence can be guaranteed (Theorem \ref{thm:distributed_convergence}). To the best of the authors' knowledge, this is the {\it first-ever linear convergence result} of any decentralized Riemannian optimization algorithm (see Table \ref{tab:summary_exsiting}).
    \item We provide numerical experiments to verify our theoretical results and compare the efficiency of our algorithm with existing retraction-based algorithms.    
\end{itemize}

\footnotetext[1]{\label{footnote-deepca}
    While the updates in \cite{ye2021deepca} do not use retraction explicitly, the computation complexity of QR-decomposition is the same as a retraction step.
}
\footnotetext[2]{\label{footnote}
    Although both algorithms use single-step updates, the step size depends on the network size $n$.
}

\section{Related Literature}
\subsection{Optimization on Manifolds}
Optimization on manifolds has gained significant attention recently, both in the theoretical and practical directions. With the introduction of \textit{retraction}, a projection mapping from the manifold tangent space back to the manifold, many Euclidean optimization algorithms have been adapted to the Riemannian setting, including gradient descent \cite{absil2008optimization, zhang2016first}, quasi-Newton methods \cite{qi2010riemannian}, and even accelerated gradient methods \cite{liu2017accelerated}.

When applied to the Stiefel manifold, the aforementioned methods produce iterates that always stay on $\St(d,r)$; however, computing retractions are often a lot more expensive than gradient calculations. Instead, many application-oriented algorithms, especially for deep learning, often rely on adding a regularizer so that the iterates only stay relatively close to $\St(d,r)$. These algorithms are easy to implement due to cheaper computation costs, but they can only approximately solve the problem with sub-optimal solutions.

Recently, as an effort for computational efficiency, many works have studied unconstrained surrogate problems for manifold constraints. Although the resulting algorithms do not enforce strict feasibility at every iteration, they can be efficiently implemented with convergence guarantees. \cite{schechtman2023orthogonal} introduced ODCGM, which uses a non-smooth penalty function. Alternatively, Fletcher's penalty method \cite{fletcher2002nonlinear} introduced a smooth function, yet solving the problem often requires second-order gradient information or the help of augmented Lagrangian methods \cite{gao2019parallelizable}.
Another approach is to view the manifold as a functional constraint. \cite{lin2022complexity, boob2024level, scutari2016parallel} focused on optimization tasks in the Euclidean space that are constrained by functional equality or inequality. Most of these approaches require hierarchical optimizations in order to derive convergence. 
Recently, some works leveraged the structure of manifolds and used a simplified Fletcher's merit function, including PenC \cite{xiao2022class}, CDF \cite{xiao2024dissolving}, and the landing algorithm \cite{ablin2022fast}, which is of great relevance to our work. 

\subsection{Decentralized Extensions of Riemannian Optimization}
Decentralized optimization have been well-studied in the Euclidean domain. The decentralized GD algorithm \cite{tsitsiklis1986distributed} and its variations \cite{nedic2010constrained, chen2021distributed} perform a local gradient step with a neighborhood averaging term. When the agents are heterogeneous, these methods require a diminishing step size in order to achieve consensus among agents. Since by using a constant step size these methods are only guaranteed to converge to a neighborhood of the optimal solution, gradient tracking style methods \cite{shi2015extra, di2016next,qu2017harnessing,sun2022centralized} have been proposed to achieve exact convergence.

Nevertheless, these methods cannot be applied to Problem \eqref{eq:dist_problem}. Due to the non-convexity of $\St(d,r)$, an average of points on the manifold is not guaranteed to be on the manifold. As a result, we need to consider the problem under the scope of Riemannian optimization. Most existing Riemannian studies are designed for specific tasks, such as the PCA problem \cite{ye2021deepca} or the consensus problem \cite{sarlette2009consensus, chen2023consensus}. The work by \cite{scutari2016parallel} addressed a parallel functional inequality constrained problem, yet the convergence is only asymptotic. Recent work by \cite{chen2021decentralized, wang2022decentralized, deng2023decentralized, wang2024decentralized, wu2023decentralized, chen2023decentralized} addressed  decentralized Riemannian optimization.
DPRGD proposed by \cite{deng2023decentralized} is close to DGD \cite{tsitsiklis1986distributed} in nature and uses a diminishing step size and a projection operator to ensure the feasibility of each iterate.
\cite{chen2021decentralized} introduced DRGTA, which uses multi-step consensus and retractions. DRCGD in \cite{chen2023decentralized} considered a conjugate gradient approach with  asymptotic convergence rates. The recent work of \cite{wang2024decentralized} focused on non-smooth composite problems based on projection, which entails the same complexity as retraction. Another approach, presented in \cite{wang2022decentralized}, introduced an infeasible decentralized method named DESTINY, founded on the exact penalty function proposed for smooth problems on $\St(d,r)$ in \cite{xiao2021solving}. This was recently extended to CDADT \cite{wang2024double}, a double tracking method that addresses the generalized orthogonal constrained problems.
In contrast to \cite{xiao2021solving}, we employ a distinct merit function, defined in \eqref{eq:merit_def}, which serves as a pivotal factor for achieving a faster convergence rate and simpler analysis under various assumptions.

\section{Preliminaries}

\subsection{Notations}\label{sec:notations}
We start with definitions and notations. The Frobenius norm of matrix $A$ is denoted as $\normfro{A}$, the spectral norm of matrix $A$ is denoted as $\normtwo{A}$, the skew and symmetric parts of a square matrix $A$ are denoted as $\sk(A) = \frac{1}{2}(A-A^\top)~\text{and}~\sym(A) = \frac{1}{2}(A+A^\top)$, respectively. The  inner product of matrices $A$ and $B$ is defined as $\langle A, B\rangle \triangleq Tr(AB^\top)$. 
We denote the identity matrix of rank $r$ as $I_r$ and denote a vector of all ones as $\one$. The Kronecker product of matrices $A$ and $B$ is denoted as $A\otimes B$. The spectral radius of matrix $A$ is denoted as $\rho(A)$. We use $[n]$ to denote the set $\{1, 2, \ldots, n\}.$ 
Given a set $\mathcal{S}$, we represent $\mathcal{D}(\mathcal{S}, \delta) \triangleq \{x \mid \text{dist}(\mathcal{S}, x) \leq \delta\}$, where $\text{dist}(\mathcal{S}, x)$ denotes  $\min_{y\in\mathcal{S}} \normfro{x-y}$. $\Proj_{\St}$ represents projection on the Stiefel manifold.

\subsection{Optimization on the Stiefel Manifold}
Let us consider the problem $\min_{x\in \St(d,r)} f(x)$, which is an optimization on the Stiefel manifold. Gradient descent on the Stiefel manifold requires adapting the standard GD with respect to the manifold's geometric constraints \cite{absil2008optimization}. In each iteration, instead of taking a step using the Euclidean gradient $\nabla f(x)$, we use the Riemannian gradient $\grad f(x)$. 
Here, we use the canonical definition of the Riemannian gradient, which maps $\nabla f(x)$ onto the tangent space of $\St(d,r)$ at point $x$. The Riemannian gradient with respect to the canonical metric \cite{edelman1998geometry} 
for the Stiefel manifold is defined as
\begin{equation}\label{eq:relative-gradient}
    \grad f(x) = \sk(\nabla f(x) x^\top)x,
\end{equation}
where we drop a multiplicative factor of $2$ for convenience following \cite{ablin2023infeasible}.

The next iterate of Riemannian GD is then calculated by a retraction, which defines how a point moves on the manifold. There are many ways to define a retraction on the Stiefel manifold, such as the traditional exponential retraction \cite{edelman1998geometry}, the projection retraction \cite{absil2012projection}, QR-based retraction \cite{absil2008optimization} and the Cayley retraction \cite{wen2013feasible}.
The convergence of the retraction-based gradient methods has been well-studied in the literature. Although  retraction-based methods are relatively well-understood, the computational burden and poor scalability of the retraction operation have motivated the development of novel approaches without the use of retractions.

In this paper, our focus is on the landing algorithm proposed in \cite{ablin2022fast}, which does not require retractions and updates each iterate with a {\it landing field} $\Lambda(x)$ as follows
\begin{equation}
    x_{k+1} = x_k -  \alpha \Lambda(x_k),
\end{equation}
where $\alpha>0$ is the step size, and the landing field is calculated as
\begin{equation}
\label{eq:landing_field}
\begin{aligned}
    \Lambda(x) \triangleq& \grad f(x) + \lambda \nabla p(x),\\
    p(x)\triangleq& \frac{1}{4} \normfro{x^\top x - I_r}^2.
\end{aligned}
\end{equation}
Note that when $x\notin \St(d,r)$, we refer to $\grad f(x)$ as the relative gradient. The direction $-\grad f(x)$ aims at minimizing the function $f(x)$ and the direction $-\nabla p(x)$ works towards the feasibility constraint. In fact, these two components are orthogonal to each other, and $\langle \grad f(x), \nabla p(x)\rangle = 0$ \cite{ablin2023infeasible}. The orthogonality implies that $\Lambda(x) = 0$ if and only if $\nabla p(x)=0$ and $\grad f(x)=0$, which coincides with the stationarity condition of $x$ on the manifold.

Although the iterates of the landing algorithm are not on the manifold, they can stay close to it due to $-\nabla p(x)$. Suppose there exists a uniform upper bound on the Riemannian gradient of $f$. Then, each iterate of the landing algorithm stays within a relatively close neighborhood of $\St(d,r)$, referred to as the ``safety region" and formally defined below.

\begin{definition}[Safety Region \cite{ablin2023infeasible}] With $\epsilon\in(0,3/4)$, we define
\begin{equation}\label{eq:safe_region}
    \St(d,r)^\epsilon \triangleq \{x \in \R^{d\times r}| \normfro{x^\top x - I_r} \leq \epsilon\}.
\end{equation}
\end{definition}
We use the above definition to address the safety properties of our proposed decentralized algorithm later in Section \ref{sec:main}.

\subsection{Technical Assumptions}

In the decentralized setting, we consider a network of $n$ agents modeled with a graph $\mathcal{G} = (\mathcal{V}, \mathcal{E})$, where the agents are represented by nodes $\mathcal{V} = [n]$ and each edge $\{i,j\} \in \mathcal{E}$ denotes a connection between agent $i$ and $j$. As such, the neighborhood of agent $i$ is defined as $\mathcal{N}_i\triangleq \{j | \{i,j\}\in \mathcal{E}\}$.
Each agent $i\in [n]$ is associated with the individual objective function $f_i : \R^{d\times r} \rightarrow \mathbb{R}$ in \eqref{eq:dist_problem}. The agents work collectively to find the optimum of the global objective function $f$. We provide the following assumptions on the network as well as the objective functions.

\begin{assumption}[Network Model]\label{assump_net}
We assume the graph $\mathcal{G}$ is undirected and connected, i.e., there exists a path between any two distinct agents $i,j \in \mathcal{V}$.
\end{assumption}
Furthermore, we use a symmetric and doubly stochastic matrix $W \in \R^{n\times n}$ to capture the communication among agents. It is easy to show from stochasticity that $W\one=\one$ and that one is an eigenvalue of $W$. Additionally, the connectivity assumption guarantees that all other eigenvalues of $W$ are strictly less than $1$ in magnitude. The second largest singular value of $W$ is denoted as $\sigma_W$.
In general, $0\leq\sigma_W<1$ describes the connectivity of $\mathcal{G}$, and smaller values of $\sigma_W$ often imply a better-connected graph. For a fully connected graph with $W=\one\one^\top/n$ we have $\sigma_W = 0$.

We also provide the following assumptions on the objective functions, all of which are considered to be standard in the literature \cite{zhang2016first}.
\begin{assumption} [Lipschitz Smoothness]
\label{assump:Lip}
We assume that all local objective functions $f_i: \mathbb{R}^{
d\times r} \rightarrow \mathbb{R}, i \in [n]$ are differentiable and $L$-smooth on $\mathbb{R}^{d\times r}$, i.e. for all $i$ and $x, y \in \mathbb{R}^{d\times r}$, we have
\begin{equation}
    f_i(y) \leq f_i(x) + \langle \nabla f_i(x), y - x\rangle + \frac{L}{2}\normfro{y-x}^2.
\end{equation}
We also assume $\max_{x\in \St(d,r)^\epsilon }\normfro{\grad f_i(x)}\leq G$, $\forall i\in [n]$.
\end{assumption}
The above assumption implies that the global objective $f$ is also $L$-smooth. We further assume that $f$ is $C^2$-smooth.

\begin{assumption} [Local Riemannian PŁ on $\St(d,r)$]\label{assump:PL}
The global objective function $f(x)= \frac{1}{n} \sum_{i=1}^nf_i(x): \mathbb{R}^{
d\times r} \rightarrow \mathbb{R}$ satisfies the local Riemannian Polyak-Łojasiewicz (PŁ) condition on the Stiefel manifold with a factor $\mu>0$ if 
\begin{equation}
    |f(x) -f_{\mathcal{S}}^*| \leq \frac{1}{2\mu} \normfro{\grad f(x)}^2,
\end{equation}
for any point $x \in \St(d,r) \cap \mathcal{D}(\mathcal{S}, {2 \delta})$, where $\mathcal{S}$ denotes the set of all local minima with a given value $f_{\mathcal{S}}^*$.
\end{assumption}

The local PŁ condition is less restrictive than other commonly studied conditions, such as geodesic strong convexity. Problems satisfying the local PŁ condition include the PCA problem and generalized quadratic problem \cite{liu2019quadratic}.  In addition, we note that although for our proposed algorithm iterates do not strictly stay on the manifold (similar to the landing algorithm), we only impose a local Riemannian PŁ condition on $\St(d,r)$ with no other assumptions on the landing field itself.

\subsection{A Smooth Merit Function and the Optimality Condition}\label{sec:merit_func}

For the convergence analysis, we introduce the following smooth merit function \cite{ablin2023infeasible}, defined with respect to the global objective function:
\begin{equation}\label{eq:merit_def}
\begin{aligned}
    \cL(x) &= f(x) + h(x) + \gamma p(x) ,\\
    \text{where} \quad h(x) &\triangleq -\frac{1}{2}\langle \sym(x^\top \nabla f(x)), x^\top x - I_r\rangle.
\end{aligned} 
\end{equation}

For $\epsilon\in(0,\frac{3}{4})$, $\gamma$ must satisfy the following relationship,
\begin{equation}\label{ineq:gamma}
\begin{aligned}
    \gamma \geq  & \frac{2}{3-4\epsilon}\left(L(1-\epsilon) + 3s + \hat{L}^2 \frac{(1+\epsilon)^2}{\lambda(1-\epsilon)}\right),\\
    \text{where}~~~~s &= \sup_{x\in \St(d,r)^\epsilon} \normfro{\sym(x^\top \nabla f(x))},~~~\text{and}\\
    \hat{L} &= \max(L, \max_{x\in \St(d,r)^\epsilon} \normfro{\nabla f(x)}).
\end{aligned}
\end{equation}
In addition, it can be verified that $\forall x\in \St(d,r)$,
\begin{align}\label{eq:nablaL}
    \nabla \cL(x)  &= \nabla f(x)-x~\sym(x^\top \nabla f(x)).
\end{align}
Some existing works \cite{schechtman2023orthogonal} have used the non-smooth merit function $\cL'(x) = f(x) + \gamma \normfro{x^\top x - I_r}$ to better capture the function value change towards the normal direction in a local neighborhood of $\St(d,r)$. However, we choose to work with the smooth merit function in \eqref{eq:merit_def}, which introduces the term $h(x)$ to enable a more fine-grained analysis for the orthogonal direction, facilitating the deduction of a new PŁ inequality. 

We now state the following proposition on the analytical properties of the merit function $\cL(x)$ and refer to \cite{ablin2023infeasible} for the detailed proof of these results.

\begin{proposition}\label{prop_L}
The merit function $\cL(x)$ satisfies the following properties.
\begin{enumerate}
    \item $\cL(x)$ is $L_\cL$-smooth on $x \in \St(d, r)^\epsilon$, with $L_{\cL} \leq L_{f+h}+ (2+3\epsilon)\gamma$, where $L_{f+h}$ is the smoothness  of $f+h$.

    \item For $\rho = \min\{\frac{1}{2}, \frac{\gamma}{4\lambda(1+\epsilon)}\}$ ($\gamma$ given in \eqref{ineq:gamma}) and $x \in \St(d, r)^\epsilon$, we have 
    $$\langle \Lambda(x), \nabla \cL (x) \rangle \geq \rho \normfro{\Lambda(x)}^2.$$ 
\end{enumerate}
\end{proposition}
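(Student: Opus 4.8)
The statement to prove is Proposition~\ref{prop_L}, which collects two analytical properties of the merit function $\cL(x)$: (i) that $\cL$ is $L_\cL$-smooth on the safety region $\St(d,r)^\epsilon$ with the stated bound, and (ii) the key inequality $\langle \Lambda(x), \nabla\cL(x)\rangle \geq \rho\,\normfro{\Lambda(x)}^2$. Since the paper itself defers to \cite{ablin2023infeasible} for the detailed proof, my plan is to reconstruct the argument at a sketch level.

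\textbf{Part (1): Smoothness.} The plan is to split $\cL = (f+h) + \gamma p$ and bound the smoothness of each piece on $\St(d,r)^\epsilon$. The term $f+h$ is $C^2$ (using Assumption~\ref{assump:Lip} and the assumed $C^2$-smoothness of $f$, plus the fact that $h$ is a polynomial combination of $x$ and $\nabla f(x)$); call its Lipschitz-gradient constant $L_{f+h}$. For $p(x) = \frac14\normfro{x^\top x - I_r}^2$, a direct computation gives $\nabla p(x) = x(x^\top x - I_r)$ and $\nabla^2 p(x)[\xi] = \xi(x^\top x - I_r) + x(\xi^\top x + x^\top\xi)$; on $\St(d,r)^\epsilon$ one has $\normfro{x^\top x - I_r}\le\epsilon$ and $\normtwo{x}^2 \le 1+\epsilon$, so $\normop{\nabla^2 p(x)} \le \epsilon + 2(1+\epsilon) = 2+3\epsilon$. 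Summing the two bounds yields $L_\cL \le L_{f+h} + (2+3\epsilon)\gamma$. This part is routine.

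\textbf{Part (2): The descent inequality.} This is the main obstacle and the technical heart of the proposition. The plan is: first, on $\St(d,r)$ itself one checks (via \eqref{eq:nablaL}) that $\nabla\cL(x)$ is exactly the relative gradient $\grad f(x)$ restricted to the manifold and is orthogonal to $\nabla p(x) = 0$; off the manifold we must control the cross terms. Write $\Lambda(x) = \grad f(x) + \lambda\nabla p(x)$ and $\nabla\cL(x) = \nabla(f+h)(x) + \gamma\nabla p(x)$, and expand the inner product into four pieces. The piece $\gamma\lambda\langle\nabla p(x),\nabla p(x)\rangle = \gamma\lambda\normfro{\nabla p(x)}^2$ is the ``good'' feasibility term; the piece $\langle\grad f(x),\nabla(f+h)(x)\rangle$ must be shown to dominate $\normfro{\grad f(x)}^2$ up to a constant using that $h$ was precisely designed so that $\nabla(f+h)$ agrees with $\grad f$ in the tangent direction and is $O(\normfro{x^\top x-I_r})$ in the normal direction; and the two cross terms $\langle\grad f(x),\gamma\nabla p(x)\rangle + \lambda\langle\nabla p(x),\nabla(f+h)(x)\rangle$ must be absorbed. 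The lower bound on $\gamma$ in \eqref{ineq:gamma} is exactly the condition that makes this absorption work: the quantities $s$ and $\hat L$ there bound the ``bad'' parts of $\nabla h$ and $\nabla f$, and completing the square (a Young's inequality $ab \le \frac{a^2}{2c} + \frac{c b^2}{2}$ with carefully chosen $c$) against the two good terms $\normfro{\grad f(x)}^2$ and $\gamma\lambda\normfro{\nabla p(x)}^2$ produces the coefficient $\rho = \min\{\frac12, \frac{\gamma}{4\lambda(1+\epsilon)}\}$. The factor $(1+\epsilon)$ enters because $\normfro{\nabla p(x)}^2 = \normfro{x(x^\top x - I_r)}^2$ relates to $\normtwo{x}^2\normfro{x^\top x - I_r}^2 \le (1+\epsilon)\normfro{x^\top x-I_r}^2$, so converting between $\normfro{\nabla p(x)}^2$ and the penalty $p(x)$ costs a factor of $1+\epsilon$.

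The delicate point throughout Part (2) is keeping track of the normal/tangential decomposition of each gradient on $\St(d,r)^\epsilon$ (not just on $\St(d,r)$), since that is what makes $h$ useful; I expect the bookkeeping of cross terms and the precise matching with \eqref{ineq:gamma} to be where all the real work lies, whereas the existence of \emph{some} large enough $\gamma$ is intuitively clear from the orthogonality $\langle\grad f,\nabla p\rangle=0$ on the manifold and continuity. Since the paper explicitly cites \cite{ablin2023infeasible}, I would present this as a sketch and refer there for the full constant-chasing.
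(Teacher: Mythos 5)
The paper gives no proof of this proposition at all --- it defers entirely to \cite{ablin2023infeasible} --- so the comparison is against the structure of that cited argument, which your sketch reproduces faithfully in outline. Your Part (1) is complete and correct: the computation $\nabla^2 p(x)[\xi]=\xi(x^\top x-I_r)+x(\xi^\top x+x^\top\xi)$ together with $\normtwo{x^\top x-I_r}\le\epsilon$ and $\normtwo{x}^2\le 1+\epsilon$ yields exactly the $(2+3\epsilon)\gamma$ constant. Two imprecisions in Part (2) are worth flagging. First, of the two cross terms you propose to ``absorb,'' the term $\gamma\langle\grad f(x),\nabla p(x)\rangle$ is identically zero for \emph{every} $x\in\R^{d\times r}$, not merely on the manifold: writing $S=\sk(\nabla f(x)x^\top)$ (skew) and $N=x^\top x-I_r$ (symmetric), one has $\langle Sx,\,xN\rangle=\Tr(SxNx^\top)$, the trace of a skew matrix against a symmetric one. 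This exact orthogonality (stated in the paper right after \eqref{eq:landing_field}) is what makes the landing construction work, and it also gives $\normfro{\Lambda(x)}^2=\normfro{\grad f(x)}^2+\lambda^2\normfro{\nabla p(x)}^2$ for free; the only cross term that genuinely needs absorbing is $\lambda\langle\nabla p(x),\nabla(f+h)(x)\rangle$, and the summand $\hat L^2(1+\epsilon)^2/(\lambda(1-\epsilon))$ in \eqref{ineq:gamma} is tailored precisely to it. Second, the claim that ``$\nabla(f+h)$ agrees with $\grad f$ in the tangent direction'' is not quite right: on $\St(d,r)$, $\nabla\cL(x)=\nabla f(x)-x\,\sym(x^\top\nabla f(x))$ is the Riemannian gradient for the \emph{embedded} metric, whereas $\grad f(x)=\sk(\nabla f(x)x^\top)x$ is the \emph{canonical}-metric gradient; the two differ, and the constant $\tfrac12$ in $\rho$ is exactly the price of relating them (compare Lemma \ref{lemma:aux} in the appendix, which gives the companion bound $\normfro{\nabla\cL(x)}\le 2\normfro{\Lambda(x)}$ on the manifold). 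Neither point breaks your argument --- the first actually makes the bookkeeping easier than you anticipate --- but the constant-chasing you defer to the reference hinges on getting both right, so I would correct them before calling the sketch a proof.
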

The two properties of Proposition \ref{prop_L} suggest that the merit function is indeed Lipschitz smooth, and a landing step can be seen as a descent step on the merit function. Moreover, since $\langle \grad f(x), \nabla p(x)\rangle=0$, the second property shows that within the neighborhood $\St(d,r)^\epsilon$, 
if $\nabla \cL(x) = 0$, we have $\nabla p(x)=0$ and $\grad f(x)=0$.

Here, the landing field is defined as \eqref{eq:landing_field} with respect to the global function $f(x)$. Similarly, we can define the local landing field $\Lambda_i(x)$ using the local objective $f_i(x)$, which we later use in the next section.

Additionally, with Assumption \ref{assump:Lip}, it is easy to show the Lipschitz continuity of $\Lambda(x)$ and $\Lambda_i(x)$, we denote the Lipschitz continuity constant as $L_\Lambda$. In an ideal scenario, $\gamma$ and $\lambda$ are chosen to be on the same order as $L$. 
It can then be seen that $L_\Lambda, L_\cL$ are also in the same order as $L$.
For the ease of analysis, we define $$L' \triangleq \max\{\hat{L} , L_\cL, L_\Lambda\}.$$

\section{Main Results}\label{sec:main}
In this section, we first propose a decentralized algorithm that solves Problem \eqref{eq:dist_problem}. We then formulate the update as a dynamical system and study its safety and stability conditions. Next, we establish both global and local convergence results for the algorithm.

\subsection{Decentralized Retraction-Free Gradient Tracking}
Existing algorithms such as DRGTA \cite{chen2021decentralized} have introduced an auxiliary tracking sequence (similar to the Euclidean case \cite{qu2017harnessing}) to estimate the global Riemannian gradient in the manifold setting. However, the DRGTA updates require retractions and complicated consensus computations. The consensus problem on the Stiefel manifold is a highly non-trivial task in itself as shown by \cite{chen2023consensus}, and the retractions could be inefficient and computationally expensive. 

This paper seeks to develop a {\it decentralized retraction-free} algorithm, only involving matrix multiplications and working based on single-step consensus, so that its implementation is significantly easier than DRGTA. Our exact update for the \textbf{D}ecentralized \textbf{R}etraction-\textbf{F}ree \textbf{G}radient \textbf{T}racking (DRFGT) is provided in Algorithm \ref{alg:distributed_update}. It is easily verified that the algorithm is fully decentralized, and the communication complexity is no more than the Euclidean counterpart in \cite{qu2017harnessing}. We also note that apart from saving on the retraction calculations, DRFGT offers additional benefits compared to \cite{chen2021decentralized}.

\begin{itemize}
    \item In Algorithm \ref{alg:distributed_update}, the agents only require one gradient calculation per iteration, which is more efficient than previous works such as \cite{wang2022decentralized}.
    \item Apart from not requiring retraction calculations, assuming availability of the Riemannian gradient via closed-form \eqref{eq:relative-gradient}, DRFGT does not need additional projections onto the Riemannian tangent space, which is necessary in \cite{chen2021decentralized}.
    \item Also, the consensus component of DRFGT is single-step unlike \cite{chen2021decentralized,chen2023consensus} that work with multi-step consensus. This is a desired property, consistent with the Euclidean gradient tracking algorithms. 
\end{itemize}
As a result, Algorithm \ref{alg:distributed_update} is very efficient both in terms of communication and computation, thanks to the unique properties offered by the retraction-free landing update.

\begin{algorithm}[t]
	\caption{Decentralized Retraction-free Gradient Tracking} 
 \label{alg:distributed_update}
	\begin{algorithmic}[1]
		\STATE{\textbf{Input:} initial point $x_0 \in \St(d,r)^\epsilon$,    $\alpha >0 , \lambda > 0$, $\epsilon\in(0,\frac{3}{4}).$ }  
        \STATE{Set $x_{i,0}= x_0; y_{i,0}= 0; \Lambda_i(x_{i,0}) = 0$ for all agents $i \in [n]$.}
		\FOR{$k=0,1,\ldots$} 
		\FOR{$i \in [n]$} 

\STATE{
  Update $x$, $\Lambda(x)$ and tracking term $y$:
\begin{equation*}
    \resizebox{.85\hsize}{!}{$
    \begin{aligned}
    \vphantom{\sum_{j\in \mathcal{N}_i}}x_{i,k+1} =& \sum_{j\in \mathcal{N}_i} W_{ij}x_{j,k} - \alpha y_{i,k},\\
    \vphantom{\sum_{j\in \mathcal{N}_i}}\Lambda_i(x_{i,k+1}) =& \grad f_i(x_{i,k+1})  + \lambda x_{i,k+1}(x_{i,k+1}^\top x_{i,k+1} - I_r),\\
        \vphantom{\sum_{j\in \mathcal{N}_i}} y_{i,k+1} =& \sum_{j\in \mathcal{N}_i} W_{ij}y_{j,k} + \Lambda_i(x_{i,k+1}) - \Lambda_i(x_{i,k}).
    \end{aligned}
    $}
\end{equation*}
}
		\ENDFOR
		\ENDFOR
	\end{algorithmic}
\end{algorithm}

\subsection{Linear System Analysis}\label{sec:linear_system}
We now study the proposed algorithm from a dynamical system perspective. For the notation convenience, we define 
\begin{equation*}
    \X_k \triangleq \begin{bmatrix} x_{1,k}^\top, \ldots, x_{n,k}^\top \end{bmatrix}^\top.
\end{equation*}
We also write the stacked version of the communication matrix as $\mathbf{W} \triangleq W \otimes I_d$. In addition, we denote the Euclidean average  of the iterate over the network as
\begin{equation*}
    \Bar{x}_k \triangleq \frac{1}{n}\sum_{i\in [n]} x_{i,k}, \quad \Bar{\X}_k \triangleq \one  \otimes \Bar{x}_k.
\end{equation*}

With the above notations, we can write Algorithm \ref{alg:distributed_update} in a matrix format as 
\begin{equation}\label{eq:matrix_update}
    \begin{aligned}
        \X_{k+1} =& \W \X_{k} - \alpha \Y_k,\\
    \Y_{k+1} =& \W \Y_{k} + \mathbf{\Lambda}_{k+1} - \mathbf{\Lambda}_k,
    \end{aligned}
\end{equation}
where $\Y_{k}$ and $\mathbf{\Lambda}_k$ are  stacked matrices, defined similar to $\X_{k}$.

For analyzing the dynamics of the distributed system, we need to identify an $\epsilon$-safety region similar to the centralized landing algorithm. In the decentralized setting, we evenly split the margin of safety into two parts: (i) the distance of the average iterate from the manifold, which we define safe as $\Bar{x} \in \St(d,r)^\frac{\epsilon}{2}$; (ii) the consensus safety, which we define safe when $\normfro{\X-\Bar{\X}} \leq \frac{\epsilon}{2}$. Combining the two, we have the following proposition on the system safety over the network.

\begin{proposition}[Safe Step Size in Networks]
\label{prop:safe_dist}
    Let $\Bar{x}_k \in \St(d,r)^\frac{\epsilon}{2}$ and $\normfro{\X_k - \Bar{\X}_k} \leq \frac{\epsilon}{10}$. Given $\normfro{\grad f_i(x)} \leq G$ for $x\in \St(d,r)^\epsilon$ and $\lambda > 0$, if the step size $\alpha$ satisfies
\begin{equation*}
\begin{aligned}
    \alpha_{safe} \triangleq& \min\bigg\{
    \frac{ (1-\sigma_W)^2\epsilon }{20 \sqrt{n}(G+\lambda \epsilon (1+\epsilon))}, \frac{\lambda \epsilon^2 (1 - \sigma_W)^2}{16 L'(G+\lambda \epsilon (1+\epsilon))},\\
    &\frac{1}{2\lambda}, \frac{\lambda \epsilon (1-\epsilon)}{2(G^2 + \lambda^2 (1+\epsilon)\epsilon^2 + \frac{\epsilon^4 \lambda^2}{16})}
    \bigg\},
\end{aligned}
\end{equation*}
the next iterate satisfies $\Bar{x}_{k+1} \in \St(d,r)^\frac{\epsilon}{2}$ and $\normfro{\X_{k+1} -  \Bar{\X}_{k+1}} \leq \frac{\epsilon}{10}$. 
\end{proposition}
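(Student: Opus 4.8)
The strategy is to control the two safety quantities separately and then chain the bounds. For the consensus term, I would start from the update $\X_{k+1} = \W\X_k - \alpha\Y_k$ and project onto the orthogonal complement of the consensus subspace, writing $\X_{k+1} - \Bar\X_{k+1} = \W(\X_k - \Bar\X_k) - \alpha(\Y_k - \Bar\Y_k)$ using the fact that $\W$ commutes with the averaging projection and $\W$ is $\sigma_W$-contractive on that subspace. This gives $\normfro{\X_{k+1} - \Bar\X_{k+1}} \le \sigma_W \normfro{\X_k - \Bar\X_k} + \alpha\normfro{\Y_k - \Bar\Y_k}$. I would need a crude a priori bound on $\normfro{\Y_k}$ (hence on $\normfro{\Y_k - \Bar\Y_k}$): since $\Y$ tracks the average of the $\Lambda_i$, and on $\St(d,r)^\epsilon$ each $\Lambda_i(x)$ is bounded by $G + \lambda\epsilon(1+\epsilon)$ (the Riemannian-gradient part bounded by $G$, the penalty part $\lambda\normfro{x(x^\top x - I_r)} \le \lambda(1+\epsilon)\epsilon$ since $\normtwo{x} \le \sqrt{1+\epsilon}$ on the safety region), one gets $\normfro{\Y_k} \le \sqrt{n}(G + \lambda\epsilon(1+\epsilon))$ provided all iterates have stayed in the safety region so far — which is exactly the inductive hypothesis. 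Then $\normfro{\X_{k+1} - \Bar\X_{k+1}} \le \sigma_W \cdot \frac{\epsilon}{10} + \alpha\sqrt{n}(G+\lambda\epsilon(1+\epsilon))$, and the first term in $\alpha_{safe}$ (of order $(1-\sigma_W)\epsilon / (\sqrt n(\cdots))$, and indeed the stated bound has $(1-\sigma_W)^2$ which is even smaller, leaving slack) forces the right-hand side back below $\frac{\epsilon}{10}$.

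For the average iterate, I would derive the recursion for $\bar x_k$. Averaging the $\X$-update and using double stochasticity of $W$ gives $\bar x_{k+1} = \bar x_k - \alpha \bar y_k$ where $\bar y_k = \frac1n\sum_i y_{i,k}$. A separate (standard) argument shows $\bar y_k = \frac1n\sum_i \Lambda_i(x_{i,k})$ — this is the gradient-tracking invariant, provable by induction from $y_{i,0}$ initialization and double stochasticity: summing the $\Y$-update over all $i$ telescopes the $\Lambda_{k+1} - \Lambda_k$ terms. So $\bar x_{k+1} = \bar x_k - \frac{\alpha}{n}\sum_i \Lambda_i(x_{i,k})$. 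Now I want to show $p(\bar x_{k+1}) = \frac14\normfro{\bar x_{k+1}^\top \bar x_{k+1} - I_r}^2$ stays $\le (\epsilon/2)^2/4$, i.e. $\normfro{\bar x_{k+1}^\top\bar x_{k+1} - I_r}\le \epsilon/2$. The cleanest route is to mimic the centralized landing "safe step" lemma: expand $\bar x_{k+1}^\top \bar x_{k+1} - I_r$, isolate the contribution of the penalty direction $-\lambda \bar x_k(\bar x_k^\top\bar x_k - I_r)$ which contracts $\normfro{\bar x^\top\bar x - I_r}$ by a factor roughly $(1 - 2\alpha\lambda(1-\epsilon))$ when $\alpha\lambda \le 1/2$, and bound the remaining cross terms and quadratic-in-$\alpha$ terms — these involve the bounded Riemannian gradients ($\le G$), the penalty terms, and crucially the discrepancy between $\frac1n\sum_i \Lambda_i(x_{i,k})$ evaluated at the individual $x_{i,k}$ versus at $\bar x_k$. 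That discrepancy is $L'$-Lipschitz-controlled by $\normfro{\X_k - \Bar\X_k} \le \epsilon/10$. Matching terms to the second and fourth entries of the $\min$ defining $\alpha_{safe}$ closes this part.

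The two pieces are then combined: under the stated step-size bound, if both safety conditions hold at time $k$ they hold at $k+1$, and since $x_0 \in \St(d,r)^\epsilon$ with all agents initialized equally (so $\X_0 = \Bar\X_0$, consensus error zero, and $\bar x_0 = x_0$) the induction is seeded. I expect the \textbf{main obstacle} to be the average-iterate contraction estimate: correctly accounting for the interaction between the consensus error and the penalty contraction, and verifying that the coefficients one extracts are genuinely dominated by the four terms in $\min\{\cdot\}$ (in particular that the $L'$-Lipschitz cross term is absorbed by the second entry $\lambda\epsilon^2(1-\sigma_W)^2/(16 L'(\cdots))$ and the pure quadratic-in-$\alpha$ remainder by the fourth entry). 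The consensus part is comparatively routine contraction-mapping bookkeeping; the subtlety there is only ensuring the a priori $\Y$-bound is itself consistent with the induction rather than circular, which it is because the bound on $\normfro{\Y_k}$ only uses that iterates up to time $k$ are in the safety region.
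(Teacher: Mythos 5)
Your plan reproduces the paper's argument almost step for step: decompose safety into the consensus error and the distance of $\Bar{x}_k$ from the manifold; show the hypotheses force each $x_{i,k}\in\St(d,r)^{\epsilon}$ so that $\normfro{\Lambda_i(x_{i,k})}\le G+\lambda\epsilon(1+\epsilon)$; contract the consensus error through $\W$; use the tracking invariant $\Bar{y}_k=\frac1n\sum_i\Lambda_i(x_{i,k})$ and the skew-symmetry of $x^\top\grad f(x)$ to expand $\Bar{x}_{k+1}^\top\Bar{x}_{k+1}-I_r$, with the penalty direction contracting $\normfro{\Delta_k}$ by $(1-2\alpha\lambda)$ and the tracking discrepancy $D_k=\Bar{y}_k-\Lambda(\Bar{x}_k)$ controlled by $L'\normfro{\X_k-\Bar{\X}_k}/\sqrt{n}$. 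This is exactly how the paper closes the argument against the second and fourth entries of $\alpha_{safe}$.

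There is, however, one genuine error in the consensus half. You assert $\normfro{\Y_k}\le\sqrt{n}\,(G+\lambda\epsilon(1+\epsilon))$ ``since $\Y$ tracks the average of the $\Lambda_i$.'' Only the \emph{average} $\Bar{y}_k=\frac1n\sum_i\Lambda_i(x_{i,k})$ enjoys such a bound; the individual $y_{i,k}$ are sums of increments $\Lambda_i(x_{i,t})-\Lambda_i(x_{i,t-1})$ filtered through powers of $W$, and their deviation from the average must be bounded by unrolling the recursion
\begin{equation*}
\normfro{\Y_{k}-\Bar{\Y}_{k}}\le\sigma_W\normfro{\Y_{k-1}-\Bar{\Y}_{k-1}}+\normfro{\mathbf{\Lambda}_{k}}+\normfro{\mathbf{\Lambda}_{k-1}},
\end{equation*}
which gives $\normfro{\Y_k-\Bar{\Y}_k}\le 2\sqrt{n}(G+\lambda\epsilon(1+\epsilon))/(1-\sigma_W)$, a factor $2/(1-\sigma_W)$ larger than your claim. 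Consequently the second $(1-\sigma_W)$ in the numerator of the first entry of $\alpha_{safe}$ is not ``slack'' as you suggest: one factor is consumed by unrolling the $\X$-recursion and the other by this $\Y$-bound, and both are needed to land back at $\normfro{\X_{k+1}-\Bar{\X}_{k+1}}\le\epsilon/10$. The same corrected bound also feeds the estimate of $\normfro{D_k}$ in the average-iterate part (this is why the second entry of $\alpha_{safe}$ carries $(1-\sigma_W)^2$). The fix is purely quantitative and the stated constants accommodate it, so the proof closes once this bound is repaired; the rest of your bookkeeping, including the non-circularity of the induction, is sound.
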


Proposition \ref{prop:safe_dist} shows that with a sufficiently small step size $\alpha$, Algorithm \ref{alg:distributed_update} will stay within the $\epsilon$-safety region for any network structure and objective function.

With the safety constraint satisfied for Algorithm \ref{alg:distributed_update}, we next study the stability aspects of the system. Let us first introduce the following lemmas to bound the system consensus errors in both $\X$ and $\Y$.

\begin{lemma}\label{lem:bound_x}
    Let Assumption \ref{assump_net} hold with $\sigma_W$ as the second largest singular value of $W$. The consensus error on $\X$ satisfies the following inequality for $\alpha>0$,
    \begin{equation}
    \begin{aligned}
        \normfro{\X_k -  \Bar{\X}_k}^2 
        \leq& \frac{1+\sigma_W^2}{2} \normfro{ \X_{k-1}-  \Bar{\X}_{k-1} }^2 \\
        &+ \frac{1 + \sigma_W^2}{1-\sigma_W^2}\alpha^2 \normfro{\Y_{k-1} - \Bar{\Y}_{k-1}}^2. 
    \end{aligned}
    \end{equation}
\end{lemma}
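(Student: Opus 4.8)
The plan is to derive the recursion directly from the $\X$-update in \eqref{eq:matrix_update} by projecting onto the consensus-orthogonal subspace. Let $J \triangleq \frac{1}{n}\one\one^\top \otimes I_d$ be the averaging projector, so that $\Bar{\X}_k = J\X_k$ and $I-J$ projects onto the disagreement subspace. Since $W$ is symmetric doubly stochastic, $\W J = J\W = J$, and on the range of $I-J$ the operator $\W$ is a contraction with $\normtwo{\W(I-J)} \le \sigma_W$. Subtracting the averaged update $\Bar{\X}_{k+1} = J\W\X_k - \alpha J\Y_k = \Bar{\X}_k - \alpha \Bar{\Y}_k$ (using $J\W = J$) from $\X_{k+1} = \W\X_k - \alpha\Y_k$, one gets
\begin{equation*}
    \X_{k+1} - \Bar{\X}_{k+1} = \W(\X_k - \Bar{\X}_k) - \alpha(\Y_k - \Bar{\Y}_k),
\end{equation*}
where in the first term I use $(I-J)\W(\X_k - \Bar{\X}_k) = \W(\X_k-\Bar{\X}_k)$ after noting $J\W(\X_k-\Bar\X_k)=J(\X_k-\Bar\X_k)=0$ since $\X_k - \Bar{\X}_k$ already lies in the range of $I-J$; more precisely $\X_{k+1}-\Bar\X_{k+1} = (I-J)(\W(\X_k-\Bar\X_k) - \alpha(\Y_k-\Bar\Y_k))$ and $\W$ commutes with $I-J$.

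Next I would take Frobenius norms and apply the elementary inequality $\normfro{a+b}^2 \le (1+\eta)\normfro{a}^2 + (1+\eta^{-1})\normfro{b}^2$ for a well-chosen $\eta>0$. Setting $a = \W(\X_k - \Bar{\X}_k)$ and $b = -\alpha(\Y_k - \Bar{\Y}_k)$, and using $\normfro{\W(\X_k-\Bar\X_k)}^2 \le \sigma_W^2 \normfro{\X_k - \Bar{\X}_k}^2$, this yields
\begin{equation*}
    \normfro{\X_{k+1}-\Bar{\X}_{k+1}}^2 \le (1+\eta)\sigma_W^2\normfro{\X_k-\Bar{\X}_k}^2 + (1+\eta^{-1})\alpha^2\normfro{\Y_k-\Bar{\Y}_k}^2.
\end{equation*}
The claimed form then follows by choosing $\eta = \frac{1-\sigma_W^2}{2\sigma_W^2}$, so that $(1+\eta)\sigma_W^2 = \frac{1+\sigma_W^2}{2}$, and checking that $1 + \eta^{-1} = 1 + \frac{2\sigma_W^2}{1-\sigma_W^2} = \frac{1+\sigma_W^2}{1-\sigma_W^2}$, which matches the stated coefficient exactly. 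Shifting the index from $k+1,k$ to $k,k-1$ gives the lemma as written.

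The argument is essentially routine once the projection structure is set up; the only mild subtlety — which I'd treat as the main thing to get right rather than a genuine obstacle — is the commutation bookkeeping ensuring the cross term with $J$ vanishes and that $\W$ acts as a $\sigma_W$-contraction precisely on the subspace where $\X_k - \Bar{\X}_k$ lives (this is where Assumption \ref{assump_net} and double stochasticity enter, guaranteeing the eigenvalue gap). One should also note that $\Y_k - \Bar{\Y}_k = (I-J)\Y_k$ so no extra assumption on $\Y_k$ is needed. I would present the Young's-inequality parameter choice explicitly since the exact constants in the statement depend on it, but would not belabor the norm manipulations beyond that.
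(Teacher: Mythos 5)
Your proof is correct and follows essentially the same route as the paper's: the same identity $\X_k - \Bar{\X}_k = \W(\X_{k-1}-\Bar{\X}_{k-1}) - \alpha(\Y_{k-1}-\Bar{\Y}_{k-1})$, the same $\sigma_W$-contraction on the disagreement subspace, and the same AM--GM/Young split with the identical parameter choice ($\kappa=\eta=\frac{1-\sigma_W^2}{2\sigma_W^2}$ in the paper's notation). Your extra bookkeeping with the projector $J$ just makes explicit what the paper leaves implicit.
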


\begin{lemma}\label{lem:bound_y}
    Let Assumption \ref{assump_net} hold with $\sigma_W$ as the second largest singular value of $W$. The consensus error on $\Y$ satisfies the following inequality for $\alpha>0$,
    \begin{equation}
    \begin{aligned}
        \normfro{\Y_k &-  \Bar{\Y}_k}^2 
    \leq  \\
    &\left(\frac{1+\sigma_W^2}{2} + 4L'^2\alpha^2\frac{1 + \sigma_W^2}{1-\sigma_W^2}\right)\normfro{\Y_{k-1} - \Bar{\Y}_{k-1}}^2 \\
    &+8L'^2  \frac{1 + \sigma_W^2}{1-\sigma_W^2}\normfro{\X_{k-1} -  \Bar{\X}_{k-1} }^2  \\
    &+ 4L'^2\alpha^2\frac{1 + \sigma_W^2}{1-\sigma_W^2}\normfro{  \Bar{\Y}_{k-1}  }^2.\\
    \end{aligned}
    \end{equation}
\end{lemma}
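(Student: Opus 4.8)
The plan is to mirror the standard gradient-tracking argument, using the averaging contraction of $\W$ on the orthogonal complement of the consensus subspace together with the Lipschitz continuity of the landing fields $\Lambda_i$. First I would start from the $\Y$-update in \eqref{eq:matrix_update}, $\Y_{k} = \W\Y_{k-1} + \mathbf\Lambda_{k} - \mathbf\Lambda_{k-1}$, and subtract its network average. Since $W$ is doubly stochastic, averaging commutes with $\W$ and the average of $\mathbf\Lambda_{k}-\mathbf\Lambda_{k-1}$ telescopes in the obvious way, so $\Y_k - \bar\Y_k = (\W - \bar{\W})(\Y_{k-1}-\bar\Y_{k-1}) + (\Id - \bar{\W})(\mathbf\Lambda_k - \mathbf\Lambda_{k-1})$, where $\bar{\W} = \tfrac1n \one\one^\top \otimes I_d$. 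Here I would use $\normfro{(\W-\bar{\W})v} \le \sigma_W \normfro{v}$ on the consensus-orthogonal component and $\normfro{\Id - \bar{\W}} \le 1$.

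Next I would apply Young's inequality $\normfro{a+b}^2 \le (1+\eta)\normfro{a}^2 + (1+\eta^{-1})\normfro{b}^2$ with the choice $\eta$ tuned so that $(1+\eta)\sigma_W^2 = \tfrac{1+\sigma_W^2}{2}$, i.e. $\eta = \tfrac{1-\sigma_W^2}{2\sigma_W^2}$, which makes $1+\eta^{-1} = \tfrac{1+\sigma_W^2}{1-\sigma_W^2}$. This produces
\[
\normfro{\Y_k - \bar\Y_k}^2 \le \tfrac{1+\sigma_W^2}{2}\normfro{\Y_{k-1}-\bar\Y_{k-1}}^2 + \tfrac{1+\sigma_W^2}{1-\sigma_W^2}\normfro{\mathbf\Lambda_k - \mathbf\Lambda_{k-1}}^2.
\]
Then I would bound $\normfro{\mathbf\Lambda_k - \mathbf\Lambda_{k-1}}^2 = \sum_i \normfro{\Lambda_i(x_{i,k}) - \Lambda_i(x_{i,k-1})}^2 \le L'^2 \normfro{\X_k - \X_{k-1}}^2$ using the Lipschitz constant $L_\Lambda \le L'$, and in turn control $\normfro{\X_k - \X_{k-1}}$ via the $\X$-update: $\X_k - \X_{k-1} = (\W - \Id)\X_{k-1} - \alpha\Y_{k-1} = (\W-\Id)(\X_{k-1}-\bar\X_{k-1}) - \alpha\Y_{k-1}$, since $(\W-\Id)\bar\X_{k-1}=0$. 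Using $\normfro{\W - \Id} \le 2$ and splitting $\Y_{k-1} = (\Y_{k-1}-\bar\Y_{k-1}) + \bar\Y_{k-1}$, a further Young step gives a bound of the form $\normfro{\X_k-\X_{k-1}}^2 \le c_1\normfro{\X_{k-1}-\bar\X_{k-1}}^2 + c_2\alpha^2\normfro{\Y_{k-1}-\bar\Y_{k-1}}^2 + c_3\alpha^2\normfro{\bar\Y_{k-1}}^2$; combining these and collecting constants yields exactly the three-term bound in the statement, with the $\alpha^2$ factors appearing on the $\normfro{\Y_{k-1}-\bar\Y_{k-1}}^2$ and $\normfro{\bar\Y_{k-1}}^2$ terms and an $\alpha$-free factor on $\normfro{\X_{k-1}-\bar\X_{k-1}}^2$.

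The main obstacle is purely bookkeeping: making the Young's-inequality parameters line up so that the coefficient of $\normfro{\Y_{k-1}-\bar\Y_{k-1}}^2$ collapses to precisely $\tfrac{1+\sigma_W^2}{2} + 4L'^2\alpha^2\tfrac{1+\sigma_W^2}{1-\sigma_W^2}$ and the other two coefficients match the stated $8L'^2\tfrac{1+\sigma_W^2}{1-\sigma_W^2}$ and $4L'^2\alpha^2\tfrac{1+\sigma_W^2}{1-\sigma_W^2}$; one must be slightly careful that the extra $\alpha^2$ term coming from expanding $\normfro{\X_k - \X_{k-1}}^2$ gets absorbed into the $\Y_{k-1}-\bar\Y_{k-1}$ coefficient rather than spawning a new term. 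Everything else — the contraction of $\W$, the Lipschitz bound on $\Lambda_i$, doubly-stochasticity of $W$ — is standard and already available from Assumption \ref{assump_net} and the definition of $L'$.
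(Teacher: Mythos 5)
Your proposal is correct and follows essentially the same route as the paper: contract with $\W$ via Young's inequality using $\eta=\tfrac{1-\sigma_W^2}{2\sigma_W^2}$, drop the averaging projector on the $\mathbf\Lambda$ difference, apply Lipschitz continuity of $\Lambda_i$, expand $\X_k-\X_{k-1}=(\W-\Id)(\X_{k-1}-\bar\X_{k-1})-\alpha\Y_{k-1}$, and split $\Y_{k-1}$ into its consensus error and average. The bookkeeping you flag resolves with plain factor-of-two Young steps ($\normfro{\W-\Id}_2\le 2$ giving the $8$, and two successive $\normfro{a+b}^2\le 2\normfro{a}^2+2\normfro{b}^2$ splits giving the $4\alpha^2$ coefficients), exactly as in the paper.
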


We use the previous two lemmas to characterize an LTI system with a state related to consensus errors. The following theorem identifies the stability condition for such LTI system, allowing us to analyze the convergence of Algorithm \ref{alg:distributed_update}.

\begin{theorem}[Stability Conditions]\label{thm:stable}
    Let Assumption \ref{assump_net} hold and the step size $\alpha\in(0,\alpha_{safe}]$ be safe. We can form a linear system with state $\Tilde{\xi}$, transition matrix $\Tilde{G}$ and input signal $\Tilde{u}$, 
\begin{equation}\label{eq:system}
    \begin{aligned}
        \Tilde{\xi}_k\ \leq \Tilde{G}\Tilde{\xi}_{k-1} + \Tilde{u}_{k-1},
    \end{aligned}
\end{equation}
where $``\leq"$ here denotes element-wise inequality and
\begin{align*}
    \Tilde{\xi}_k &\triangleq \begin{bmatrix}
        \normfro{\Y_k -  \Bar{\Y}_k}^2/L'\\
        L'\normfro{\X_k -  \Bar{\X}_k}^2
    \end{bmatrix},\\
    \Tilde{G} &\triangleq \begin{bmatrix}
        \frac{1+\sigma_W^2}{2} + 4L'^2\alpha^2\frac{1 + \sigma_W^2}{1-\sigma_W^2} & 8\frac{1 + \sigma_W^2}{1-\sigma_W^2}\\
        \frac{1 + \sigma_W^2}{1-\sigma_W^2}\alpha^2 L'^2&\frac{1+\sigma_W^2}{2}
    \end{bmatrix},\\
    \Tilde{u}_{k}&\triangleq \begin{bmatrix}
        4L'\alpha^2\frac{1 + \sigma_W^2}{1-\sigma_W^2}\normfro{  \Bar{\Y}_{k}  }^2\\
        0
    \end{bmatrix}.
\end{align*}
In addition, the linear system is stable, i.e., $\rho(\Tilde{G}) < 1$ if 
\begin{equation*}
    \alpha < \frac{(1-\sigma_W^2)^2}{1+\sigma_W^2} \frac{1}{16L'}.
\end{equation*}
\end{theorem}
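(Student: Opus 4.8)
Looking at this statement, the theorem has two parts: first, establishing that the consensus errors satisfy the claimed LTI recursion \eqref{eq:system}, and second, deriving the stability condition $\rho(\Tilde{G}) < 1$ from the step-size bound.

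\textbf{Proof plan.} The first part is essentially bookkeeping: Lemmas~\ref{lem:bound_x} and \ref{lem:bound_y} already give the two scalar recursions for $\normfro{\X_k - \Bar{\X}_k}^2$ and $\normfro{\Y_k - \Bar{\Y}_k}^2$. The plan is to rescale these two quantities by $L'$ (dividing the $\Y$-error by $L'$ and multiplying the $\X$-error by $L'$) so that the off-diagonal entries of the transition matrix become balanced and symmetric-looking; this is a standard trick to tighten spectral-radius estimates for $2\times 2$ nonnegative matrices. Substituting the rescaled variables into Lemmas~\ref{lem:bound_x}--\ref{lem:bound_y} and reading off coefficients directly yields the matrix $\Tilde{G}$ and input $\Tilde{u}_k$ as stated, with the element-wise inequality inherited from the lemmas. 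I would note that the safety of $\alpha$ (i.e. $\alpha \le \alpha_{safe}$) is invoked only to guarantee that the iterates remain in $\St(d,r)^\epsilon$, so that the Lipschitz constant $L'$ used in Lemma~\ref{lem:bound_y} is valid along the trajectory.

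For the second part, since $\Tilde{G}$ is a $2\times 2$ matrix with nonnegative entries, $\rho(\Tilde{G})$ is its Perron root, and I would bound it via any of the standard sufficient conditions — e.g. $\rho(\Tilde{G}) < 1$ whenever both $\Tilde{G}_{11} + \Tilde{G}_{12} < 1$ and $\Tilde{G}_{21} + \Tilde{G}_{22} < 1$ (row-sum/Gershgorin-type bound), or alternatively the exact condition $\Tilde{G}_{11} + \Tilde{G}_{22} < 1 + \det\Tilde{G}$ together with $\det\Tilde{G} < 1$. Using the row-sum approach: the second row gives $\frac{1+\sigma_W^2}{1-\sigma_W^2}\alpha^2 L'^2 + \frac{1+\sigma_W^2}{2} < 1$, i.e. $\alpha^2 < \frac{(1-\sigma_W^2)^2}{2(1+\sigma_W^2)L'^2}$, which is implied by the stated bound. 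The first row is the binding one: $\frac{1+\sigma_W^2}{2} + 4L'^2\alpha^2\frac{1+\sigma_W^2}{1-\sigma_W^2} + 8\frac{1+\sigma_W^2}{1-\sigma_W^2} < 1$ — but this cannot hold since the constant term $8\frac{1+\sigma_W^2}{1-\sigma_W^2} \ge 8$. Hence the naive Gershgorin bound fails, and I would instead use the characteristic-polynomial condition, which allows the large off-diagonal entry to be compensated by the small $\alpha^2$-scaled entry in the other off-diagonal position.

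\textbf{Main obstacle.} The crux is therefore the spectral estimate: one must show $\det\Tilde{G} < 1$ and $\Tr\Tilde{G} < 1 + \det\Tilde{G}$. Here $\Tr\Tilde{G} = (1+\sigma_W^2) + 4L'^2\alpha^2\frac{1+\sigma_W^2}{1-\sigma_W^2}$, which already exceeds $1$, so everything hinges on $\det\Tilde{G}$ being sufficiently close to $1$ — and the product of the off-diagonal entries is $8\alpha^2 L'^2\left(\frac{1+\sigma_W^2}{1-\sigma_W^2}\right)^2$, which is $\mathcal{O}(\alpha^2)$ and hence small. I would expand $\det\Tilde{G} = \frac{1+\sigma_W^2}{2}\left(\frac{1+\sigma_W^2}{2} + 4L'^2\alpha^2\frac{1+\sigma_W^2}{1-\sigma_W^2}\right) - 8\alpha^2 L'^2\left(\frac{1+\sigma_W^2}{1-\sigma_W^2}\right)^2$, then verify the inequality $\Tr\Tilde{G} - 1 < \det\Tilde{G}$ reduces, after collecting the $\alpha^0$ and $\alpha^2$ terms, to a bound of the form $\alpha^2 < c(\sigma_W)/L'^2$; matching constants to confirm that the stated $\alpha < \frac{(1-\sigma_W^2)^2}{(1+\sigma_W^2)}\frac{1}{16L'}$ suffices (using $(1-\sigma_W^2)^2/(1+\sigma_W^2) \le 1$ to simplify) is the one genuinely computational step. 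I would carry out this reduction carefully, as the interplay between the $\mathcal{O}(1)$ terms in $\Tr$ and $\det$ near $\sigma_W \to 1$ is where the argument is tightest.
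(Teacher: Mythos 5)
Your proposal is correct, and the first half (assembling $\Tilde{G}$ and $\Tilde{u}$ from Lemmas \ref{lem:bound_x}--\ref{lem:bound_y} after the $L'$ rescaling) is exactly what the paper does. The only divergence is in how $\rho(\Tilde{G})<1$ is certified. The paper invokes the positive-vector test of Lemma \ref{lem:spectral_rad}: it posits $s_1,s_2>0$ with $\Tilde{G}[s_1,s_2]^\top < [s_1,s_2]^\top$, eliminates $s_2$, and reads off the step-size condition. You instead use the Jury/characteristic-polynomial criterion for a $2\times 2$ nonnegative matrix ($\Tr\Tilde{G} < 1 + \det\Tilde{G}$ together with $\det\Tilde{G}<1$, the latter needed to exclude the case of both eigenvalues exceeding one). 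The two criteria are equivalent here, and after eliminating $s_2$ the paper's inequalities reduce to essentially the same determinant bound you describe; indeed the paper carries out exactly your computation of $\det(I-\Tilde{G}) = \left(\tfrac{1-\sigma_W^2}{2}\right)^2 - 2\alpha^2L'^2\bigl((1+\sigma_W^2)+4\Theta^2\bigr)$ in the proof of Corollary \ref{cor:sum_x}, where it shows the stated step size forces $\det(I-\Tilde{G})\ge \tfrac{(1-\sigma_W^2)^2}{8}$. Your observation that the Gershgorin row-sum bound fails because of the $\mathcal{O}(1)$ entry $8\Theta$ is also accurate and explains why a vector-weighted (or determinant-based) test is necessary. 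The one thing to be sure to finish is the secondary check $\det\Tilde{G}<1$ (or equivalently $\Tr\Tilde{G}<2$), which follows immediately since $\tfrac{(1+\sigma_W^2)^2}{4} = 1 - \tfrac{(1-\sigma_W^2)(3+\sigma_W^2)}{4}$ and the $\alpha^2$ correction is of order $(1-\sigma_W^2)^3$ under the stated bound; with that, your argument is complete and delivers the same constant $\alpha < \tfrac{(1-\sigma_W^2)^2}{1+\sigma_W^2}\tfrac{1}{16L'}$.
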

Theorem \ref{thm:stable} shows that with a sufficiently small step size, the linear system describing consensus errors on $\X$ and $\Y$ is stable.

To study the convergence of our algorithm, we need to analyze the merit function introduced in Section  \ref{sec:merit_func}. Note that even though we focus on the decentralized Problem \eqref{eq:dist_problem}, we still work with the merit function based on the network function $f(x)$. We provide the following lemma on the  merit function, which describes the change in the value of the merit function with respect to the average of iterates.


\begin{lemma}\label{lem:decent}
    Let Assumptions \ref{assump_net} and \ref{assump:Lip} hold and the step size $\alpha\in(0,\alpha_{safe}]$ be safe. Then, the merit function \eqref{eq:merit_def}  satisfies the following inequality
    \begin{equation}
        \begin{aligned}
\cL(\Bar{x}_k)& -  \cL(\Bar{x}_{k-1})\leq    - \frac{\alpha \rho}{2 } \normfro{\Lambda(\Bar{x}_{k-1})}^2 \\
&+ \frac{\alpha C^2 L'^2}{2 \rho n} \normfro{\bar{\X}_{k-1} - \X_{k-1}}^2 + \frac{\alpha^2 L' }{2}\normfro{\bar{y}_{k-1}}^2,
        \end{aligned}
    \end{equation}
    for $\rho$ given in Proposition \ref{prop_L} and $C \triangleq \frac{3L'}{\lambda(1-\epsilon)} +2$.
\end{lemma}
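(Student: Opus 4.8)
The plan is to treat one step of the averaged dynamics as an inexact descent step on the smooth merit function $\cL$, and then control the error incurred by (a) replacing the averaged landing field by the landing field at the average, and (b) the tracking/consensus inaccuracies. The starting point is the averaging of the first line of \eqref{eq:matrix_update}: since $W$ is doubly stochastic, $\bar{x}_k = \bar{x}_{k-1} - \alpha \bar{y}_{k-1}$, and since $W$ is doubly stochastic the tracking update telescopes to give $\bar{y}_{k-1} = \frac{1}{n}\sum_{i} \Lambda_i(x_{i,k-1})$ (using $y_{i,0}=0$ and $\Lambda_i(x_{i,0})=0$). So the averaged iterate moves along the negative of the network-averaged local landing fields evaluated at the individual iterates. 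I would first record these two identities explicitly.

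Next I would invoke the $L_{\cL}$-smoothness of $\cL$ on $\St(d,r)^\epsilon$ from Proposition \ref{prop_L}(1); since $\alpha \le \alpha_{safe}$, Proposition \ref{prop:safe_dist} guarantees $\bar{x}_{k-1}, \bar{x}_k \in \St(d,r)^{\epsilon/2} \subset \St(d,r)^\epsilon$, so the descent inequality applies:
\begin{equation*}
\cL(\bar{x}_k) - \cL(\bar{x}_{k-1}) \le -\alpha \langle \nabla\cL(\bar{x}_{k-1}), \bar{y}_{k-1}\rangle + \frac{\alpha^2 L'}{2}\normfro{\bar{y}_{k-1}}^2,
\end{equation*}
where I bound $L_{\cL} \le L'$. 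The last term is already one of the three terms in the claim, so the work is in the cross term. I would split $\bar{y}_{k-1} = \Lambda(\bar{x}_{k-1}) + \big(\bar{y}_{k-1} - \Lambda(\bar{x}_{k-1})\big)$. For the first piece, Proposition \ref{prop_L}(2) gives $\langle \nabla\cL(\bar{x}_{k-1}), \Lambda(\bar{x}_{k-1})\rangle \ge \rho\normfro{\Lambda(\bar{x}_{k-1})}^2$, producing the main negative term $-\alpha\rho\normfro{\Lambda(\bar{x}_{k-1})}^2$; I will sacrifice half of it, keeping $-\frac{\alpha\rho}{2}\normfro{\Lambda(\bar{x}_{k-1})}^2$, to absorb the error piece via Young's inequality: $-\alpha\langle \nabla\cL(\bar{x}_{k-1}), \bar{y}_{k-1}-\Lambda(\bar{x}_{k-1})\rangle \le \frac{\alpha\rho}{2}\normfro{\Lambda(\bar{x}_{k-1})}^2 + \frac{\alpha}{2\rho}\normfro{\bar{y}_{k-1}-\Lambda(\bar{x}_{k-1})}^2$, after noting $\normfro{\nabla\cL(\bar{x}_{k-1})} \le \frac{1}{\rho}\normfro{\Lambda(\bar{x}_{k-1})}$ which again follows from Proposition \ref{prop_L}(2) by Cauchy–Schwarz.

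The remaining task — and the main obstacle — is to bound $\normfro{\bar{y}_{k-1} - \Lambda(\bar{x}_{k-1})}^2$ by $\frac{C^2 L'^2}{n}\normfro{\bar{\X}_{k-1} - \X_{k-1}}^2$ with the stated constant $C = \frac{3L'}{\lambda(1-\epsilon)} + 2$. Here I would write $\bar{y}_{k-1} - \Lambda(\bar{x}_{k-1}) = \frac{1}{n}\sum_i\big(\Lambda_i(x_{i,k-1}) - \Lambda_i(\bar{x}_{k-1})\big)$, using that $\Lambda(\bar{x}) = \frac1n\sum_i \Lambda_i(\bar x)$ since $f = \frac1n\sum_i f_i$ and the penalty term is common. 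Then by Jensen, $\normfro{\bar{y}_{k-1} - \Lambda(\bar{x}_{k-1})}^2 \le \frac1n\sum_i \normfro{\Lambda_i(x_{i,k-1}) - \Lambda_i(\bar{x}_{k-1})}^2$, and each summand is bounded by the Lipschitz behavior of the local landing field over the safety region. The subtlety is that $L_\Lambda$ as defined in the paper already lumps everything into order $L$, but the lemma wants the explicit constant $C L'$; so I expect one needs to Lipschitz-estimate $\grad f_i$ (smoothness-type bound, contributing the ``$2$'' or part of it, using $\normfro{x^\top x - I}\le\epsilon$ and the gradient bound $G$) and separately estimate the penalty term $\lambda x(x^\top x - I_r)$, whose Jacobian over $\St(d,r)^\epsilon$ has norm on the order of $\lambda(1+\epsilon)$, and somewhere the factor $\frac{3L'}{\lambda(1-\epsilon)}$ must emerge — most likely from a more careful decomposition of the landing field where the relative gradient $\sk(\nabla f_i(x)x^\top)x$ is controlled in terms of $\hat L \le L'$ and the normalization $1-\epsilon \le \sigma_{\min}(x)^2$. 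I would carry out this per-agent Lipschitz estimate carefully, collect the constants into $C$, substitute back, and recognize the three surviving terms as exactly the right-hand side of the claimed inequality.
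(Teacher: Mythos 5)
Your overall architecture is the paper's: smoothness of $\cL$, the identity $\Bar{x}_k=\Bar{x}_{k-1}-\alpha\bar{y}_{k-1}$, the split $\bar{y}_{k-1}=\Lambda(\Bar{x}_{k-1})+(\bar{y}_{k-1}-\Lambda(\Bar{x}_{k-1}))$, Proposition \ref{prop_L}(2) for the main negative term, Young's inequality on the cross term, and the Lipschitz estimate $\normfro{\bar{y}_{k-1}-\Lambda(\Bar{x}_{k-1})}\leq \tfrac{L'}{\sqrt{n}}\normfro{\X_{k-1}-\Bar{\X}_{k-1}}$ for the consensus term. However, there is a genuine gap at the one step that actually requires a new idea. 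You claim that $\normfro{\nabla\cL(\Bar{x}_{k-1})}\leq \tfrac{1}{\rho}\normfro{\Lambda(\Bar{x}_{k-1})}$ ``follows from Proposition \ref{prop_L}(2) by Cauchy--Schwarz.'' It does not: Cauchy--Schwarz applied to $\langle \Lambda(x),\nabla\cL(x)\rangle\geq\rho\normfro{\Lambda(x)}^2$ yields $\normfro{\nabla\cL(x)}\geq\rho\normfro{\Lambda(x)}$, a \emph{lower} bound on $\normfro{\nabla\cL}$, which is useless here (that direction is what the paper uses later in Lemma \ref{lem:KL-QG}). The upper bound you need, $\normfro{\nabla\cL(x)}\leq C\normfro{\Lambda(x)}$ with $C=\tfrac{3L'}{\lambda(1-\epsilon)}+2$, is precisely Lemma \ref{lem:bound_C}, a separate nontrivial result proved by projecting onto $\St(d,r)$, invoking $\normfro{\nabla\cL(x')}\leq 2\normfro{\Lambda(x')}$ on the manifold, and controlling the off-manifold discrepancy via $\normfro{x-x'}\leq\normfro{x^\top x-I_r}\leq\lambda^{-1}(1-\epsilon)^{-1}\normfro{\Lambda(x)}$. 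Without it your Young step does not close: even granting your (false) bound, the term you generate is $\tfrac{\alpha}{2\rho}\normfro{\Lambda}^2$ with $\tfrac{1}{2\rho}\geq 1>\tfrac{\rho}{2}$ (since $\rho\leq\tfrac12$), so it cannot be absorbed by half of $-\alpha\rho\normfro{\Lambda}^2$. The correct bookkeeping is Young with weight $\kappa=C^2/\rho$, giving $\tfrac{\alpha\rho}{2C^2}\normfro{\nabla\cL}^2+\tfrac{\alpha C^2}{2\rho}\normfro{\bar y-\Lambda}^2$, and then Lemma \ref{lem:bound_C} turns the first piece into exactly $\tfrac{\alpha\rho}{2}\normfro{\Lambda}^2$.

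Relatedly, you misplace where $C$ enters. You expect it to emerge from a careful per-agent Lipschitz estimate of $\bar{y}_{k-1}-\Lambda(\Bar{x}_{k-1})$ in terms of $\normfro{\X_{k-1}-\Bar{\X}_{k-1}}$; but that step needs only the plain Lipschitz constant $L'$ of the local landing fields (this is \eqref{eq:lipcont}, and your Jensen argument for it is fine). The constant $C$ lives entirely in the $\normfro{\nabla\cL}$-versus-$\normfro{\Lambda}$ comparison and in the Young weight, which is why it appears squared and divided by $\rho$ in the final coefficient $\tfrac{\alpha C^2L'^2}{2\rho n}$.
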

Lemma \ref{lem:decent} shows that a decrease in the merit function value can be achieved for DRFGT with a suitable step size.

\subsection{Global Convergence of DRFGT}

\label{sec:conv}
In this section, we discuss the global convergence of DRFGT, building on the safety and stability guarantees discussed in the previous sections. We start by deriving the following bound on the accumulated consensus errors along the algorithm when the system \eqref{eq:system} is stable. The following is a corollary of Theorem \ref{thm:stable}.

\begin{corollary}\label{cor:sum_x}
    Given a stable step size $\alpha < \frac{(1-\sigma_W^2)^2}{1+\sigma_W^2} \frac{1}{16L'}$, the sum of consensus errors satisfies
    \begin{align*}
    \sum_k  \normfro{\X_k -  \Bar{\X}_k}^2 &\leq 
    \frac{(1 + \sigma_W^2)^2}{(1-\sigma_W^2)^4} 32\alpha^4 L'^2  \sum_k\normfro{  \Bar{\Y}_{k-1}  }^2.
\end{align*}
\end{corollary}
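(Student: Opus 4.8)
The plan is to iterate the element-wise linear recursion \eqref{eq:system} from Theorem \ref{thm:stable} and sum over all $k$, exploiting the fact that $\rho(\tilde G) < 1$ under the stated step-size condition. First I would note that stability gives $\sum_{k\geq 0}\tilde G^{k} = (I - \tilde G)^{-1}$, a matrix with nonnegative entries since $\tilde G$ itself has nonnegative entries and spectral radius below one. Unrolling \eqref{eq:system} yields, element-wise, $\sum_k \tilde\xi_k \leq (I-\tilde G)^{-1}\sum_k \tilde u_k$ (the contribution of the initial condition $\tilde\xi_0$ vanishes because $x_{i,0}=x_0$ for all $i$, so $\X_0 = \bar\X_0$ and $\Y_0 = 0$, making $\tilde\xi_0 = 0$). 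Reading off the second coordinate of $\tilde\xi_k$, which is $L'\normfro{\X_k - \bar\X_k}^2$, and recalling that $\tilde u_k$ has only a nonzero first coordinate proportional to $\alpha^2\normfro{\bar\Y_k}^2$, gives a bound of the form $\sum_k \normfro{\X_k - \bar\X_k}^2 \leq c \cdot [(I-\tilde G)^{-1}]_{21}\cdot \alpha^2 \sum_k \normfro{\bar\Y_k}^2$ after dividing through by $L'$.

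Next I would compute the relevant entry of $(I-\tilde G)^{-1}$ explicitly. For a $2\times 2$ matrix $\tilde G = \begin{bmatrix} a & b \\ c & d\end{bmatrix}$, we have $[(I-\tilde G)^{-1}]_{21} = c / \det(I - \tilde G)$. Here $a = \frac{1+\sigma_W^2}{2} + 4L'^2\alpha^2\frac{1+\sigma_W^2}{1-\sigma_W^2}$, $d = \frac{1+\sigma_W^2}{2}$, $b = 8\frac{1+\sigma_W^2}{1-\sigma_W^2}$, and $c = \frac{1+\sigma_W^2}{1-\sigma_W^2}\alpha^2 L'^2$. Since $1 - d = \frac{1-\sigma_W^2}{2}$, and since the stable step-size bound $\alpha < \frac{(1-\sigma_W^2)^2}{(1+\sigma_W^2)}\frac{1}{16L'}$ keeps $a$ comfortably below $1$ and keeps the cross term $bc$ small, one shows $\det(I-\tilde G) = (1-a)(1-d) - bc$ is bounded below by a constant multiple of $(1-\sigma_W^2)^2$ — concretely I would argue $(1-a)(1-d) \geq \frac{1}{4}(1-\sigma_W^2)\cdot\frac{1-\sigma_W^2}{2}$ times a factor like $\tfrac12$ once the $\alpha^2$ correction in $a$ is absorbed, and that $bc = 8\alpha^2 L'^2 \frac{(1+\sigma_W^2)^2}{(1-\sigma_W^2)^2}$ is dominated by this for $\alpha$ in the stable range. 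Plugging $c$ and the lower bound on $\det(I-\tilde G)$ into $[(I-\tilde G)^{-1}]_{21}$ and multiplying by $\alpha^2/L'$ (from the input coordinate and the division by $L'$) produces exactly the claimed constant $\frac{(1+\sigma_W^2)^2}{(1-\sigma_W^2)^4}\,32\,\alpha^4 L'^2$, up to tracking the numerical factor $32$.

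The main obstacle I anticipate is the bookkeeping in the second step: making the determinant lower bound clean enough that the $\alpha^2$ terms hiding inside $a$ and the $bc$ product are provably negligible, and then matching the resulting constants to the precise factor $32$ and powers $(1-\sigma_W^2)^{-4}$ and $L'^2$ stated in the corollary. This is purely a matter of careful inequality chasing — choosing, for instance, to bound $1 - a \geq \tfrac14(1-\sigma_W^2)$ by requiring $4L'^2\alpha^2\frac{1+\sigma_W^2}{1-\sigma_W^2} \leq \tfrac14(1-\sigma_W^2)$, which the stable step size guarantees — rather than anything conceptually deep. A minor secondary point is justifying the element-wise manipulation of infinite sums of nonnegative vectors (monotone convergence / Tonelli for series), but since every quantity involved is nonnegative this is routine and the summability is exactly what stability delivers.
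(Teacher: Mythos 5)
Your proposal is correct and follows essentially the same route as the paper: unroll the recursion from Theorem \ref{thm:stable} using $\tilde{\xi}_0=0$, bound $\sum_k\tilde{\xi}_k$ by $(I-\tilde{G})^{-1}\sum_k\tilde{u}_k$, and control the $(2,1)$ entry of the inverse via the adjugate over a determinant lower bound of the form $\det(I-\tilde{G})\geq \frac{(1-\sigma_W^2)^2}{8}$, which the stable step size guarantees. The constant-tracking you flag as the remaining bookkeeping works out exactly as you anticipate and yields the stated factor $32\alpha^4 L'^2(1+\sigma_W^2)^2/(1-\sigma_W^2)^4$.
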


Using Lemma \ref{lem:decent} and Corollary \ref{cor:sum_x}, we establish one of our main results, which provides ergodic convergence results for the landing field as well as the consensus error.

\begin{theorem}[Global Convergence]\label{thm:ergodic}
    Let Assumptions \ref{assump_net} and \ref{assump:Lip} hold and the step size $\alpha\in(0,\alpha_{safe}]$ be safe. If the step size additionally satisfies
    $$\alpha< \min \left\{ \frac{(1-\sigma_W^2)^2}{1+\sigma_W^2} \frac{1}{16L'},\sqrt[3]{\frac{\rho (1-\sigma_W^2)^4}{(1 + \sigma_W^2)^2 C^2}} \frac{1}{4L'}, \frac{\rho}{8L'} \right\},$$ we get the following ergodic convergence results
\begin{equation*}
\resizebox{0.99\hsize}{!}{$
\begin{aligned}
    &\frac{\sum_k \normfro{\Lambda(\Bar{x}_{k-1})}^2 }{K} \leq \frac{1}{K} \frac{4}{\alpha \rho}(\cL(\Bar{x}_{0}) -\cL(\Bar{x}_K))  ,\\
    &\frac{\sum_k  \normfro{\X_k -  \Bar{\X}_k}^2}{K}  \leq \frac{1}{K} \frac{(1 + \sigma_W^2)^2}{(1-\sigma_W^2)^4}   \frac{512 n \alpha^3 L'^2}{ \rho}(\cL(\Bar{x}_{0}) -\cL(\Bar{x}_K)).
  \end{aligned}
  $}
\end{equation*}
\end{theorem}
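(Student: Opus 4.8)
The plan is to prove \textbf{Theorem \ref{thm:ergodic}} by combining the per-step descent of the merit function from Lemma \ref{lem:decent} with the accumulated consensus-error bound of Corollary \ref{cor:sum_x}, and then using a telescoping argument over $k = 1, \ldots, K$. First, I would sum the inequality of Lemma \ref{lem:decent} over $k$, so the left-hand side telescopes to $\cL(\bar x_K) - \cL(\bar x_0)$, giving
\begin{equation*}
\frac{\alpha\rho}{2}\sum_k \normfro{\Lambda(\bar x_{k-1})}^2 \leq \cL(\bar x_0) - \cL(\bar x_K) + \frac{\alpha C^2 L'^2}{2\rho n}\sum_k \normfro{\bar\X_{k-1} - \X_{k-1}}^2 + \frac{\alpha^2 L'}{2}\sum_k \normfro{\bar y_{k-1}}^2.
\end{equation*}
The next step is to control the two residual sums on the right. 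For the consensus term, I substitute the bound from Corollary \ref{cor:sum_x}, which expresses $\sum_k \normfro{\X_k - \bar\X_k}^2$ in terms of $\alpha^4 L'^2 \sum_k \normfro{\bar\Y_{k-1}}^2$ up to the spectral factor $(1+\sigma_W^2)^2/(1-\sigma_W^2)^4$. For the tracking term, I would use the identity $\bar y_k = \frac{1}{n}\sum_i \Lambda_i(x_{i,k})$ (which follows from summing the $y$-update and $W$ being doubly stochastic, so $\bar y$ tracks the average landing field), and then relate $\normfro{\bar\Y_{k-1}}^2 = n\normfro{\bar y_{k-1}}^2$ to $\normfro{\Lambda(\bar x_{k-1})}^2$ plus a consensus-error correction, paying an $L'^2$ Lipschitz penalty for the mismatch between $\Lambda$ evaluated at individual iterates versus at the average.

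The heart of the argument is then an absorption step: after these substitutions, every residual term on the right is a positive multiple of either $\sum_k \normfro{\Lambda(\bar x_{k-1})}^2$ (times a power of $\alpha$) or $\sum_k \normfro{\X_k - \bar\X_k}^2$ (times a power of $\alpha$). The three step-size conditions in the theorem statement — namely $\alpha$ below $\frac{(1-\sigma_W^2)^2}{(1+\sigma_W^2)}\frac{1}{16L'}$ (stability, needed to invoke Corollary \ref{cor:sum_x}), below $\sqrt[3]{\rho(1-\sigma_W^2)^4/((1+\sigma_W^2)^2 C^2)}\,\frac{1}{4L'}$, and below $\frac{\rho}{8L'}$ — are each calibrated so that the corresponding coefficient is at most $\frac{\alpha\rho}{4}$ (or an analogous fraction), allowing the residual $\sum_k\normfro{\Lambda(\bar x_{k-1})}^2$ terms to be moved to the left and absorbed into $\frac{\alpha\rho}{2}\sum_k\normfro{\Lambda}^2$, leaving a clean $\frac{\alpha\rho}{4}\sum_k\normfro{\Lambda(\bar x_{k-1})}^2 \leq \cL(\bar x_0) - \cL(\bar x_K)$. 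Dividing by $K$ and by $\frac{\alpha\rho}{4}$ gives the first claimed bound. The second bound then follows by feeding the now-established control on $\sum_k\normfro{\bar\Y_{k-1}}^2$ (which, via $\bar y_k$ tracking the mean landing field and the absorbed consensus terms, is $\mathcal{O}(n)$ times $\sum_k\normfro{\Lambda(\bar x_{k-1})}^2$) back into Corollary \ref{cor:sum_x}, which accounts for the extra factors of $n$, $\alpha^3$, $L'^2$, and the spectral quantity in the stated right-hand side.

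The main obstacle I anticipate is bookkeeping the interdependence between the two residual sums: the merit-function descent leaks into both $\sum\normfro{\Lambda}^2$ and $\sum\normfro{\X-\bar\X}^2$, while Corollary \ref{cor:sum_x} bounds the consensus sum in terms of $\sum\normfro{\bar\Y}^2$, which in turn must be re-expressed through $\sum\normfro{\Lambda(\bar x)}^2$ and \emph{again} the consensus sum. Closing this loop requires choosing the step-size thresholds so that the circular dependence contracts — essentially a $2\times 2$ fixed-point/absorption inequality in the two accumulated quantities — and verifying that the three explicit bounds on $\alpha$ in the hypothesis are exactly what make all the relevant coefficients small enough simultaneously. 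Getting the constants ($512n$, the precise powers of $(1\pm\sigma_W^2)$, and the $\frac{4}{\alpha\rho}$ prefactor) to line up is routine but delicate, and is where I would expect to spend most of the effort; the use of the $C^2$ and $\rho$ dependencies in the second step-size condition is the telltale sign that this absorption is the crux.
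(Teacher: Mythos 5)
Your proposal is correct and follows essentially the same route as the paper's proof: telescope Lemma \ref{lem:decent}, decompose $\bar y_{k-1}$ into $\Lambda(\bar x_{k-1})$ plus a consensus mismatch penalized by $L'$, invoke Corollary \ref{cor:sum_x} for the accumulated consensus error, and use the three step-size thresholds to absorb the residual sums (the paper implements the absorption by retaining a negative $-\tfrac{\alpha^2 L'}{2}\normfro{\bar y_{k-1}}^2$ term on the right-hand side to cancel the Corollary-induced $\sum_k\normfro{\bar\Y_{k-1}}^2$ contribution, which is the same contraction you describe). The second bound is then obtained exactly as you outline, by feeding the established bound on $\sum_k\normfro{\Lambda(\bar x_{k-1})}^2$ back through Corollary \ref{cor:sum_x}.
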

The decay of the landing field in Theorem \ref{thm:ergodic} shows that DRFGT converges to a stationary point of Problem \eqref{eq:dist_problem} on the Stiefel manifold with asymptotically zero consensus error. The constraint on the step size depends on multiple factors, but the step size still scales with $\mathcal{O}(1/L')$ (when $\lambda=\mathcal{O}(L')$) across these terms.
As a result, the convergence rate matches the retraction-based algorithms such as \cite{chen2021decentralized} and the Euclidean gradient tracking in \cite{qu2017harnessing}.

\subsection{Local Linear Convergence of DRFGT Under Local PŁ Condition}\label{sec:conv_PL}

In this section, we study the local convergence of  DRFGT under the assumption that the network function satisfies the local PŁ condition (Assumption \ref{assump:PL}). Given the global convergence result in the previous section, one can guarantee that the algorithm does converge to a stationary point, but if the stationary point is a local minimizer, the local convergence could be much faster. Therefore, for local analysis, we assume the iterates are already close to a local minimizer, or equivalently, we initialize the algorithm close enough to a local minimizer. This assumption is not surprising; even some classical methods (e.g., Newton's method) achieve faster local rates when initialization is close enough to a local minimizer.

We study the convergence of the algorithm using a dynamical system perspective again. 
Consider the state vector, 
\begin{equation}\label{eq:state_vector}
    \xi_k \triangleq \begin{bmatrix}
        \normfro{\Y_k -  \Bar{\Y}_k}^2/L',\ 
        L'\normfro{\X_k -  \Bar{\X}_k}^2,\ 
        n(\cL(\Bar{x}_k) - \cL_{\mathcal{S}}^*)
    \end{bmatrix}^\top,
\end{equation}
where $ \cL_{\mathcal{S}}^*$ is the merit function evaluated at the local minimizer. It is obvious  that $ \cL_{\mathcal{S}}^* =  f_{\mathcal{S}}^*$ based on \eqref{eq:merit_def}. To establish the local convergence, we need to identify the dynamical system describing the state $\xi_k$. To that end, having Lemma \ref{lem:bound_x} already in place, we further derive Lemmas \ref{lem:dist_V} and \ref{lem:bound_y_PL} to characterize the dynamical system.

First, let us analyze the safety in the sense of the local PŁ neighborhood and present the following lemma based on the relation between the PŁ condition and the quadratic growth (QG) condition \cite{rebjock2023fast}.
\begin{lemma}[PŁ-QG]\label{lem:KL-QG} 
Let Assumptions \ref{assump:Lip} and \ref{assump:PL} hold. For $x \in \St(d,r)^\epsilon \cap \mathcal{D}(\mathcal{S}, \delta)$, the merit function $\cL(x)$ satisfies
    $$\cL(x) - \cL_\mathcal{S}^\ast\geq \frac{\mu' \rho^2 }{4} \text{dist}(\mathcal{S}, x)^2.$$
\end{lemma}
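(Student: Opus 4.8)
The plan is to transfer the quadratic-growth property of $f$ on the manifold, which is the known consequence of the Riemannian PŁ condition (Assumption \ref{assump:PL}) via the PŁ$\Rightarrow$QG implication of \cite{rebjock2023fast}, to the smooth merit function $\cL$ on the safety region $\St(d,r)^\epsilon$. The two main ingredients are: (i) a quadratic growth bound for $f$ on $\St(d,r)$ near $\mathcal{S}$, and (ii) control of the discrepancy $\cL(x) - f(\Proj_{\St}(x))$ for $x$ in the safety region, which must be small relative to $\dist(\mathcal{S},x)^2$.

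First I would invoke \cite{rebjock2023fast} to deduce from the Riemannian PŁ condition (Assumption \ref{assump:PL}, valid on $\St(d,r)\cap\mathcal{D}(\mathcal{S},2\delta)$) the Riemannian quadratic growth condition: there is a constant $\mu'>0$ (comparable to $\mu$) such that $f(y)-f_\mathcal{S}^\ast \geq \frac{\mu'}{2}\,\dist(\mathcal{S},y)^2$ for all $y\in\St(d,r)\cap\mathcal{D}(\mathcal{S},\delta)$. Next, for $x\in\St(d,r)^\epsilon\cap\mathcal{D}(\mathcal{S},\delta)$, set $y=\Proj_{\St}(x)$, which lies on the manifold and, since $x$ is $\mathcal{O}(\epsilon)$-close to the manifold, remains within $\mathcal{D}(\mathcal{S},\delta)$ (possibly after shrinking $\delta$, or noting $\dist(\mathcal{S},y)\le\dist(\mathcal{S},x)+\|x-y\|$). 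Then
\begin{equation*}
  \cL(x)-\cL_\mathcal{S}^\ast = \big(\cL(x)-f(y)\big) + \big(f(y)-f_\mathcal{S}^\ast\big) \geq -|\cL(x)-f(y)| + \tfrac{\mu'}{2}\dist(\mathcal{S},y)^2 .
\end{equation*}
The heart of the argument is then to bound $|\cL(x)-f(y)|$ by a constant times $\dist(\mathcal{S},x)^2$ with a sufficiently small coefficient. Here I would use that $\cL(y)=f(y)$ on the manifold (since $h(y)=0$ and $p(y)=0$ for $y\in\St(d,r)$), that $\nabla\cL(y)$ is tangent and satisfies $\nabla\cL(y)=\grad f(y)$ by \eqref{eq:nablaL} together with $y\in\mathcal{S}$ giving $\grad f(y)=0$, and that $\cL$ is $L_\cL$-smooth on $\St(d,r)^\epsilon$ (Proposition \ref{prop_L}). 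A second-order Taylor expansion of $\cL$ around $y$ along $x-y$, where $\|x-y\|=\dist(x,\St(d,r))$ is itself $\mathcal{O}(\dist(\mathcal{S},x))$ because $\dist(x,\St)\le\dist(\mathcal{S},x)$, yields $|\cL(x)-f(y)|\le \frac{L_\cL}{2}\|x-y\|^2$, and combined with $\|x-y\|\le\dist(\mathcal{S},x)$ and $\dist(\mathcal{S},y)\ge\dist(\mathcal{S},x)-\|x-y\|$ one obtains, after choosing $\delta$ small enough and using $\rho\le\frac12$, the stated bound $\cL(x)-\cL_\mathcal{S}^\ast \ge \frac{\mu'\rho^2}{4}\dist(\mathcal{S},x)^2$.

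The main obstacle I anticipate is getting the constants to line up cleanly, in particular producing exactly the $\tfrac{\mu'\rho^2}{4}$ factor: this requires carefully absorbing the smoothness error term and the gap between $\dist(\mathcal{S},x)$ and $\dist(\mathcal{S},y)$ into the quadratic growth constant, which is presumably why $\mu'$ is introduced as a fresh symbol (it bakes in both the PŁ$\Rightarrow$QG loss from \cite{rebjock2023fast} and the projection slack), and the factor $\rho^2$ appears to come from how this lemma is subsequently chained with Proposition \ref{prop_L}(2) in the linear-convergence argument. A secondary technical point is ensuring $\Proj_{\St}(x)$ is well defined and single-valued for $x\in\St(d,r)^\epsilon$ — this follows from $\epsilon<3/4$ keeping $x$ strictly inside the tube where the nearest-point projection onto the Stiefel manifold is smooth — and that it does not leave the PŁ neighborhood $\mathcal{D}(\mathcal{S},\delta)$, which is handled by taking $\delta$ in the lemma's hypothesis somewhat smaller than the $2\delta$ radius in Assumption \ref{assump:PL}.
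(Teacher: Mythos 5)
There is a genuine gap in your approach, and it concerns the heart of the argument rather than the constants. Your plan is to apply P\L$\Rightarrow$QG to $f$ on the manifold and then transfer to $\cL$ via the decomposition $\cL(x)-\cL_\mathcal{S}^\ast=(\cL(x)-f(y))+(f(y)-f_\mathcal{S}^\ast)$ with $y=\Proj_{\St}(x)$, bounding the first term by $\tfrac{L_\cL}{2}\normfro{x-y}^2$ from two-sided smoothness. That error bound is true (though not for the reason you give: $y=\Proj_{\St}(x)$ is the nearest point on the manifold, \emph{not} a point of $\mathcal{S}$, so $\grad f(y)\neq 0$ in general; the vanishing of the first-order term is instead the orthogonality $\langle\nabla\cL(y),x-y\rangle=0$ of Lemma \ref{lemma:orthogonal}). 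But it is too lossy to yield quadratic growth. Take $x^\ast\in\mathcal{S}$ and $x=(1+t)x^\ast$: then $y=x^\ast$, so $f(y)-f_\mathcal{S}^\ast=0$ and $\dist(\mathcal{S},y)=0$, while $\dist(\mathcal{S},x)=\normfro{x-y}=t\sqrt{r}>0$. Your decomposition then gives only $\cL(x)-\cL_\mathcal{S}^\ast\geq-\tfrac{L_\cL}{2}\normfro{x-y}^2$, a negative and hence vacuous lower bound, whereas the lemma demands growth proportional to $+\dist(\mathcal{S},x)^2$. The actual growth in the normal direction comes entirely from the penalty $\gamma p(x)$ inside $\cL$, which your symmetric smoothness estimate discards; and shrinking $\delta$ cannot repair this, since the ratio $\normfro{x-y}/\dist(\mathcal{S},x)$ equals $1$ in this example at every scale $t$.

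The paper avoids the projection decomposition entirely. It first establishes a \emph{Euclidean} P\L\ inequality for the merit function itself: Lemma \ref{lem:pseudo-grad-dominate} gives $\cL(x)-\cL_\mathcal{S}^\ast\leq\tfrac{1}{\mu'}\normfro{\Lambda(x)}^2$ (whose proof already folds the normal-direction term $\lambda^2(1-\epsilon)^2\normfro{x^\top x-I_r}^2$ of $\normfro{\Lambda(x)}^2$ into $\mu'$), and Proposition \ref{prop_L}(2) with Cauchy--Schwarz gives $\normfro{\nabla\cL(x)}\geq\rho\normfro{\Lambda(x)}$; chaining these yields $\cL(x)-\cL_\mathcal{S}^\ast\leq\tfrac{1}{\mu'\rho^2}\normfro{\nabla\cL(x)}^2$. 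Only then is the P\L$\Rightarrow$QG implication of \cite{rebjock2023fast} invoked, applied to the Euclidean function $\cL$ on $\St(d,r)^\epsilon\cap\mathcal{D}(\mathcal{S},\delta)$ rather than to $f$ on the manifold, which immediately produces $\cL(x)-\cL_\mathcal{S}^\ast\geq\tfrac{\mu'\rho^2}{4}\dist(\mathcal{S},x)^2$. If you want to salvage your route, you would have to keep $\gamma p(x)$ as a separate positive term and show it dominates the smoothness error in the normal direction (this is exactly what the condition \eqref{ineq:gamma} on $\gamma$ is engineered for), at which point you are essentially re-deriving Lemma \ref{lem:pseudo-grad-dominate}.
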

The above lemma is used to characterize an upper bound on the closeness of the iterates from the local minima set, which is used to guarantee that the iterates never leave a neighborhood of the local minima under appropriate initialization. We next present the following lemma to provide a sufficient descent inequality over $\cL(\Bar{x}_k)$, which characterizes the behavior of the last entry in \eqref{eq:state_vector}.

\begin{lemma}\label{lem:dist_V}
    Given Assumptions \ref{assump:Lip} and \ref{assump:PL}, if $\Bar{x}_{k-1} \in \mathcal{D}(\mathcal{S}, \delta) \cap \St(d, r)^\epsilon$  and the safe step size satisfies $\alpha \leq \frac{\rho}{2 L'}$, the merit function evaluated at $\Bar{x}_k$ can be upper bounded using the following inequality, 
    \vspace{-2mm}
    \begin{equation*}
    \begin{aligned}
        \cL(\Bar{x}_k) - \cL_{\mathcal{S}}^* 
            \leq & (1  - { \frac{\rho \alpha \mu' }{4}})(\cL(\Bar{x}_{k-1})- \cL_{\mathcal{S}}^* ) \\
    &+ (L'^2\alpha^2 + \frac{\alpha L' C^2}{\rho} ) \frac{L'}{n}\normfro{\X_{k-1} - \Bar{\X}_{k-1}}^2.
    \end{aligned}
    \end{equation*}
\end{lemma}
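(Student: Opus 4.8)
\textbf{Proof plan for Lemma \ref{lem:dist_V}.}

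The plan is to use the merit-function descent inequality from Lemma \ref{lem:decent} as the starting point, and then convert the negative landing-field term $-\frac{\alpha\rho}{2}\normfro{\Lambda(\bar x_{k-1})}^2$ into a contraction factor on $\cL(\bar x_{k-1})-\cL_\mathcal{S}^*$ via a PŁ-type inequality for the merit function. Concretely, Lemma \ref{lem:decent} gives
\begin{equation*}
\cL(\bar x_k)-\cL(\bar x_{k-1}) \leq -\frac{\alpha\rho}{2}\normfro{\Lambda(\bar x_{k-1})}^2 + \frac{\alpha C^2 L'^2}{2\rho n}\normfro{\bar\X_{k-1}-\X_{k-1}}^2 + \frac{\alpha^2 L'}{2}\normfro{\bar y_{k-1}}^2,
\end{equation*}
so I first need (i) a lower bound of the form $\normfro{\Lambda(\bar x_{k-1})}^2 \geq c\,\mu'(\cL(\bar x_{k-1})-\cL_\mathcal{S}^*)$ valid in the safety region intersected with the local minimizer neighborhood, and (ii) a way to bound $\normfro{\bar y_{k-1}}^2$, the averaged tracking variable, in terms of the consensus error $\normfro{\X_{k-1}-\bar\X_{k-1}}^2$ and the objective gap.

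For step (i), I would combine the given local Riemannian PŁ condition (Assumption \ref{assump:PL}), the orthogonality $\langle \grad f(x),\nabla p(x)\rangle=0$, and the second property of Proposition \ref{prop_L} stating $\langle\Lambda(x),\nabla\cL(x)\rangle\geq\rho\normfro{\Lambda(x)}^2$. The idea is that $\normfro{\Lambda(x)}^2$ controls both $\normfro{\grad f(x)}^2$ and $\normfro{\nabla p(x)}^2$ (since the two directions are orthogonal, $\normfro{\Lambda(x)}^2=\normfro{\grad f(x)}^2+\lambda^2\normfro{\nabla p(x)}^2 \geq \normfro{\grad f(x)}^2$); the PŁ inequality then bounds $f(\bar x_{k-1})-f_\mathcal{S}^*$ by $\normfro{\grad f(\bar x_{k-1})}^2/(2\mu)$, while the $h$ and $\gamma p$ terms in $\cL$ are higher-order in $\normfro{\bar x_{k-1}^\top\bar x_{k-1}-I_r}$ and hence controllable by $\normfro{\nabla p(\bar x_{k-1})}^2$ up to constants. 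This is where the factor $\mu'$ (an effective PŁ constant incorporating $\mu$ and the merit-function constants) enters, yielding $\normfro{\Lambda(\bar x_{k-1})}^2 \geq \mu'(\cL(\bar x_{k-1})-\cL_\mathcal{S}^*)$ — and this is also why Lemma \ref{lem:KL-QG} is invoked: it guarantees $\bar x_{k-1}$ stays in $\mathcal{D}(\mathcal{S},\delta)$ so the PŁ condition remains applicable. For step (ii), I would use that $\bar y_k = \frac{1}{n}\one^\top\Y_k$ and the gradient-tracking invariant $\bar y_k = \frac{1}{n}\sum_i\Lambda_i(x_{i,k})$ (which follows by induction from the $\Y$-update in \eqref{eq:matrix_update} since $W$ is doubly stochastic), then add and subtract $\Lambda_i(\bar x_{k-1})$ and $\frac{1}{n}\sum_i\Lambda_i(\bar x_{k-1})=\Lambda(\bar x_{k-1})$, using $L_\Lambda\leq L'$-Lipschitzness of each $\Lambda_i$ to get $\normfro{\bar y_{k-1}}^2 \lesssim \normfro{\Lambda(\bar x_{k-1})}^2 + \frac{L'^2}{n}\normfro{\X_{k-1}-\bar\X_{k-1}}^2$.

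Substituting (i) and (ii) into the Lemma \ref{lem:decent} inequality, the positive $\frac{\alpha^2 L'}{2}\normfro{\Lambda(\bar x_{k-1})}^2$ piece coming from $\bar y_{k-1}$ is dominated by the negative $-\frac{\alpha\rho}{2}\normfro{\Lambda(\bar x_{k-1})}^2$ piece precisely when $\alpha\leq\rho/(2L')$ (leaving at least $-\frac{\alpha\rho}{4}\normfro{\Lambda(\bar x_{k-1})}^2$), and then applying the PŁ bound $\normfro{\Lambda(\bar x_{k-1})}^2\geq\mu'(\cL(\bar x_{k-1})-\cL_\mathcal{S}^*)$ to this remaining term produces the contraction factor $(1-\frac{\rho\alpha\mu'}{4})$. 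Collecting all the consensus-error terms (the $\frac{\alpha C^2 L'^2}{2\rho n}$ term from Lemma \ref{lem:decent} plus the $\frac{\alpha^2 L'}{2}\cdot\frac{L'^2}{n}$ term from bounding $\bar y_{k-1}$) gives the stated coefficient $(L'^2\alpha^2+\frac{\alpha L' C^2}{\rho})\frac{L'}{n}$ on $\normfro{\X_{k-1}-\bar\X_{k-1}}^2$ after rearranging the $\alpha^2$ and $\alpha/\rho$ factors.

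The main obstacle I anticipate is step (i): carefully showing that the smooth merit function $\cL$ inherits a PŁ inequality from the Riemannian PŁ condition on $f$ restricted to $\St(d,r)$, even though $\bar x_{k-1}$ is only in $\St(d,r)^\epsilon$ and not exactly on the manifold. This requires controlling the discrepancy between $\cL(\bar x_{k-1})-\cL_\mathcal{S}^*$ and its value at the projection $\Proj_{\St}(\bar x_{k-1})$, using $C^2$-smoothness of $f$, the definition \eqref{eq:nablaL} of $\nabla\cL$ on the manifold, and the fact that the ``normal-direction'' contribution $h(x)+\gamma p(x)$ is quadratically small in $\normfro{x^\top x-I_r}\leq\epsilon$ — so that the $\nabla p$ part of $\Lambda$ absorbs it. The bookkeeping that defines $\mu'$ in terms of $\mu$, $L'$, $\gamma$, $\lambda$, and $\epsilon$ is delicate but routine once the orthogonal decomposition of $\Lambda$ is exploited; everything else is straightforward triangle-inequality and Young's-inequality manipulation.
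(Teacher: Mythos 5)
Your overall strategy is the same as the paper's: a descent inequality on the merit function, a ``pseudo gradient domination'' bound $\cL(\bar x_{k-1})-\cL_{\mathcal{S}}^*\le \frac{1}{\mu'}\normfro{\Lambda(\bar x_{k-1})}^2$ obtained by projecting onto $\St(d,r)$, applying the Riemannian PŁ condition there, and exploiting the orthogonal decomposition $\normfro{\Lambda(x)}^2\ge\normfro{\grad f(x)}^2+\lambda^2(1-\epsilon)^2\normfro{x^\top x-I_r}^2$ (this is exactly Lemma \ref{lem:pseudo-grad-dominate} in the appendix), plus the tracking identity $\bar y_{k}=\frac{1}{n}\sum_i\Lambda_i(x_{i,k})$ and \eqref{eq:lipcont} to convert $\normfro{\bar y_{k-1}-\Lambda(\bar x_{k-1})}$ into consensus error. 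Your identification of the manifold-to-neighborhood transfer as the delicate step is also on target. (One small misattribution: Lemma \ref{lem:KL-QG} is not needed inside this lemma --- membership of $\bar x_{k-1}$ in $\mathcal{D}(\mathcal{S},\delta)$ is a hypothesis here; the QG bound is only used later in the induction of Theorem \ref{thm:distributed_convergence}.)

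There is, however, a concrete step that fails as written: the absorption of the $\normfro{\bar y_{k-1}}^2$ term. Starting from Lemma \ref{lem:decent} you only have $-\frac{\alpha\rho}{2}\normfro{\Lambda(\bar x_{k-1})}^2$ available, and the correct Young split $\normfro{\bar y_{k-1}}^2\le 2\normfro{\Lambda(\bar x_{k-1})}^2+2\normfro{\bar y_{k-1}-\Lambda(\bar x_{k-1})}^2$ turns $\frac{\alpha^2L'}{2}\normfro{\bar y_{k-1}}^2$ into $\alpha^2L'\normfro{\Lambda(\bar x_{k-1})}^2+\dots$, not $\frac{\alpha^2L'}{2}\normfro{\Lambda(\bar x_{k-1})}^2$ as you wrote. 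Under the stated step size $\alpha\le\frac{\rho}{2L'}$ one only gets $\alpha^2L'\le\frac{\alpha\rho}{2}$, so the positive piece exactly cancels the negative piece and nothing is left to feed into the PŁ bound --- you would need $\alpha\le\frac{\rho}{4L'}$ for your chain to leave $-\frac{\alpha\rho}{4}\normfro{\Lambda(\bar x_{k-1})}^2$. The paper avoids this by \emph{not} reusing Lemma \ref{lem:decent}: it reruns the smoothness argument from scratch, keeps the full $-\alpha\rho\normfro{\Lambda(\bar x_{k-1})}^2$ from Proposition \ref{prop_L}, and then spends only $\frac{\alpha\rho}{4}$ of it on the cross term $\alpha\langle\nabla\cL(\bar x_{k-1}),\Lambda(\bar x_{k-1})-\bar y_{k-1}\rangle$ (via AM--GM with $\eta=\frac{\rho}{2C^2}$ and Lemma \ref{lem:bound_C}) and $\frac{\alpha\rho}{2}$ on the $\alpha^2L'\normfro{\Lambda(\bar x_{k-1})}^2$ piece, leaving $-\frac{\alpha\rho}{4}\normfro{\Lambda(\bar x_{k-1})}^2$ intact; this is also how the stated consensus coefficient $\bigl(L'^2\alpha^2+\frac{\alpha L'C^2}{\rho}\bigr)\frac{L'}{n}$ arises. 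So either tighten your step size (changing the lemma statement) or abandon Lemma \ref{lem:decent} as the starting point and retune the AM--GM parameters as the paper does.
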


In addition, since the iterates stay close enough to a local minimizer, we can revisit the result of Lemma \ref{lem:bound_y} to get the following relationship.

\begin{lemma}\label{lem:bound_y_PL}
    Let Assumptions \ref{assump_net} and \ref{assump:Lip} hold, assume $\delta \leq 1$, and select a safe step size $\alpha\in(0,\alpha_{safe}]$. Let also $\Bar{x}_{k-1} \in \mathcal{D}(\mathcal{S}, \delta) \cap \St(d, r)^\epsilon$.
    Then, the consensus error on $\Y$ satisfies the following inequality,
    \begin{equation*}
    \begin{aligned}
        \normfro{\Y_k -  \Bar{\Y}_k}^2 
        &\leq 
      \Big(\frac{1+\sigma_W^2}{2} + 4L'^2\alpha^2\frac{1 + \sigma_W^2}{1-\sigma_W^2}\Big)\normfro{\Y_{k-1} -  \Bar{\Y}_{k-1}}^2\\
      &+ \frac{96n \alpha^2L'^3}{\rho^2}\frac{1 + \sigma_W^2}{1-\sigma_W^2}(\cL(\Bar{x}_{k-1})- \cL_{\mathcal{S}}^*)\\
    &+8L'^2\frac{1 + \sigma_W^2}{1-\sigma_W^2} 
    ( 1+\alpha^2 L'^2)
    \normfro{\X_{k-1} -  \Bar{\X}_{k-1} }^2.
    \end{aligned}
    \end{equation*}
\end{lemma}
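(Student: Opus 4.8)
\textbf{Proof proposal for Lemma \ref{lem:bound_y_PL}.}

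The plan is to revisit the proof of Lemma \ref{lem:bound_y}, which already gives the same recursion but with the term $4L'^2\alpha^2\frac{1+\sigma_W^2}{1-\sigma_W^2}\normfro{\bar{\Y}_{k-1}}^2$ in place of the desired merit-function term, plus the extra $\alpha^2L'^2$ multiplying the consensus error on $\X$. So the only real work is to bound $\normfro{\bar{\Y}_{k-1}}^2$ in terms of $\cL(\bar{x}_{k-1}) - \cL_{\mathcal{S}}^*$ and $\normfro{\X_{k-1} - \bar{\X}_{k-1}}^2$, using the PŁ structure available once $\bar{x}_{k-1} \in \mathcal{D}(\mathcal{S},\delta)\cap\St(d,r)^\epsilon$. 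First I would recall that because the $y$-update is a gradient-tracking recursion with doubly stochastic $\W$, the average tracking variable satisfies $\bar{y}_k = \frac{1}{n}\sum_{i} \Lambda_i(x_{i,k})$, i.e. $\bar{\Y}_k = \one \otimes \bar{y}_k$ and $\normfro{\bar{\Y}_{k-1}}^2 = n\normtwo{\bar{y}_{k-1}}^2$. Then I would split $\bar{y}_{k-1}$ as $\frac{1}{n}\sum_i \Lambda_i(x_{i,k-1}) = \Lambda(\bar{x}_{k-1}) + \frac{1}{n}\sum_i \big(\Lambda_i(x_{i,k-1}) - \Lambda_i(\bar{x}_{k-1})\big)$, using that $\frac1n\sum_i \Lambda_i(\bar x_{k-1}) = \Lambda(\bar x_{k-1})$ since the local landing fields average to the global one. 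The second piece is controlled by $L_\Lambda$-Lipschitzness of each $\Lambda_i$ (recall $L' \geq L_\Lambda$): its squared norm is at most $\frac{L'^2}{n}\sum_i\normfro{x_{i,k-1}-\bar{x}_{k-1}}^2 = \frac{L'^2}{n}\normfro{\X_{k-1}-\bar{\X}_{k-1}}^2$, and a Cauchy–Schwarz/Young split gives $\normtwo{\bar{y}_{k-1}}^2 \leq 2\normfro{\Lambda(\bar{x}_{k-1})}^2 + \frac{2L'^2}{n}\normfro{\X_{k-1}-\bar{\X}_{k-1}}^2$.

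Next I would bound $\normfro{\Lambda(\bar{x}_{k-1})}^2$ by the merit-function gap. By Proposition \ref{prop_L}(2), $\langle\Lambda(\bar x_{k-1}),\nabla\cL(\bar x_{k-1})\rangle \geq \rho\normfro{\Lambda(\bar x_{k-1})}^2$, and by $L_\cL$-smoothness of $\cL$ together with the PŁ-type inequality the paper has set up (the merit function inherits a PŁ inequality of the form $\cL(x) - \cL_{\mathcal{S}}^* \leq \frac{c}{\rho^2}\normfro{\Lambda(x)}^2$ for some constant proportional to $1/\mu'$, which is exactly what is used to derive the $(1-\rho\alpha\mu'/4)$ contraction in Lemma \ref{lem:dist_V}), one gets $\normfro{\Lambda(\bar x_{k-1})}^2 \leq \frac{C'}{\rho^2}(\cL(\bar x_{k-1}) - \cL_{\mathcal{S}}^*)$ for an appropriate absolute constant; chasing the constants through (the $96/\rho^2$ in the statement suggests $\normtwo{\bar y_{k-1}}^2 \lesssim \frac{12L'}{\rho^2}(\cL(\bar x_{k-1})-\cL_{\mathcal S}^*) + \frac{2L'^2}{n}\normfro{\X_{k-1}-\bar\X_{k-1}}^2$ after multiplying by the $8L'^2\frac{1+\sigma_W^2}{1-\sigma_W^2}$ and $4L'^2\alpha^2$ factors from Lemma \ref{lem:bound_y}) yields the displayed coefficients $\frac{96n\alpha^2L'^3}{\rho^2}\frac{1+\sigma_W^2}{1-\sigma_W^2}$ on the gap term and the factor $(1+\alpha^2L'^2)$ absorbed into the $\X$-consensus term. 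The assumption $\delta\leq 1$ and $\bar x_{k-1}\in\mathcal D(\mathcal S,\delta)$ is what licenses invoking the PŁ inequality for the merit function at $\bar x_{k-1}$; without it the bound $\normfro{\Lambda(\bar x_{k-1})}^2 \lesssim \cL(\bar x_{k-1})-\cL_{\mathcal S}^*$ is not available.

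I would then simply substitute this bound on $\normfro{\bar{\Y}_{k-1}}^2 = n\normtwo{\bar y_{k-1}}^2$ into the conclusion of Lemma \ref{lem:bound_y}, collect the two contributions to the $\normfro{\X_{k-1}-\bar{\X}_{k-1}}^2$ coefficient (the native $8L'^2\frac{1+\sigma_W^2}{1-\sigma_W^2}$ term and the new $\alpha^2L'^2$-scaled one, which combine into $8L'^2\frac{1+\sigma_W^2}{1-\sigma_W^2}(1+\alpha^2L'^2)$), and leave the $\normfro{\Y_{k-1}-\bar{\Y}_{k-1}}^2$ coefficient untouched. The main obstacle — though it is more bookkeeping than conceptual — is getting the merit-function PŁ inequality into the precise form with the right constant so that the factor $96/\rho^2$ comes out exactly; this hinges on correctly combining Proposition \ref{prop_L}(2), the smoothness of $\cL$, and Assumption \ref{assump:PL} (via Lemma \ref{lem:KL-QG} or the PŁ inequality established just before Lemma \ref{lem:dist_V}) to relate $\normfro{\Lambda(\bar x)}^2$ and $\cL(\bar x)-\cL_{\mathcal S}^*$, and then carefully tracking the $\frac{1+\sigma_W^2}{1-\sigma_W^2}$ and $L'$ powers through the Young's-inequality splits. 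Everything else is the same Cauchy–Schwarz and $\W$-contraction machinery already used to prove Lemma \ref{lem:bound_y}.
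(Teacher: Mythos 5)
Your proposal is correct and follows essentially the same route as the paper: go back to the recursion from Lemma \ref{lem:bound_y}, split $\normfro{\bar{\Y}_{k-1}}^2 = n\normfro{\bar{y}_{k-1}}^2$ into $\Lambda(\bar{x}_{k-1})$ plus the Lipschitz consensus remainder via \eqref{eq:lipcont}, and bound $\normfro{\Lambda(\bar{x}_{k-1})}^2$ by the merit-function gap; your constants check out against the stated $96/\rho^2$ and $(1+\alpha^2L'^2)$ factors. One correction to your reasoning, though: the inequality $\normfro{\Lambda(x)}^2 \leq \frac{12L'}{\rho^2}(\cL(x)-\cL_{\mathcal{S}}^*)$ is \emph{not} obtainable from the PŁ inequality or Lemma \ref{lem:KL-QG} — those give the reverse direction $\cL(x)-\cL_{\mathcal{S}}^* \lesssim \normfro{\Lambda(x)}^2$ — and indeed the present lemma does not assume the PŁ condition at all. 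The bound you need is exactly Lemma \ref{lem:descent_lemma}, which is a pure descent argument: one landing step with $\alpha=\rho/(6L')$ decreases $\cL$ by $\frac{\rho^2}{12L'}\normfro{\Lambda(x)}^2$ (via smoothness of $\cL$ and Proposition \ref{prop_L}(2)), and $\cL$ cannot drop below $\cL_{\mathcal{S}}^*$ because the perturbed point stays in $\mathcal{D}(\mathcal{S},2\delta)$ — this is where $\bar{x}_{k-1}\in\mathcal{D}(\mathcal{S},\delta)$ and $\delta\leq 1$ are used. Since you also listed smoothness and Proposition \ref{prop_L}(2) among your ingredients, the proof goes through once that misattribution is fixed.
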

After these lemmas, we present another main result of this paper, which establishes the local linear rate of DRFGT, the first for any decentralized Riemannian algorithm.  
\begin{theorem}[Local Linear Convergence]
    \label{thm:distributed_convergence}
    Suppose that Assumptions \ref{assump_net}, \ref{assump:Lip} and \ref{assump:PL} hold. Additionally, let the step size $\alpha\in(0,\alpha_{safe}]$ be safe and satisfy 
\begin{equation}\label{eq:step}
    \resizebox{.96\hsize}{!}{$
\begin{aligned}
    \alpha \leq \min & \left\{ \frac{\rho}{2L'}, \frac{1-\sigma_W^2}{\rho\mu'} , \frac{\sqrt{1-\sigma_W^2}}{4L' \sqrt{\Theta} \sqrt[4]{1 + \frac{12\Phi}{\rho^2}}}, \frac{1-\sigma_W^2}{16  L'\Theta} \right\},
\end{aligned}
$}
    \end{equation}
    with $\Theta \triangleq \frac{1 + \sigma_W^2}{1-\sigma_W^2} ~\text{and}~
    \Phi \triangleq \frac{4 L'}{\rho\mu'} + \frac{ 8L'C^2  }{\rho^2\mu'}$. Then, the system state vector $\xi_k$ in \eqref{eq:state_vector} satisfies $\xi_k \leq M \xi_{k-1}$ element-wise for any iteration $k$ if $\Bar{x}_0 \in \mathcal{D}(\mathcal{S}, \delta) \cap \St(d, r)^{\frac{\epsilon}{2}}$ and 
    $\xi_0 \leq \frac{n \mu' \rho^2 \delta^2}{8}\mathbf{v}$ for $\delta \leq 1$, where $M$ is defined as
    \begin{equation*}
    M\triangleq\begin{bmatrix}
    \frac{1+\sigma_W^2}{2} + {4L'^2\alpha^2}\Theta &
    8 ( 1+\alpha^2 L'^2)\Theta  &
    \frac{96\alpha^2 L'^2 \Theta}{ \rho^2} \\
    \alpha^2 L'^2 \Theta&
    \frac{1+\sigma_W^2}{2}&
    0\\
    0&\alpha^2L'^2 + \frac{\alpha L' C^2 }{\rho}&  1  - \frac{\alpha \rho\mu'}{4}
    \end{bmatrix},
    \end{equation*}
    and $\mathbf{v}$ is the eigenvector corresponding to $\rho(M)$.
Furthermore, the system converges with the linear rate $\rho(M)\leq 1  - \frac{\alpha \rho \mu'}{8}$.
\end{theorem}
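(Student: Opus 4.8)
The plan is to fuse the three one-step estimates already in hand — Lemma~\ref{lem:bound_y_PL} (consensus error on $\Y$), Lemma~\ref{lem:bound_x} (consensus error on $\X$), and Lemma~\ref{lem:dist_V} (merit-function descent near $\mathcal{S}$) — into a single vector recursion, and then control the spectral radius of its transition matrix by a positive test vector. First I would note that dividing the bound of Lemma~\ref{lem:bound_y_PL} by $L'$, multiplying the bound of Lemma~\ref{lem:bound_x} by $L'$, and multiplying the bound of Lemma~\ref{lem:dist_V} by $n$ reproduces \emph{exactly} the three rows of $M$; consequently, at any iteration for which $\Bar{x}_{k-1}\in\mathcal{D}(\mathcal{S},\delta)\cap\St(d,r)^{\epsilon}$ (so that all three lemmas apply) and $\alpha\in(0,\alpha_{safe}]$ with $\alpha\le\rho/(2L')$, we obtain $\xi_k\le M\xi_{k-1}$ element-wise. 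The remaining work is then (i) to prove $\rho(M)\le 1-\tfrac{\alpha\rho\mu'}{8}$ and (ii) to run an induction guaranteeing that the neighborhood hypothesis is never violated, so that the recursion is legitimate at every step.

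For (i), observe that $M$ is entrywise nonnegative, its associated digraph on $\{1,2,3\}$ is strongly connected, and its diagonal entries are strictly positive as soon as $\alpha\rho\mu'\le 1-\sigma_W^2$ (which follows from the bound $\alpha\le(1-\sigma_W^2)/(\rho\mu')$ in \eqref{eq:step}); hence $M$ is primitive and Perron--Frobenius provides a simple eigenvalue $\rho(M)$ with a strictly positive eigenvector $\mathbf{v}$. To bound $\rho(M)$ I would exhibit a strictly positive $\mathbf{w}=(w_1,w_2,w_3)$ with $M\mathbf{w}\le\big(1-\tfrac{\alpha\rho\mu'}{8}\big)\mathbf{w}$, which forces $\rho(M)\le\max_i (M\mathbf{w})_i/w_i\le 1-\tfrac{\alpha\rho\mu'}{8}$. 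Writing $m_{ij}$ for the entries of $M$ and $\beta=1-\tfrac{\alpha\rho\mu'}{8}$, and fixing $w_3=1$: the third row caps $w_2$ at $w_2\asymp\alpha\rho\mu'/(8m_{32})$, the second row then bounds $w_1$ from above, the first row bounds $w_1$ from below, and the compatibility of these bounds reduces to the single scalar inequality $8\,m_{21}m_{13}m_{32}\le\big[(\beta-m_{11})(\beta-m_{22})-m_{21}m_{12}\big]\alpha\rho\mu'$. Verifying it is where the step-size conditions of \eqref{eq:step} enter: $\alpha\le(1-\sigma_W^2)/(\rho\mu')$ keeps $\beta-m_{11}$ and $\beta-m_{22}$ of order $1-\sigma_W^2$, $\alpha\le(1-\sigma_W^2)/(16L'\Theta)$ controls the $4L'^2\alpha^2\Theta$ correction and the cross term $m_{21}m_{12}$, and the two remaining bounds (through $\rho/L'$ and $\Phi$) are calibrated so that the quartic-in-$\alpha$ left-hand side is dominated by the linear-in-$\alpha$ right-hand side.

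For (ii), I would induct on $k$ with the joint invariant $\xi_k\le\tfrac{n\mu'\rho^2\delta^2}{8}\mathbf{v}$, $\Bar{x}_k\in\St(d,r)^{\epsilon/2}$, and $\normfro{\X_k-\Bar{\X}_k}\le\epsilon/10$. The base case holds by hypothesis together with $\X_0=\Bar{\X}_0$ and $\Y_0=0$ from the initialization in Algorithm~\ref{alg:distributed_update}. For the inductive step, the last two parts of the invariant propagate directly through Proposition~\ref{prop:safe_dist} (since $\alpha\le\alpha_{safe}$), and $\St(d,r)^{\epsilon/2}\subset\St(d,r)^{\epsilon}$ together with $\Bar{x}_{k-1}\in\mathcal{D}(\mathcal{S},\delta)$ (established below) makes all three lemmas applicable, so $\xi_k\le M\xi_{k-1}\le\tfrac{n\mu'\rho^2\delta^2}{8}M\mathbf{v}=\tfrac{n\mu'\rho^2\delta^2}{8}\rho(M)\mathbf{v}\le\tfrac{n\mu'\rho^2\delta^2}{8}\mathbf{v}$; reading the third coordinate gives $\cL(\Bar{x}_k)-\cL_{\mathcal{S}}^*\le\tfrac{\mu'\rho^2\delta^2}{8}v_3$, and the PŁ--QG estimate of Lemma~\ref{lem:KL-QG} (valid in the slightly larger neighborhood reached in one safe step, by Assumption~\ref{assump:PL}) yields $\dist(\mathcal{S},\Bar{x}_k)^2\le\tfrac{4}{\mu'\rho^2}\big(\cL(\Bar{x}_k)-\cL_{\mathcal{S}}^*\big)\le\delta^2$ after normalizing $\mathbf{v}$ so that $v_3\le 2$, i.e.\ $\Bar{x}_k\in\mathcal{D}(\mathcal{S},\delta)$. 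This closes the induction; iterating the positive-eigenvector bound then gives $\xi_k\le\tfrac{n\mu'\rho^2\delta^2}{8}\rho(M)^k\mathbf{v}$, so the state converges linearly at rate $\rho(M)\le 1-\tfrac{\alpha\rho\mu'}{8}$.

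The main obstacle I expect is part (i): building the test vector and checking that each of the four step-size conditions in \eqref{eq:step} supplies exactly the slack required by the scalar compatibility inequality is a delicate bookkeeping task, and the choice $\beta=1-\tfrac{\alpha\rho\mu'}{8}$ — strictly smaller than the diagonal entry $m_{33}=1-\tfrac{\alpha\rho\mu'}{4}$ — is precisely what creates the room to absorb the off-diagonal coupling. A secondary subtlety is that the induction in (ii) must be propagated along the Perron eigenvector $\mathbf{v}$ rather than coordinate-by-coordinate (since $M$ is not diagonal, the three error quantities cannot be bounded separately), which is where the spectral estimate and the geometric ``stay near a minimizer'' argument via Lemma~\ref{lem:KL-QG} have to be intertwined.
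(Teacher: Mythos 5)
Your proposal follows essentially the same route as the paper: assemble $M$ row-by-row from Lemmas \ref{lem:bound_y_PL}, \ref{lem:bound_x} and \ref{lem:dist_V}, bound $\rho(M)$ by exhibiting a positive test vector $\mathbf{w}$ with $M\mathbf{w}\le(1-\tfrac{\alpha\rho\mu'}{8})\mathbf{w}$ and invoking Lemma \ref{lem:spectral_rad} (the paper normalizes $\delta_2=1$, $\delta_3=\Phi$ and solves the rows in the order third--first--second, which is exactly your compatibility computation), and close with an induction that keeps $\Bar{x}_k$ in $\mathcal{D}(\mathcal{S},\delta)\cap\St(d,r)^{\epsilon/2}$ via Proposition \ref{prop:safe_dist} and Lemma \ref{lem:KL-QG}. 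The one place where you should adopt the paper's ordering is the neighborhood step: applying Lemma \ref{lem:KL-QG} directly at $\Bar{x}_k$ to conclude $\Bar{x}_k\in\mathcal{D}(\mathcal{S},\delta)$ is circular as stated (the lemma already requires membership in $\mathcal{D}(\mathcal{S},\delta)$), and your parenthetical appeal to a ``slightly larger neighborhood'' is not quantified; the paper instead applies QG only at $\Bar{x}_{k-1}$ (covered by the induction hypothesis), obtains $\dist(\mathcal{S},\Bar{x}_{k-1})\le\delta/\sqrt{2}$, and propagates to $\Bar{x}_k$ through the explicit one-step bound $\dist(\mathcal{S},\Bar{x}_k)\le 1.25\,\dist(\mathcal{S},\Bar{x}_{k-1})+0.25\,n^{-1/2}\normfro{\X_{k-1}-\Bar{\X}_{k-1}}$, where the $1/\sqrt{2}$ slack absorbs the factor $1.25$.
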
  
\begin{figure*}[ht]
    \centering
    \centerline{\includegraphics[width=1.2\textwidth]{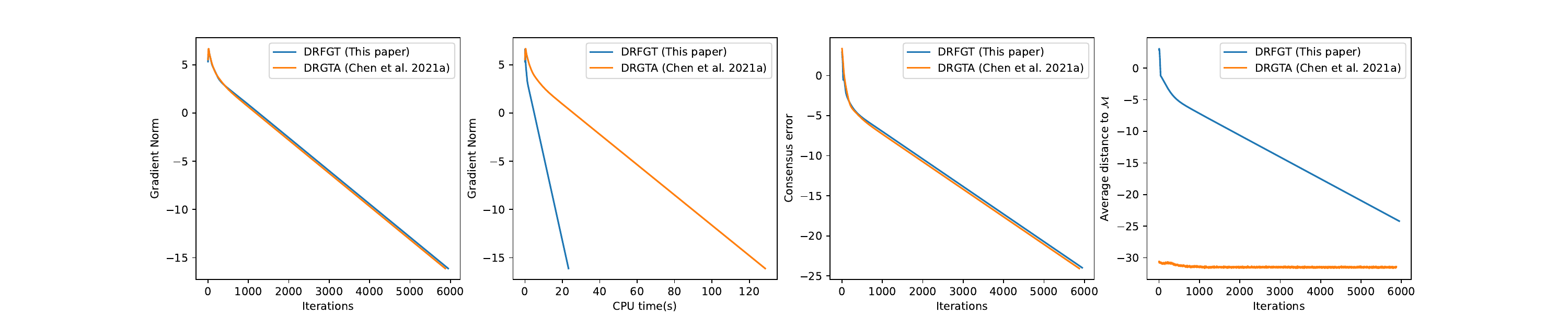}}
    \caption{Convergence of DRFGT (our work) compared to DRGTA in \cite{chen2021decentralized}.}
    \label{fig:line_compare}
\end{figure*}


We refer to the Appendix for the proof and analysis. Apart from the safety constraints on $\alpha$, to ensure that the system converges linearly, 
$\alpha$ also depends on several factors including the PŁ condition factor $\mu'$ and Lipschitz smoothness factor $L'$. 
The rate also depends on the graph connectivity, where the convergence becomes slower as $\sigma_W$ tends to one. 
The optimal step size in Euclidean GD is known to be $\alpha = \mathcal{O} (1/L).$ 
This is mostly true in our analysis as the step size is always $\mathcal{O}(1/L')$; 
however, if the dominating factor in \eqref{eq:step} is the term depending on $\Phi$, we will have $\alpha $ scaling as $\mathcal{O}(\kappa^{-1/4}/L') $, which also recovers the best previously known result on Euclidean distributed non-convex optimization \cite{xin2021improved}.
On the other hand, our safety step size scales as 
$\alpha_{safe} = \mathcal{O}(1/\sqrt{n}L')$, an improvement compared to DESTINY in \cite{wang2022decentralized}, where 
$\alpha_{safe} \leq \min\{\mathcal{O}(L^{-4}), \mathcal{O}(L^{-2}/n)\}$. Our tighter bounds on the step size are partly due to our choice of the landing field and the merit function, between which various relationships were identified (see Lemmas \ref{lem:pseudo-grad-dominate}, \ref{lem:bound_C}, \ref{lem:descent_lemma}).

\begin{figure*}[t]
    \centering
    \subfigure[Convergence vs. epoch.]{
        \includegraphics[width=0.29\textwidth]{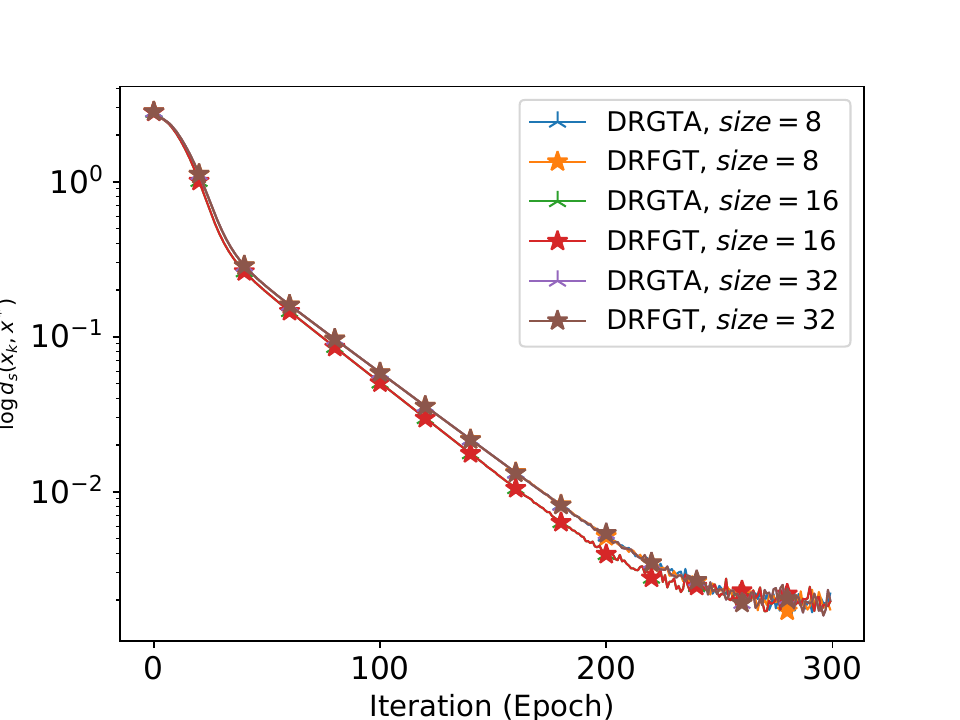}
        \label{fig:mnist_converge}
        }
    \subfigure[Convergence vs. CPU time.]{
        \includegraphics[width=0.29\textwidth]{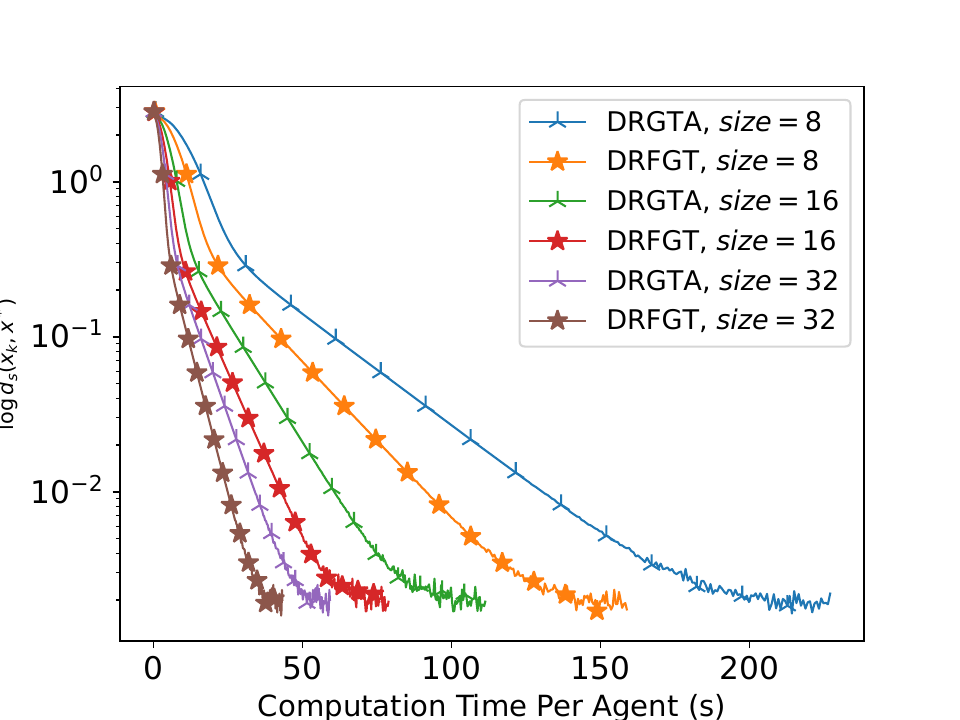}
        \label{fig:mnist_n}
        }
    \subfigure[Run time with different values of rank $r$.]{
    \includegraphics[width=0.35\textwidth]{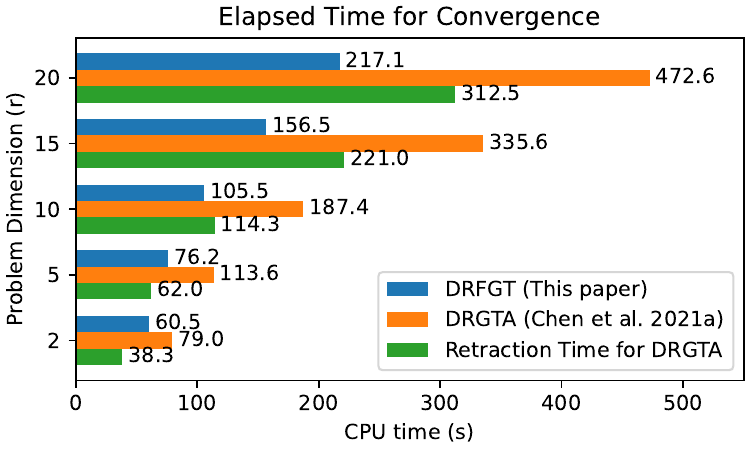}
    \label{fig:mnist_run_time}
    }
    \caption{Experiment with MNIST data.}
\end{figure*}

\section{Numerical Results}   
For the experiments, we evaluate DRFGT on a traditional PCA task with both synthetic and real-world data.

 \subsection{PCA Experiment with Synthetic Data}\label{sec:syn}
We study the decentralized PCA problem, also evaluated by \cite{chen2021decentralized}. The problem is defined as
\begin{equation}
\begin{aligned}\label{eq:numerical}
    &\min_{x_1, \ldots, x_n \in \R^{d\times r}}\left\{ \frac{1}{n}\sum_{i \in [n]} \langle A_i^\top A_i x_i,x_i D\rangle \right\} \\
    &\st \ \  x_1 = \ldots = x_n, \ \  x_i \in \St(d,r),\ \  \forall i \in [n],
\end{aligned}
\end{equation}
where $A_i$ is the individual data matrix for agent $i$, and $D$ is a  diagonal matrix with $[D]_{11}>\cdots>[D]_{rr}>0$. The dimension of the $A_i$ matrix is set to $m_i \times d$, where $d$ is the dimension of data and $m_i$ is the number of data samples assigned to agent $i$. Each entry of $A_i$ is first generated independently from a normal distribution, then the eigenvalues of $ A_i^\top A_i$ are manually adjusted to get a larger condition number to increase the problem difficulty. The network has $10$ agents, a ring structure, and $W$ has diagonal elements $W_{ii} = 0.8$ and off-diagonal elements $W_{ij} = 0.1.$
Each agent $i$ stores $m_i=m=1000$ total data points with a data dimension of $d=100$. We set the rank number $r = 10$ and the step size $\alpha=\frac{1}{10m} = 10^{-4}$. Lastly, $\lambda$ is set to be $0.1/\alpha$.

We compare the performance of Algorithm \ref{alg:distributed_update} with DRGTA, the retraction-based algorithm in \cite{chen2021decentralized} with the same initial point $x_0$, graph structure $W$, and step size $\alpha$. 
We plot the evolution of gradient norm $\normfro{\sum_{i\in [n]} \grad f_i(x_i)}$, consensus error $\sum_{i\in [n]}\normfro{x_i - \Bar{x}}$, as well as the average distance to $\St(d,r)$, $\frac{1}{n}\sum_{i\in [n]}\normfro{x_i - x_i'}$, with respect to different iterations as well as CPU time. 
The results are provided on log-scale in Fig. \ref{fig:line_compare}: (i) In terms of iterations, the DRFGT algorithm closely matches the performance of DRGTA in gradient norm and consensus error. (ii) The DRFGT algorithm converges much faster than DRGTA in CPU time and is more computationally efficient. (iii) Though DRFGT does not strictly satisfy the feasibility constraints of Problem \eqref{eq:dist_problem}, it eventually converges to a feasible critical point on $\St(d,r)$ and catches up with DRGTA.

\subsection{PCA Experiment with Real-World Data}\label{sec:real}
 We also consider the PCA task defined in \eqref{eq:numerical} on the MNIST dataset \cite{lecun1998gradient} to show the effectiveness of DRFGT. The MNIST dataset has dimension $d = 784$ and $60000$ data samples are evenly distributed across $n = \{8, 16, 32\}$ agents. We set $r = 5$ and the graph structure is set as a ring with the Metropolis constant weight matrix following \cite{chen2021decentralized}. In this task, we specifically highlight the effect of network size $n$ in the DRFGT algorithm through a multi-processing experimental setting (mpi4py).

We plot the distance to optimality $\log(d(\bar{x}, x^*))$ versus the epoch number and computation time. The results are provided in Fig. \ref{fig:mnist_converge} and \ref{fig:mnist_n}. For each specific choice of $n$, DRFGT converges significantly faster than DRGTA in CPU time. The performance with respect to epoch is similar across different $n$, which is expected. In addition, by increasing the number of agents in the network, computation time (per agent) can be accelerated by a nearly linear ratio, known as the linear speed-up effect \cite{lian2017can}. We also plot the run-time comparison for different values of $r$ in Fig. \ref{fig:mnist_run_time}, explicitly showing the computation time saved by DRFGT by avoiding retractions.

\section{Conclusion}
We proposed DRFGT to solve distributed non-convex optimization problems under orthogonal constraints and provided a safety step size guarantee to ensure DRFGT remains in the neighborhood of the Stiefel manifold. We proved that the convergence rate for DRFGT is $\mathcal{O}(1/K)$, the first convergence result for the decentralized landing algorithm. 
When the network function further satisfies a local Riemannian PŁ condition, we established a local linear convergence, the first-ever linear result among distributed Riemannian optimization algorithms.
Numerical results were provided as validation of our analysis. An interesting future direction is to extend this algorithm beyond Stiefel manifolds, similar to \cite{xiao2024dissolving}.

\section*{Appendix}
\subsection{Useful Lemmas and Inequalities}
In this section, we present some fundamental inequalities and lemmas, which are useful for the technical analysis. We first gather the following facts in one place.
\begin{lemma}[\cite{ablin2023infeasible}]
For any given $x \in \St(d,r)^\epsilon$, the singular values of $x$ are between $\sqrt{1-\epsilon}$ and $\sqrt{1+\epsilon}$.
\end{lemma}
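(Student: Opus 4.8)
The plan is to translate the Frobenius-norm constraint defining $\St(d,r)^\epsilon$ into a coordinate-wise bound on the eigenvalues of the Gram matrix $x^\top x$, which are exactly the squared singular values of $x$. Let $\sigma_1 \geq \cdots \geq \sigma_r \geq 0$ denote the singular values of $x \in \R^{d\times r}$. Since $x^\top x$ is a symmetric positive semidefinite $r \times r$ matrix whose eigenvalues are precisely $\sigma_1^2, \ldots, \sigma_r^2$, the matrix $x^\top x - I_r$ is symmetric with eigenvalues $\sigma_i^2 - 1$ for $i \in [r]$.

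First I would use the fact that, for a symmetric matrix, the squared Frobenius norm equals the sum of squared eigenvalues. Applying this to $x^\top x - I_r$ and invoking the defining inequality $\normfro{x^\top x - I_r} \leq \epsilon$ yields
\begin{equation*}
    \sum_{i=1}^r (\sigma_i^2 - 1)^2 = \normfro{x^\top x - I_r}^2 \leq \epsilon^2.
\end{equation*}
Because every summand is nonnegative, each individual term is bounded by the whole sum, so $(\sigma_i^2 - 1)^2 \leq \epsilon^2$ for every $i$, which gives $|\sigma_i^2 - 1| \leq \epsilon$. Rearranging, $1 - \epsilon \leq \sigma_i^2 \leq 1 + \epsilon$ for all $i$.

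Finally, I would take square roots. This step is legitimate because $\epsilon \in (0, \tfrac{3}{4})$ guarantees $1 - \epsilon > 0$, so the lower bound $\sigma_i^2 \geq 1 - \epsilon$ is a bound by a strictly positive quantity and the map $t \mapsto \sqrt{t}$ is monotone on $[0,\infty)$. This delivers $\sqrt{1-\epsilon} \leq \sigma_i \leq \sqrt{1+\epsilon}$ for all singular values, as claimed. There is no genuine obstacle here: the argument is an elementary spectral fact, and the only point requiring care is confirming positivity of $1 - \epsilon$ so that the square root of the lower bound is real-valued, which is exactly what the restriction $\epsilon < 3/4$ supplies.
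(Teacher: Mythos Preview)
Your argument is correct. The paper does not actually supply its own proof of this lemma; it simply cites the result from \cite{ablin2023infeasible} and moves on, noting only that the looser bounds $1-\epsilon \leq \normtwo{x}\leq 1+\epsilon$ are often used. Your spectral argument---expressing $\normfro{x^\top x - I_r}^2$ as $\sum_i(\sigma_i^2-1)^2$ and bounding each term individually---is the standard elementary proof and fills in exactly what the paper omits.
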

In our proofs, we often times use the looser bounds $1-\epsilon \leq \normtwo{x}\leq 1+\epsilon$ to simplify the analysis.
\begin{lemma}\label{lemma:fact}
The landing field $\Lambda(x)$, defined in \eqref{eq:landing_field}, is Lipschitz continuous with a factor $L_\Lambda \leq L'$ and has orthogonal components, i.e., $\langle \grad f(x), \nabla p(x)\rangle=0$. The same hold for the local landing field $\Lambda_i(x)$ defined with respect to local Riemannian gradient $\grad f_i(x)$.
\end{lemma}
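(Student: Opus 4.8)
The plan is to handle the two assertions separately, beginning with the orthogonality since it is purely algebraic, then establishing Lipschitz continuity, and finally observing that the local field $\Lambda_i$ inherits both properties verbatim.

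\textbf{Orthogonality.} First I would write the two components explicitly. From \eqref{eq:relative-gradient} we have $\grad f(x) = Sx$ with $S \triangleq \sk(\nabla f(x)x^\top)$ skew-symmetric, and differentiating $p(x) = \tfrac14\normfro{x^\top x - I_r}^2$ gives $\nabla p(x) = x(x^\top x - I_r)$, where $B \triangleq x^\top x - I_r$ is symmetric. Using the convention $\langle A,B\rangle = \Tr(AB^\top)$ together with $B^\top = B$, I would compute
$$\langle \grad f(x), \nabla p(x)\rangle = \Tr\!\big(Sx\,(xB)^\top\big) = \Tr\!\big(S\,xBx^\top\big).$$
The key observation is that $xBx^\top = (xx^\top)^2 - xx^\top$ is symmetric, so the argument of the trace is the product of a skew-symmetric matrix $S$ with a symmetric matrix $M \triangleq xBx^\top$. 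Since $\Tr(SM) = \Tr((SM)^\top) = \Tr(M^\top S^\top) = -\Tr(MS) = -\Tr(SM)$, this trace vanishes, yielding $\langle \grad f(x), \nabla p(x)\rangle = 0$.

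\textbf{Lipschitz continuity.} Here I would bound the two summands of $\Lambda = \grad f + \lambda\nabla p$ on the safety region $\St(d,r)^\epsilon$, which is bounded because the preceding singular-value lemma gives $\normtwo{x}\leq\sqrt{1+\epsilon}$ there. For the penalty term, $x\mapsto x(x^\top x - I_r)$ is a fixed polynomial map in the entries of $x$, hence Lipschitz on any bounded set via the elementary product rule for Lipschitz-and-bounded maps. For the relative-gradient term $\sk(\nabla f(x)x^\top)x$, I would use that $\nabla f$ is $L$-Lipschitz (Assumption \ref{assump:Lip}) and bounded by $\hat L$ on $\St(d,r)^\epsilon$, that $\normtwo{x}$ is bounded, and that $\sk(\cdot)$ is a $1$-Lipschitz linear map; combining these through the same product rule yields a finite Lipschitz constant for $x\mapsto\grad f(x)$. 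Adding the two contributions shows $\Lambda$ is Lipschitz continuous, and I would denote its constant by $L_\Lambda$. Since $L'$ is defined in the excerpt as $L' = \max\{\hat L, L_\cL, L_\Lambda\}$, the bound $L_\Lambda \leq L'$ is then immediate.

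\textbf{Local field and main obstacle.} For $\Lambda_i$ I would repeat both arguments with $f_i$ in place of $f$: orthogonality is structural and uses only that $\grad f_i(x)$ has the form $\sk(\nabla f_i(x)x^\top)x$, independent of any property of $f_i$; and the Lipschitz estimate carries over because each $f_i$ is $L$-smooth with the same bounds on $\St(d,r)^\epsilon$. The only point requiring care---rather than a genuine obstacle---is tracking the product-rule constants so that the relative-gradient term's estimate is made explicit on the bounded domain. Since the statement asserts only the existence of $L_\Lambda$ together with $L_\Lambda \leq L'$, which holds by the definition of $L'$ as a maximum, these constants need not be optimized.
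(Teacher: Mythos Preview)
Your proposal is correct and supplies exactly the standard argument that the paper omits: the paper does not prove this lemma at all but simply cites \cite{ablin2023infeasible} for $\Lambda(x)$ and calls the extension to $\Lambda_i(x)$ trivial. Your orthogonality computation via $\Tr(SM)=0$ for skew $S$ and symmetric $M=xBx^\top$, the Lipschitz bound on the bounded region $\St(d,r)^\epsilon$ through the product rule, and the observation that $L_\Lambda\le L'$ is tautological from the definition $L'=\max\{\hat L,L_\cL,L_\Lambda\}$ are all in order and constitute the expected proof.
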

The proof is standard and can be found in \cite{ablin2023infeasible} for $\Lambda(x)$, and the extension to local landing field is trivial.


\begin{lemma}\label{lemma:orthogonal}
    For any given $x \in \St(d,r)^\epsilon$, $x' = \Proj_{\St}(x)$, we have the following relationship
    $$
    \langle \nabla \cL(x'), x-x'\rangle=0.
    $$
\end{lemma}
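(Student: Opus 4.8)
The plan is to exploit the fact that $x' = \Proj_{\St}(x)$ is the closest point on the Stiefel manifold to $x$, so the displacement vector $x - x'$ is normal to $\St(d,r)$ at $x'$, while $\nabla \cL(x')$ is tangent to $\St(d,r)$ at $x'$ (by the formula \eqref{eq:nablaL}); orthogonality of tangent and normal spaces then finishes the argument.

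First I would make precise the claim that $x - x'$ lies in the normal space $N_{x'}\St(d,r)$. Since $x'$ minimizes $\normfro{x - y}$ over $y \in \St(d,r)$, first-order optimality of this constrained projection gives $\langle x - x', v\rangle = 0$ for every $v \in T_{x'}\St(d,r)$; equivalently, $x - x' \in N_{x'}\St(d,r)$. The normal space at a point $z \in \St(d,r)$ is well known to be $N_z\St(d,r) = \{ z S : S = S^\top \in \R^{r\times r}\}$, so there exists a symmetric matrix $S$ with $x - x' = x' S$. (One can also derive this directly: differentiating the constraint $y^\top y = I_r$ shows $T_z\St(d,r) = \{\eta : \eta^\top z + z^\top \eta = 0\}$, i.e.\ tangent vectors are those $\eta$ with $z^\top\eta$ skew; its orthogonal complement under $\langle\cdot,\cdot\rangle$ is exactly $\{zS : S^\top = S\}$.)

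Next I would recall from \eqref{eq:nablaL} that for any $z \in \St(d,r)$,
\begin{equation*}
    \nabla \cL(z) = \nabla f(z) - z\,\sym(z^\top \nabla f(z)),
\end{equation*}
and observe that this is a tangent vector at $z$: indeed $z^\top \nabla \cL(z) = z^\top \nabla f(z) - (z^\top z)\sym(z^\top\nabla f(z)) = z^\top\nabla f(z) - \sym(z^\top\nabla f(z)) = \sk(z^\top\nabla f(z))$, which is skew-symmetric, so $\nabla\cL(z) \in T_z\St(d,r)$. Applying this with $z = x'$ gives $\nabla\cL(x') \in T_{x'}\St(d,r)$. Combining with the previous paragraph,
\begin{equation*}
    \langle \nabla\cL(x'),\, x - x'\rangle = \langle \nabla\cL(x'),\, x' S\rangle = \Tr\!\big(\nabla\cL(x')\, S^\top x'^\top\big) = \Tr\!\big(x'^\top\nabla\cL(x')\, S\big) = \langle \sk(x'^\top\nabla f(x')),\, S\rangle = 0,
\end{equation*}
since the inner product of a skew-symmetric and a symmetric matrix vanishes. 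This gives the claim.

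The main obstacle — really the only non-routine point — is justifying that the projection residual $x - x'$ is genuinely normal to the manifold at $x'$, i.e.\ that the first-order condition for the constrained least-squares problem holds and that $\Proj_{\St}(x)$ is well-defined and smooth enough near the safety region. Since $x \in \St(d,r)^\epsilon$ with $\epsilon < 3/4$, $x$ is in a tubular neighborhood of the compact embedded submanifold $\St(d,r)$ where the nearest-point projection is single-valued and the residual is exactly the normal component; I would cite this standard tubular-neighborhood fact (or the explicit SVD formula for $\Proj_{\St}$) rather than reproving it. Everything else is the linear-algebra identity that tangent and normal spaces are $\langle\cdot,\cdot\rangle$-orthogonal, which I carried out above.
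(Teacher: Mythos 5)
Your proof is correct and follows essentially the same route as the paper: the paper invokes the SVD formula $x=USV^\top$, $x'=UV^\top$ (so that $x-x'=x'\cdot V(S-I_r)V^\top$ is $x'$ times a symmetric matrix) and leaves the "simple algebra" to the reader, which is exactly the skew-versus-symmetric trace computation you carry out. Your derivation of the normality of $x-x'$ via first-order optimality of the projection is just a coordinate-free restatement of what the SVD gives explicitly, so nothing further is needed.
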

\begin{proof}
If $x=USV^\top$ is the SVD decomposition for $x$, then $x'=UV^\top$. By observing the closed-form formula for $\nabla\cL(x')$ in \eqref{eq:nablaL} the rest of the proof is simple algebra, omitted due to space limitation.
\end{proof}

We then state the following lemma, which helps divide the gap between optimality and feasibility at each iteration. 
\begin{lemma}\label{lemma:proj_dist}
For any given $x \in \St(d,r)^\epsilon$, $x' = \Proj_{\St}(x)$, we have the following inequality
    \begin{equation}\label{ineq:two_distances}
          \normfro{x-x'} \leq \normfro{x^\top x-I_r}. 
    \end{equation}
\end{lemma}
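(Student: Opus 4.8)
\textbf{Proof plan for Lemma \ref{lemma:proj_dist}.}

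The plan is to reduce everything to a one-dimensional statement about singular values via the SVD. Write $x = USV^\top$ with $U \in \R^{d\times r}$ having orthonormal columns, $V \in \R^{r\times r}$ orthogonal, and $S = \mathrm{diag}(s_1,\dots,s_r)$ the singular values. By Lemma (the one preceding this statement), membership $x \in \St(d,r)^\epsilon$ forces each $s_i \in [\sqrt{1-\epsilon},\sqrt{1+\epsilon}]$, and in particular $s_i > 0$, so the projection is $x' = \Proj_{\St}(x) = UV^\top$. Then $x - x' = U(S - I_r)V^\top$, so $\normfro{x-x'}^2 = \sum_{i=1}^r (s_i-1)^2$. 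Likewise $x^\top x - I_r = V(S^2 - I_r)V^\top$, hence $\normfro{x^\top x - I_r}^2 = \sum_{i=1}^r (s_i^2-1)^2 = \sum_{i=1}^r (s_i-1)^2(s_i+1)^2$.

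With these two expressions in hand, the inequality \eqref{ineq:two_distances} follows term by term: for each $i$ we have $(s_i+1)^2 \ge 1$ because $s_i \ge 0$ (indeed $s_i \ge \sqrt{1-\epsilon} > 0$), so $(s_i-1)^2 \le (s_i-1)^2(s_i+1)^2$. Summing over $i$ and taking square roots gives $\normfro{x-x'} \le \normfro{x^\top x - I_r}$. I would present exactly this: invoke the singular value bound, write the SVD, express both Frobenius norms as sums over the $(s_i-1)$ terms, and apply the elementary scalar inequality $(s+1)^2 \ge 1$ for $s \ge 0$.

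The only point requiring a little care — the ``main obstacle,'' though it is mild — is justifying that $\Proj_{\St}(x) = UV^\top$; this is the standard fact that the nearest point on the Stiefel manifold to a full-rank matrix is obtained by replacing the singular values by ones, which holds precisely because all $s_i$ are strictly positive (guaranteed here by $\epsilon < 3/4 < 1$ and the preceding lemma). Everything else is the scalar estimate above, so I would state the SVD reduction and the term-wise bound and omit the routine algebra, consistent with the brevity of the surrounding lemmas.
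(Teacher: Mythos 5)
Your proof is correct: the SVD reduction, the identification of $\Proj_{\St}(x)$ with the polar factor $UV^\top$ (valid since all singular values are bounded below by $\sqrt{1-\epsilon}>0$), and the term-wise bound $(s_i-1)^2 \le (s_i-1)^2(s_i+1)^2$ are all sound. The paper itself gives no proof, deferring to an external reference, and your argument is the standard one that such a reference would contain, so there is nothing to contrast.
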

The proof can be found in Lemma 4 of \cite{chen2024tighterrorboundssignconstrained}.
We then present the following lemma, a well-known result from \cite{horn2012matrix}.

\begin{lemma}[Matrix Spectral Radius \cite{horn2012matrix}]\label{lem:spectral_rad}
Let $\mathbf{X} \in \mathbb{R}^{d\times d}$ be a non-negative matrix and $\X \in \R^d$ be a positive vector. If $\mathbf{X}\X < \X$, then $\rho(\mathbf{X})<1$. Furthermore, if $\mathbf{X}\X \leq z\X$ for some $z>0$, then $\rho(\mathbf{X}) \leq z$.
\end{lemma}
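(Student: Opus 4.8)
The plan is to establish the ``furthermore'' clause first, since the strict statement $\rho(\mathbf{X}) < 1$ will then follow from it by a simple finite-dimensional argument. The core idea is a diagonal similarity transformation that converts the vector inequality $\mathbf{X}\X \leq z\X$ into a uniform bound on the row sums, after which the spectral radius is controlled by an induced matrix norm. Throughout I write $x_i > 0$ for the $i$-th entry of the positive vector $\X$ and $X_{ij} \geq 0$ for the entries of the non-negative matrix $\mathbf{X}$.

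First I would let $D \triangleq \mathrm{diag}(\X)$ be the diagonal matrix whose $i$-th diagonal entry is $x_i$. Since $\X$ is entry-wise positive, $D$ is invertible, so $\tilde{\mathbf{X}} \triangleq D^{-1}\mathbf{X}D$ is similar to $\mathbf{X}$ and therefore $\rho(\tilde{\mathbf{X}}) = \rho(\mathbf{X})$. The key computation is that the $i$-th row sum of $\tilde{\mathbf{X}}$ equals $\frac{1}{x_i}\sum_j X_{ij}x_j = \frac{1}{x_i}(\mathbf{X}\X)_i$, which by the hypothesis $\mathbf{X}\X \leq z\X$ is at most $z$. Hence every row sum of the non-negative matrix $\tilde{\mathbf{X}}$ is bounded by $z$, i.e. $\|\tilde{\mathbf{X}}\|_\infty \leq z$, where $\|\cdot\|_\infty$ denotes the induced (maximum absolute row sum) norm.

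Next I would invoke the standard fact that the spectral radius is dominated by any submultiplicative matrix norm, in particular any induced norm, so that $\rho(\mathbf{X}) = \rho(\tilde{\mathbf{X}}) \leq \|\tilde{\mathbf{X}}\|_\infty \leq z$. This settles the second claim. For the first claim, suppose $\mathbf{X}\X < \X$ strictly. Because $\X$ has strictly positive entries and there are only finitely many coordinates, the scalar $z \triangleq \max_i (\mathbf{X}\X)_i / x_i$ is a maximum of finitely many numbers each strictly below one, hence $z < 1$; moreover $\mathbf{X}\X \leq z\X$ holds by the very definition of $z$. Applying the clause just proved yields $\rho(\mathbf{X}) \leq z < 1$.

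I do not expect any serious obstacle here; the argument is essentially bookkeeping. The only two points requiring care are the invertibility of $D$, which is guaranteed by the strict positivity of $\X$, and the finiteness of the index set in the strict case, which ensures that the maximum of the coordinate ratios is attained and remains strictly below one. This last point is exactly where finite dimensionality is used: in an infinite-dimensional setting the supremum of ratios each less than one could equal one, and the implication $\mathbf{X}\X < \X \Rightarrow \rho(\mathbf{X}) < 1$ would fail, but for $\mathbf{X}\in\R^{d\times d}$ it is immediate. As an alternative route, one could instead extract a non-negative left Perron eigenvector $\mathbf{w}$ of $\mathbf{X}$ with $\mathbf{w}^\top\mathbf{X} = \rho(\mathbf{X})\mathbf{w}^\top$, multiply $\mathbf{X}\X \leq z\X$ on the left by $\mathbf{w}^\top \geq 0$, and divide by the positive scalar $\mathbf{w}^\top\X$; I prefer the similarity argument above as it is self-contained and avoids appealing to the Perron--Frobenius eigenvector.
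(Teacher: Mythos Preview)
Your argument is correct. The paper does not actually supply a proof of this lemma; it simply cites it as a well-known result from Horn and Johnson's \emph{Matrix Analysis} and moves on. Your diagonal-similarity reduction to the $\|\cdot\|_\infty$ norm is the standard elementary proof and is fully rigorous as written, including the finite-dimensionality remark that secures the strict inequality in the first clause.
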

Another useful lemma is the following.
\begin{lemma}\label{lemma:aux}
For any point $x\in \St(d, r)$, we have that
\begin{equation}
    \normfro{\nabla \cL(x)} \leq 2\normfro{\Lambda(x)}.
\end{equation}
\end{lemma}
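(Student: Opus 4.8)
The plan is to exploit the closed-form expression \eqref{eq:nablaL} for $\nabla\cL(x)$ on the manifold together with the definition \eqref{eq:landing_field} of $\Lambda(x)$ and the orthogonal decomposition of the Riemannian gradient. First I would observe that since $x\in\St(d,r)$ we have $x^\top x = I_r$, so the penalty gradient vanishes, $\nabla p(x)=0$, and hence $\Lambda(x)=\grad f(x)=\sk(\nabla f(x)x^\top)x$. Thus the statement reduces to showing $\normfro{\nabla\cL(x)}\leq 2\normfro{\grad f(x)}$ for $x$ on the manifold.

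Next I would expand both quantities in terms of the Euclidean gradient. Writing $A = \nabla f(x)x^\top$, we have $\grad f(x) = \sk(A)x = \tfrac12(A-A^\top)x$, while from \eqref{eq:nablaL}, $\nabla\cL(x) = \nabla f(x) - x\,\sym(x^\top\nabla f(x)) = Ax - x\cdot\tfrac12(x^\top\nabla f(x) + \nabla f(x)^\top x)$; using $x^\top x = I_r$ one recognizes $x^\top\nabla f(x) = x^\top A^\top x$ ... more cleanly, multiply through: since $x$ has orthonormal columns, $\normfro{Bx}=\normfro{B}$ is false in general, but $\normfro{Bx}\le\normfro{B}\normtwo{x}=\normfro{B}$ and also $\normfro{xC}=\normfro{C}$ for any conformable $C$ because $x$ is an isometry on columns. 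The key algebraic identity I would establish is that $\nabla\cL(x)$, restricted to the manifold, is exactly the projection of $\nabla f(x)$ onto the tangent space of $\St(d,r)$ under the \emph{Euclidean} metric, namely $\nabla\cL(x) = \nabla f(x) - x\,\sym(x^\top\nabla f(x))$, and that this tangent vector can be written as $\sk(\nabla f(x)x^\top)x + (I - xx^\top)\nabla f(x)$. Then $\normfro{\nabla\cL(x)}^2 = \normfro{\grad f(x)}^2 + \normfro{(I-xx^\top)\nabla f(x)}^2$ by orthogonality of the two summands. It remains to bound the normal-ish component $\normfro{(I-xx^\top)\nabla f(x)}$ by $\normfro{\grad f(x)}$, or more crudely by a constant multiple, to get the factor $2$.

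The main obstacle is precisely that last bound: controlling $\normfro{(I-xx^\top)\nabla f(x)}$ in terms of $\normfro{\grad f(x)}$. This is where I expect to invoke the structure of the Riemannian gradient under the \emph{canonical} metric used in \eqref{eq:relative-gradient}, rather than the Euclidean one — note the paper defines $\grad f(x)=\sk(\nabla f(x)x^\top)x$, which for $x\in\St(d,r)$ already equals $\Proj_{T_x}\nabla f(x)$ in the canonical metric and in fact captures \emph{both} the skew part and, implicitly, handles the component orthogonal to the column space. I would carefully recompute: for $x\in\St(d,r)$, $\sk(\nabla f(x)x^\top)x = \tfrac12(\nabla f(x)x^\top - x\nabla f(x)^\top)x = \tfrac12\nabla f(x) - \tfrac12 x\nabla f(x)^\top x + \tfrac12\nabla f(x)xx^\top\cdot 0$... working this out shows $\grad f(x) = \nabla f(x) - \tfrac12 x(\nabla f(x)^\top x + x^\top\nabla f(x)) = \nabla f(x) - x\,\sym(x^\top\nabla f(x)) = \nabla\cL(x)$ when $\normtwo{x}$-terms collapse on the manifold — so actually $\nabla\cL(x)$ and $\grad f(x)$ may coincide up to lower-order corrections, giving the factor $2$ with substantial room to spare. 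I would verify this identity directly and, if an exact equality $\nabla\cL(x)=\grad f(x)$ holds on $\St(d,r)$, the lemma follows immediately (even with constant $1$); otherwise the residual terms are easily bounded by $\normfro{\grad f(x)}$ using $\normtwo{x}=1$ on the manifold, completing the proof.
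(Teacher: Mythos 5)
Your reduction to $\normfro{\nabla\cL(x)}\le 2\normfro{\grad f(x)}$ on the manifold is fine, but the algebra you build on it contains two false identities, and the one step you flag as the "main obstacle" is left unproved. First, $\nabla\cL(x)\neq\grad f(x)$ on $\St(d,r)$ when $r<d$: with the paper's canonical-metric convention, $\grad f(x)=\sk(\nabla f(x)x^\top)x=\tfrac12\nabla f(x)-\tfrac12 x\nabla f(x)^\top x$, whereas $\nabla\cL(x)=\nabla f(x)-x\,\sym(x^\top\nabla f(x))$, and a direct computation gives $\nabla\cL(x)-\grad f(x)=\tfrac12(I_d-xx^\top)\nabla f(x)$, which is nonzero whenever $\nabla f(x)$ has a component outside the column space of $x$. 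Second, your decomposition $\nabla\cL(x)=\sk(\nabla f(x)x^\top)x+(I-xx^\top)\nabla f(x)$ has the wrong coefficient (it should be $\tfrac12$ on the second term), and the two summands are \emph{not} orthogonal: $\langle\grad f(x),(I-xx^\top)\nabla f(x)\rangle=\tfrac12\normfro{(I-xx^\top)\nabla f(x)}^2$, so your claimed Pythagorean identity for $\normfro{\nabla\cL(x)}^2$ fails. Third, the bound $\normfro{(I-xx^\top)\nabla f(x)}\le 2\normfro{\grad f(x)}$ — which is exactly what the factor $2$ hinges on — is asserted as "easily bounded using $\normtwo{x}=1$" but never derived; it is true, because $2\grad f(x)=(I-xx^\top)\nabla f(x)+2x\,\sk(x^\top\nabla f(x))$ is a genuinely orthogonal sum (here $x^\top(I-xx^\top)=0$ does the work), but that observation is the substance of the lemma, not a triviality.

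For comparison, the paper avoids any decomposition: it writes $\nabla\cL(x)=(I_d-\tfrac12 xx^\top)\nabla f(x)-\tfrac12 x\nabla f(x)^\top x$ and $\Lambda(x)=\tfrac12\nabla f(x)-\tfrac12 x\nabla f(x)^\top x$, expands both squared Frobenius norms as traces, and checks that $4\normfro{\Lambda(x)}^2-\normfro{\nabla\cL(x)}^2$ collapses to $\tfrac34\,Tr\bigl(\nabla f(x)^\top xx^\top\nabla f(x)+x^\top\nabla f(x)\nabla f(x)^\top x-\nabla f(x)^\top x\nabla f(x)^\top x-x^\top\nabla f(x)x^\top\nabla f(x)\bigr)\ge 0$, the sign following from $Tr(AA^\top)\ge Tr(AA)$ for square $A$. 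Your route can be repaired — replace the false identities with the correct ones above and supply the orthogonal-sum argument for the residual bound — but as written the proof does not go through.
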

\begin{proof}
We know from \eqref{eq:nablaL} that
$$\nabla \cL(x) = (I_d - \frac{1}{2} x x^\top) \nabla f(x) - \frac{1}{2} x \nabla f(x)^\top x,$$
which implies that
\begin{align*}
    \normfro{\nabla \cL(x)}^2 
    &= Tr\Big(\nabla f(x)^\top (I_d - \frac{3}{4} x x^\top)\nabla f(x) \\
    &+ \frac{1}{4} x^\top \nabla f(x) \nabla f(x)^\top x - \frac{1}{4} \nabla f(x)^\top x\nabla f(x)^\top x \\
    &- \frac{1}{4}x^\top \nabla f(x) x^\top \nabla f(x)\Big).
\end{align*}
And for the landing field, we have 
$$\Lambda(x) = \grad f(x) = \frac{1}{2} \nabla f(x) - \frac{1}{2} x \nabla f(x)^\top x,$$
which gives
\begin{align*}
    \normfro{\Lambda(x)}^2 
    &= Tr\Big(\frac{1}{4} \nabla f(x)^\top \nabla f(x)+ \frac{1}{4} x^\top \nabla f(x) \nabla f(x)^\top x \\
    &- \frac{1}{4} \nabla f(x)^\top x\nabla f(x)^\top x - \frac{1}{4} x^\top \nabla f(x) x^\top \nabla f(x)\Big).
\end{align*}
We then have the result as
\begin{align*}
     &4 \normfro{\Lambda(x)}^2 - \normfro{\nabla \cL(x)}^2\\
    =& Tr\Big( \frac{3}{4}\nabla f(x)^\top  x x^\top\nabla f(x) +  \frac{3}{4} x^\top \nabla f(x) \nabla f(x)^\top x \\
    &- \frac{3}{4}  \nabla f(x)^\top x\nabla f(x)^\top x - \frac{3}{4} x^\top \nabla f(x) x^\top \nabla f(x) \Big)\geq 0,
\end{align*}
where the inequality is due to the fact that $Tr(AA^\top) \geq Tr(AA)$ for a square matrix $A$.
\end{proof}

\begin{lemma}[Lipschitz Continuity of $\grad f$]\label{lem:L of gradf} For $x\in \St(d,r)^\epsilon$ and $x'=\Proj_{\St}(x)$, it follows that
    \begin{equation}
        \normfro{\grad f(x)- \grad f(x')}\leq (3+2\epsilon)\hat{L} \normfro{x-x'},
    \end{equation}
    where $\hat{L}$ is defined in \eqref{ineq:gamma}. 
\end{lemma}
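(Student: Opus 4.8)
The plan is to reduce the claim to the Euclidean Lipschitz continuity of $\nabla f$ (which follows from Assumption~\ref{assump:Lip}) together with the structure of the Riemannian gradient formula \eqref{eq:relative-gradient}. First I would write, using $\grad f(x) = \sk(\nabla f(x) x^\top)x$, the difference
\[
\grad f(x) - \grad f(x') = \sk\big(\nabla f(x) x^\top\big)x - \sk\big(\nabla f(x') x'^\top\big)x',
\]
and split it by adding and subtracting intermediate terms so that only one factor changes at a time, e.g.
\[
\grad f(x) - \grad f(x') = \underbrace{\sk\big((\nabla f(x)-\nabla f(x'))x^\top\big)x}_{(\mathrm{I})} + \underbrace{\sk\big(\nabla f(x')(x-x')^\top\big)x}_{(\mathrm{II})} + \underbrace{\sk\big(\nabla f(x') x'^\top\big)(x-x')}_{(\mathrm{III})}.
\]
Each of the three terms will be bounded in Frobenius norm by a constant multiple of $\normfro{x-x'}$.

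For term $(\mathrm{I})$, I would use that $\sk(\cdot)$ is $1$-Lipschitz (nonexpansive) in Frobenius norm, submultiplicativity $\normfro{AB}\le \normfro{A}\normtwo{B}$, the bound $\normtwo{x}\le 1+\epsilon$ from Lemma~6 (first lemma of the Appendix), and the $L$-smoothness of $f$ giving $\normfro{\nabla f(x)-\nabla f(x')}\le L\normfro{x-x'}\le \hat L\normfro{x-x'}$; this yields $\normfro{(\mathrm{I})}\le L(1+\epsilon)^2\normfro{x-x'}$, which is $\le \hat L (1+\epsilon)^2\normfro{x-x'}$. For term $(\mathrm{II})$, since $x'\in\St(d,r)$ we have $\normtwo{x'}=1$ and $\normfro{\nabla f(x')}\le \hat L$ by the definition of $\hat L$ in \eqref{ineq:gamma}, so $\normfro{(\mathrm{II})}\le \hat L(1+\epsilon)\normfro{x-x'}$. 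For term $(\mathrm{III})$, again $\normfro{\nabla f(x')}\le \hat L$ and $\normtwo{x'}=1$, giving $\normfro{(\mathrm{III})}\le \hat L\normfro{x-x'}$. Summing via the triangle inequality gives a total constant $\hat L\big((1+\epsilon)^2 + (1+\epsilon) + 1\big)$; since $\epsilon<3/4<1$ one can crudely bound $(1+\epsilon)^2+(1+\epsilon)+1 \le (1+\epsilon)\cdot 3 + 2\epsilon$ or more simply observe this is at most $3+2\epsilon$ after the right grouping — I would double-check the exact arithmetic to land precisely on the stated constant $(3+2\epsilon)\hat L$, possibly by grouping the three bounds as $\hat L[(1+\epsilon)(1+\epsilon) + (1+\epsilon) + 1]$ and using $(1+\epsilon)^2 \le 1+3\epsilon$ for $\epsilon\le 1$, so that the sum is $\le \hat L(3 + 4\epsilon + \dots)$; if the clean constant requires a slightly different split of the telescoping sum (changing which factor carries the $\nabla f$ versus the $x$), I would adjust the decomposition accordingly.

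The main obstacle is purely bookkeeping: getting the three (or possibly more) telescoping terms to combine into exactly the advertised constant $(3+2\epsilon)\hat L$ rather than some looser multiple. The conceptual content — nonexpansiveness of $\sk$, submultiplicativity of the norms, the singular-value bounds on $x$ from the first Appendix lemma, and $L$-smoothness — is entirely routine; the only care needed is choosing the telescoping decomposition (and which norm, $\normfro{\cdot}$ or $\normtwo{\cdot}$, to apply to each factor) so the constants add up tightly. I expect the proof to be a short paragraph once the decomposition is fixed.
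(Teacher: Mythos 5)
Your approach is the same as the paper's --- a telescoping decomposition of $\sk(\nabla f(x)x^\top)x-\sk(\nabla f(x')x'^\top)x'$, bounded term by term using nonexpansiveness of $\sk$, submultiplicativity, $L$-smoothness, the definition of $\hat L$, and the singular-value bounds --- and it is correct in substance. The one issue is exactly the bookkeeping point you flagged: your particular three-way split, combined with the loose bound $\normtwo{x}\le 1+\epsilon$, gives the constant $(1+\epsilon)^2+(1+\epsilon)+1=3+3\epsilon+\epsilon^2$, which strictly exceeds $3+2\epsilon$ for $\epsilon>0$. The paper's grouping avoids this: first separate the \emph{outer} factor, writing the difference as $\bigl[\sk(\nabla f(x)x^\top)-\sk(\nabla f(x')x'^\top)\bigr]x+\sk(\nabla f(x')x'^\top)(x-x')$; the second piece costs $\hat L\normfro{x-x'}$ since $\normtwo{x'}=1$, and the first piece, after extracting $\normtwo{x}\le 1+\epsilon$ exactly once and then telescoping $\nabla f(x)x^\top-\nabla f(x')x'^\top$ into $\nabla f(x)(x-x')^\top$ and $(\nabla f(x)-\nabla f(x'))x'^\top$ (each costing $\hat L\normfro{x-x'}$), contributes $2(1+\epsilon)\hat L\normfro{x-x'}$, for a total of exactly $(3+2\epsilon)\hat L$. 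Alternatively, your split does come in under $3+2\epsilon$ if you use the sharper bound $\normtwo{x}\le\sqrt{1+\epsilon}$ from the first appendix lemma instead of $1+\epsilon$, since then $(1+\epsilon)+\sqrt{1+\epsilon}+1\le 3+2\epsilon$.
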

\begin{proof}
Based on the definition in \eqref{eq:relative-gradient}, we have
    \begin{equation}
        \begin{aligned}
            &\normfro{\grad f(x)- \grad f(x') }\\
            = &\normfro{\sk(\nabla f(x) x^\top)x - \sk(\nabla f(x') x'^\top)x'}\\
            =&\normfro{\sk(\nabla f(x) x^\top)x -\sk(\nabla f(x') x'^\top)x \\
            &+\sk(\nabla f(x') x'^\top)x - \sk(\nabla f(x') x'^\top)x'}\\
            \leq&\normfro{\sk(\nabla f(x) x^\top)- \sk(\nabla f(x') x'^\top)} \normtwo{x}\\
            &+  \normtwo{\sk(\nabla f(x') x'^\top)} \normfro{x-x'}\\
            \leq &(1+\epsilon) \normfro{\nabla f(x) x^\top-\nabla f(x') x'^\top} \\
            &+ \big(\text{max}_{x'\in\St(d,r)}\normfro{\nabla f(x')}\big)\normfro{x-x'}\\
            \leq &(1+\epsilon) (\normfro{\nabla f(x) x^\top-\nabla f(x) x'^\top} \\
            &+\normfro{\nabla f(x) x'^\top-\nabla f(x') x'^\top})+ \hat{L} \normfro{x-x'}\\
            \leq  &(3+2\epsilon)\hat{L}   \normfro{x-x'} .
        \end{aligned}
    \end{equation}
\end{proof}

The next three lemmas describe some useful relationships between the landing field $\Lambda(x)$ and merit function $\cL(x)$. First, we show a property similar to the gradient domination on the Riemannian manifold, described as follows.
\begin{lemma}[Pseudo Gradient Domination]\label{lem:pseudo-grad-dominate}
Let $x \in \St(d,r)^\epsilon \cap \mathcal{D}(\mathcal{S}, \delta) $, and without the loss of generality, assume $f_{\mathcal{S}}^* =0$. Then under Assumptions \ref{assump:Lip} and \ref{assump:PL}, we have
\begin{equation}
\begin{aligned}
    \mathcal{L}(x) &\leq \frac{1}{\mu'} \normfro{\Lambda(x)}^2,
\end{aligned}
\end{equation}
where $\ \ \frac{1}{\mu'} = \max\left\{\frac{1}{\mu}, \frac{2(3+2\epsilon)^2\hat{L}^2 + \mu L'}{2\mu \lambda^2(1-\epsilon)^2} \right\}.$
\end{lemma}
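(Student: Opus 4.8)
The plan is to transfer the local PŁ bound from $\St(d,r)$ into the safety region by means of the metric projection $x'\triangleq\Proj_{\St}(x)$, and then to split the landing field into its two orthogonal components. The first ingredient is that Assumption~\ref{assump:PL} may be invoked at $x'$: since $\mathcal{S}\subseteq\St(d,r)$ we have $\normfro{x-x'}=\dist(\St(d,r),x)\le\dist(\mathcal{S},x)\le\delta$, hence $\dist(\mathcal{S},x')\le\dist(\mathcal{S},x)+\normfro{x-x'}\le 2\delta$, i.e., $x'\in\St(d,r)\cap\mathcal{D}(\mathcal{S},2\delta)$, and (using $f_{\mathcal{S}}^*=0$) $f(x')\le\tfrac{1}{2\mu}\normfro{\grad f(x')}^2$. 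This is exactly where the factor-$2$ slack in the PŁ neighborhood is needed.

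Next I would transfer $\cL(x)$ onto the manifold. Since $p(x')=h(x')=0$ we have $\cL(x')=f(x')$; and writing $x=USV^\top$, $x'=UV^\top$, the point $x_t\triangleq(1-t)x'+tx=U\big((1-t)I_r+tS\big)V^\top$ has its $i$-th squared singular value between $1$ and $s_i^2$, so $\normfro{x_t^\top x_t-I_r}\le\normfro{x^\top x-I_r}\le\epsilon$, i.e., the whole segment $[x',x]$ stays in $\St(d,r)^\epsilon$ and Proposition~\ref{prop_L}(1) applies along it with $L_\cL\le L'$. Combined with $\langle\nabla\cL(x'),x-x'\rangle=0$ (Lemma~\ref{lemma:orthogonal}), the descent inequality gives $\cL(x)\le f(x')+\tfrac{L'}{2}\normfro{x-x'}^2$. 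Plugging in the PŁ bound above together with Lemma~\ref{lem:L of gradf} and $(a+b)^2\le 2a^2+2b^2$, namely $\normfro{\grad f(x')}^2\le 2\normfro{\grad f(x)}^2+2(3+2\epsilon)^2\hat{L}^2\normfro{x-x'}^2$, yields
\begin{equation*}
\cL(x)\le\tfrac{1}{\mu}\normfro{\grad f(x)}^2+\Big(\tfrac{(3+2\epsilon)^2\hat{L}^2}{\mu}+\tfrac{L'}{2}\Big)\normfro{x-x'}^2 .
\end{equation*}
Then Lemma~\ref{lemma:proj_dist} ($\normfro{x-x'}^2\le\normfro{x^\top x-I_r}^2$) and the singular-value bound $\normfro{\nabla p(x)}^2=\normfro{x(x^\top x-I_r)}^2\ge(1-\epsilon)\normfro{x^\top x-I_r}^2\ge(1-\epsilon)^2\normfro{x^\top x-I_r}^2$ give $\normfro{x-x'}^2\le(1-\epsilon)^{-2}\normfro{\nabla p(x)}^2$, so that the second term becomes $\tfrac{2(3+2\epsilon)^2\hat{L}^2+\mu L'}{2\mu\lambda^2(1-\epsilon)^2}\cdot\lambda^2\normfro{\nabla p(x)}^2$. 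Bounding both coefficients by their maximum $\tfrac1{\mu'}$ and using $\normfro{\Lambda(x)}^2=\normfro{\grad f(x)}^2+\lambda^2\normfro{\nabla p(x)}^2$ (orthogonality of the components, Lemma~\ref{lemma:fact}) completes the proof.

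The step I expect to be the main obstacle is the transfer in the second paragraph: one must make sure $\cL$ is genuinely $L_\cL$-smooth along the \emph{entire} segment $[x',x]$ — the short SVD observation above is precisely what supplies this — and the cancellation of the first-order term rests squarely on the orthogonality identity of Lemma~\ref{lemma:orthogonal}. Everything after that is repeated Cauchy--Schwarz/Young splitting and careful bookkeeping of constants; in particular one should use the looser estimate $(1-\epsilon)^2\le 1-\epsilon$ so that the stated constant $\tfrac1{\mu'}$ is exactly what comes out.
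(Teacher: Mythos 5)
Your proposal is correct and follows essentially the same route as the paper: project onto the Stiefel manifold, use smoothness of $\cL$ together with the orthogonality identity of Lemma~\ref{lemma:orthogonal} to get $\cL(x)\le f(x')+\tfrac{L'}{2}\normfro{x-x'}^2$, apply the PŁ condition at $x'\in\mathcal{D}(\mathcal{S},2\delta)$ with the Lipschitz bound of Lemma~\ref{lem:L of gradf}, and conclude via Lemma~\ref{lemma:proj_dist} and the orthogonal decomposition of $\Lambda(x)$. Your additional SVD check that the whole segment $[x',x]$ remains in $\St(d,r)^\epsilon$ is a nice bit of rigor the paper leaves implicit, but it does not change the argument.
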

\begin{proof}
We first write the following inequality based on the smoothness of $\cL$ for $x'=\Proj_{\St}(x)$,
    \begin{equation}
    \begin{aligned}
        \mathcal{L}(x)    
        \leq& \mathcal{L}(x') + \langle \nabla \mathcal{L}(x'), x - x'\rangle + \frac{L'}{2}\normfro{x - x'}^2 \\
        =& f(x')  + \frac{L'}{2}\normfro{x - x'}^2 \\
        \leq &  f(x')   + \frac{L'}{2}\normfro{x^\top x - I_r}^2 .
    \end{aligned}
\end{equation}
where the equality is due to Lemma \ref{lemma:orthogonal}, and the last inequality follows from Lemma \ref{lemma:proj_dist}.

Since $\mathcal{S} \subseteq \St(d, r)$, $\text{dist}( x', x) \leq \text{dist}( \mathcal{S}, x)$, and with triangle inequality, $\text{dist}( \mathcal{S}, x') \leq \text{dist}( \mathcal{S}, x) + \text{dist}( x', x) \leq 2\delta$. Therefore, $x' \in \mathcal{D}(\mathcal{S}, {2 \delta})$, and we can apply Assumption \ref{assump:PL} on point $x'.$ 


Using the PŁ inequality for $x'$ and Lipschitz continuity of $\grad f(x)$ in Lemma \ref{lem:L of gradf}, we have
\begin{equation}
    \begin{aligned}
         &f(x') \leq \frac{1}{2\mu}\normfro{\grad f(x')}^2 \vphantom{\frac{(3+\epsilon)^2\hat{L}^2}{ \mu}}\\
         = &\frac{1}{2\mu} \normfro{\grad f(x) + {\grad f(x') - \grad f(x)}}^2 \vphantom{\frac{(3+\epsilon)^2\hat{L}^2}{ \mu}}\\
         \leq & \frac{1}{\mu}\normfro{\grad f(x) }^2 +  \frac{1}{\mu}\normfro{ {\grad f(x') - \grad f(x)}}^2 \vphantom{\frac{(3+\epsilon)^2\hat{L}^2}{ \mu}}\\
         \leq &\frac{1}{\mu} \normfro{\grad f(x) }^2 + \frac{(3+2\epsilon)^2\hat{L}^2}{ \mu} \normfro{ x' - x}^2\\
         \leq & \frac{1}{\mu}\normfro{\grad f(x) }^2 +\frac{(3+2\epsilon)^2\hat{L}^2}{ \mu} \normfro{x^\top x - I_r}^2,
    \end{aligned}
\end{equation}
where the last line is due to Lemma \ref{lemma:proj_dist}. Note that 
\begin{equation}\label{ineq:relation_p(x)_and_grad}
\begin{aligned}
    \normfro{\Lambda(x)}^2 \geq &\normfro{\grad f(x)}^2 + \lambda^2 (1-\epsilon)^2\normfro{x^\top x - I_r}^2.  
\end{aligned}
\end{equation}
Therefore, 
\begin{align*}
    \mathcal{L}(x) \leq &\frac{1}{\mu}\normfro{\grad f(x) }^2 +\Big(\hat{L}^2\frac{(3+2\epsilon)^2}{ \mu}+ \frac{L'}{2}\Big) \normfro{x^\top x - I_r}^2 \\
    \leq &\frac{1}{\mu'} \normfro{\Lambda(x)}^2,
\end{align*}
where $$\frac{1}{\mu'} = \max\left\{\frac{1}{\mu}, \frac{2(3+2\epsilon)^2\hat{L}^2 + \mu L'}{2\mu \lambda^2(1-\epsilon)^2} \right\}.$$
\end{proof}

\begin{lemma}\label{lem:bound_C}
Let Assumption \ref{assump:Lip} hold. For any $x \in \St(d, r)^\epsilon$, we have $\normfro{\nabla \cL(x)} \leq C\normfro{\Lambda(x)}$, where $C = 3L'\lambda^{-1}(1-\epsilon)^{-1} +2$.
\end{lemma}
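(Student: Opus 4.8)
\textbf{Proof proposal for Lemma \ref{lem:bound_C}.}

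The plan is to bound $\normfro{\nabla\cL(x)}$ directly in terms of $\normfro{\grad f(x)}$ and $\normfro{x^\top x - I_r}$, and then invoke the orthogonality inequality \eqref{ineq:relation_p(x)_and_grad} (established inside the proof of Lemma \ref{lem:pseudo-grad-dominate}) to convert this into a bound by $\normfro{\Lambda(x)}$. The starting point is the closed-form expression \eqref{eq:nablaL}, which was stated there only for $x\in\St(d,r)$; the first task is therefore to write down the analogue of $\nabla\cL(x)$ that is valid for general $x\in\St(d,r)^\epsilon$, i.e. to differentiate $\cL(x)=f(x)+h(x)+\gamma p(x)$ with $h(x)=-\frac12\langle\sym(x^\top\nabla f(x)),x^\top x-I_r\rangle$ and $p(x)=\frac14\normfro{x^\top x-I_r}^2$. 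The $\gamma p$ term contributes $\gamma\,\nabla p(x)$, which is exactly the feasibility direction $\lambda^{-1}\cdot\lambda\nabla p(x)$ appearing in $\Lambda$ (up to the constant $\gamma/\lambda$), so it is already controlled by $\normfro{\Lambda(x)}$. The remaining pieces $\nabla f + \nabla h$ must be grouped so that what is not proportional to $\grad f(x)$ is proportional to $x^\top x - I_r$.

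Concretely, I would split $\nabla f(x)$ into its Riemannian (skew) part and the rest. Using $\nabla f(x) = \sk(\nabla f(x)x^\top)x + \big(\text{terms involving } \sym \text{ and the non-range component}\big)$ on $\St(d,r)^\epsilon$, one sees $\grad f(x)=\sk(\nabla f(x)x^\top)x$ and the leftover normal-type component is $O(1)$ times $\sym(x^\top\nabla f(x))$ composed with $x$, while the $h$-gradient cancels the part of this leftover that would otherwise survive on the manifold and leaves only a remainder that carries an explicit factor $x^\top x - I_r$. The bookkeeping here mirrors the fact that on $\St(d,r)$ exactly, $\nabla\cL(x)=\nabla f(x)-x\,\sym(x^\top\nabla f(x))$, which has no $p$-term at all; off the manifold the discrepancy is linear in $x^\top x - I_r$ with coefficient bounded by $s$ and $\hat L$ (both finite on $\St(d,r)^\epsilon$ by Assumption \ref{assump:Lip}). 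Together with $\normtwo{x}\le 1+\epsilon$ (Lemma following the ``Useful Lemmas'' header) this yields
$$
\normfro{\nabla\cL(x)} \;\le\; c_1\normfro{\grad f(x)} + c_2\normfro{x^\top x - I_r},
$$
for explicit constants $c_1,c_2$ depending only on $\epsilon$, $L'$, $\gamma$, $\lambda$. Since $\gamma,\lambda=\mathcal O(L')$ in the intended regime, $c_1$ is an absolute constant (one can take $c_1\le 2$ as in Lemma \ref{lemma:aux}) and $c_2 = \mathcal O(L')$.

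Finally I apply \eqref{ineq:relation_p(x)_and_grad}: $\normfro{\grad f(x)}\le\normfro{\Lambda(x)}$ and $\lambda(1-\epsilon)\normfro{x^\top x - I_r}\le\normfro{\Lambda(x)}$, the latter giving $\normfro{x^\top x - I_r}\le \lambda^{-1}(1-\epsilon)^{-1}\normfro{\Lambda(x)}$. Substituting produces $\normfro{\nabla\cL(x)}\le\big(c_1 + c_2\lambda^{-1}(1-\epsilon)^{-1}\big)\normfro{\Lambda(x)}$, and tracking the constants so that this collapses to $C = 3L'\lambda^{-1}(1-\epsilon)^{-1}+2$ is the claim. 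The main obstacle is the first step: writing the correct off-manifold gradient of $\cL$ and organizing the algebra so that every term not absorbed by $\grad f(x)$ genuinely carries a factor of $x^\top x - I_r$ (rather than merely a factor that is small only near the manifold), since it is precisely this exact factorization — not an $\epsilon$-dependent perturbation estimate — that makes the constant come out clean and that the proof of Lemma \ref{lem:pseudo-grad-dominate} implicitly relies on. The rest is routine norm estimates using $\normtwo{x}\le 1+\epsilon$ and the definitions of $s$, $\hat L$, $L'$.
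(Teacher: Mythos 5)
Your plan takes a genuinely different route from the paper's, and it leaves the hardest step undone. The paper never computes $\nabla\cL$ off the manifold. It sets $x'=\Proj_{\St}(x)$ and writes
\[
\normfro{\nabla \cL(x)}\;\le\;\normfro{\nabla \cL(x)-\nabla \cL(x')}+\normfro{\nabla \cL(x')}\;\le\; L'\normfro{x-x'}+2\normfro{\Lambda(x')},
\]
invoking Lemma \ref{lemma:aux} only at the on-manifold point $x'$, then peels off $2\normfro{\Lambda(x')-\Lambda(x)}\le 2L_\Lambda\normfro{x-x'}$ to reach $\normfro{\nabla \cL(x)}\le 3L'\normfro{x-x'}+2\normfro{\Lambda(x)}$, and finally chains $\normfro{x-x'}\le\normfro{x^\top x-I_r}\le\lambda^{-1}(1-\epsilon)^{-1}\normfro{\Lambda(x)}$ via Lemma \ref{lemma:proj_dist} and the orthogonality of the two components of $\Lambda$. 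The constant $C=3L'\lambda^{-1}(1-\epsilon)^{-1}+2$ falls out of $L_\cL+2L_\Lambda\le 3L'$ with no algebra on $\nabla h$ at all.

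The gap in your version is that the inequality $\normfro{\nabla\cL(x)}\le c_1\normfro{\grad f(x)}+c_2\normfro{x^\top x-I_r}$ with $c_1\le 2$ and $c_2\le 3L'$ \emph{is} the entire content of the lemma, and you assert it rather than derive it; you correctly flag it as ``the main obstacle'' but do not resolve it. Two concrete difficulties stand in the way. First, $\nabla h(x)$ contains terms in which $\sym(x^\top\nabla f(x))$ itself is differentiated, so the coefficient multiplying $x^\top x-I_r$ involves the Hessian of $f$; showing that this, plus the $\gamma\nabla p$ contribution (which alone costs $\gamma(1+\epsilon)$), sums to at most $3L'$ requires a careful accounting against the definitions of $\gamma$, $\hat L$, and $L'$ that you do not perform, and it is not obvious the same constant survives. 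Second, taking $c_1\le 2$ ``as in Lemma \ref{lemma:aux}'' is not licensed off the manifold: that lemma's trace computation uses $x^\top x=I_r$ repeatedly (already in writing $\grad f(x)=\tfrac12\nabla f(x)-\tfrac12 x\nabla f(x)^\top x$), so the comparison of $\nabla f(x)-x\,\sym(x^\top\nabla f(x))$ with $\grad f(x)$ acquires its own $x^\top x-I_r$ corrections that must be tracked. Your closing step --- using $\normfro{\grad f(x)}\le\normfro{\Lambda(x)}$ and $\lambda(1-\epsilon)\normfro{x^\top x-I_r}\le\normfro{\Lambda(x)}$ --- is correct and coincides with the second half of the paper's argument, but the first half is missing; the clean repair is the projection-plus-Lipschitz argument above.
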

\begin{proof}
We denote the projection of $x$ onto the Stiefel manifold as $x' = \Proj_{\St}(x)$. Given Lemma \ref{lemma:aux}, we know that for any point $x'$ on the manifold, we have $\normfro{\nabla \cL(x')} \leq 2\normfro{\Lambda(x')}.$ Hence,
\begin{equation*}
\resizebox{0.99\hsize}{!}{$
\begin{aligned}
    \normfro{\nabla \cL(x)} 
    &=  \normfro{\nabla \cL(x) - \nabla \cL(x') + \nabla \cL(x')} \\
    &\leq  \normfro{\nabla \cL(x) - \nabla \cL(x')} +  2\normfro{ \Lambda(x')  } \\
    &\leq  \normfro{\nabla \cL(x) - \nabla \cL(x')} +  2\normfro{ \Lambda(x') - \Lambda(x) + \Lambda(x)  } \\
    &\leq  L' \normfro{x - x'} +  2L_\Lambda \normfro{ x - x' }  +  2\normfro{\Lambda(x)} .
  \end{aligned}
  $}
\end{equation*}

Since by definition, $\Lambda(x) = \grad f(x) + \lambda x(x^\top x - I_r)$, we have $\normfro{\Lambda(x) } \geq \normfro{\lambda x(x^\top x - I_r)} \geq \lambda (1-\epsilon) \normfro{x^\top x - I_r}.$ Then, using Lemma \ref{lemma:proj_dist}, we get
\begin{align*}
    \normfro{\nabla \cL(x)} 
    &\leq  3L' \normfro{x - x'}  +  2\normfro{\Lambda(x)} \\
    &\leq 3L' \normfro{x^\top x-I_r}+  2\normfro{\Lambda(x)} \\
    &\leq (3L'\lambda^{-1}(1-\epsilon)^{-1} +2)  \normfro{\Lambda(x)} .
\end{align*}
\end{proof}

\begin{lemma}\label{lem:descent_lemma}
Let Assumption \ref{assump:Lip} hold. For any $x \in \St(d,r)^\epsilon \cap \mathcal{D}(\mathcal{S}, \delta) $ and $\delta \leq 1$, we have the following relationship,
    $$\normfro{\Lambda(x)}^2 
        \leq  \frac{12L'}{\rho^2}(\cL(x) - \cL_\mathcal{S}^\ast ).$$
\end{lemma}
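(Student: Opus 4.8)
The plan is to bound $\normfro{\Lambda(x)}$ from above by a constant times $\normfro{\nabla\cL(x)}$, and then use the descent-type property of the merit function (Proposition \ref{prop_L}, part 2) together with a standard gradient-domination-from-smoothness argument to relate $\normfro{\nabla\cL(x)}$ to $\cL(x)-\cL_\mathcal{S}^\ast$. First I would invoke Proposition \ref{prop_L}(2), which gives $\langle \Lambda(x), \nabla\cL(x)\rangle \geq \rho\normfro{\Lambda(x)}^2$ on $\St(d,r)^\epsilon$; combined with Cauchy--Schwarz, $\rho\normfro{\Lambda(x)}^2 \leq \normfro{\Lambda(x)}\,\normfro{\nabla\cL(x)}$, hence $\normfro{\Lambda(x)} \leq \frac{1}{\rho}\normfro{\nabla\cL(x)}$, i.e.\ $\normfro{\Lambda(x)}^2 \leq \frac{1}{\rho^2}\normfro{\nabla\cL(x)}^2$.

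Next I would turn $\normfro{\nabla\cL(x)}^2$ into a bound on the merit-function suboptimality. The merit function $\cL$ is $L_\cL$-smooth on $\St(d,r)^\epsilon$ with $L_\cL \leq L'$ (Proposition \ref{prop_L}(1) together with the definition $L' = \max\{\hat L, L_\cL, L_\Lambda\}$). The standard consequence of $L'$-smoothness — taking a gradient step from $x$ of length $1/L'$, which stays in the relevant region so that the descent inequality applies — yields $\cL(x) - \min \cL \geq \frac{1}{2L'}\normfro{\nabla\cL(x)}^2$. Since $\cL_\mathcal{S}^\ast$ is a genuine lower bound on $\cL$ near $\mathcal{S}$ (indeed $\cL_\mathcal{S}^\ast = f_\mathcal{S}^\ast$, the local minimum value, by \eqref{eq:merit_def}), we get $\normfro{\nabla\cL(x)}^2 \leq 2L'(\cL(x) - \cL_\mathcal{S}^\ast)$. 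Chaining this with the first step gives
\begin{equation*}
\normfro{\Lambda(x)}^2 \leq \frac{1}{\rho^2}\normfro{\nabla\cL(x)}^2 \leq \frac{2L'}{\rho^2}(\cL(x) - \cL_\mathcal{S}^\ast),
\end{equation*}
which is even sharper than the claimed constant $12L'/\rho^2$; the looser constant presumably absorbs the care needed to justify that the gradient step from $x$ remains inside $\St(d,r)^\epsilon \cap \mathcal{D}(\mathcal{S},\delta)$ (using $\delta\leq 1$) so that the smoothness descent inequality is legitimately applicable — this is where a factor-of-a-few slack enters.

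The main obstacle is precisely that feasibility-of-the-step issue: the one-step descent bound $\cL(x)-\cL_\mathcal{S}^\ast \geq \frac{1}{2L'}\normfro{\nabla\cL(x)}^2$ requires $\cL$ to be $L'$-smooth along the segment from $x$ to $x - \frac{1}{L'}\nabla\cL(x)$, but smoothness of $\cL$ was only asserted on the safety region $\St(d,r)^\epsilon$. I would handle this by first controlling $\normfro{\nabla\cL(x)} \leq C\normfro{\Lambda(x)}$ via Lemma \ref{lem:bound_C} and $\normfro{\Lambda(x)}$ itself via the bound $\normfro{\grad f_i(x)}\leq G$ and $\normfro{x^\top x - I_r}\leq\epsilon$ from Assumption \ref{assump:Lip}, so that the gradient step is short enough (for $\alpha_{safe}$-scale reasons, or by a direct margin computation) to remain in $\St(d,r)^\epsilon$; alternatively one argues via the projection $x' = \Proj_{\St}(x)$ as in Lemmas \ref{lemma:orthogonal}, \ref{lem:pseudo-grad-dominate}, writing $\cL(x) \geq \cL(x') - \langle\nabla\cL(x'),x-x'\rangle - \frac{L'}{2}\normfro{x-x'}^2$ and $\cL(x') = f(x') \geq \cL_\mathcal{S}^\ast$, combined with a lower bound on $\cL(x')-\cL_\mathcal{S}^\ast$ in terms of $\normfro{\grad f(x')}$ and then transferring back to $\Lambda(x)$. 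Either route closes the argument; the bookkeeping of which region is valid at each step, plus crude constant bounds like $2\leq C$, $L_\cL\leq L'$, is what produces the stated $12L'/\rho^2$.
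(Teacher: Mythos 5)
Your opening step, $\normfro{\Lambda(x)}\leq\rho^{-1}\normfro{\nabla\cL(x)}$ via Proposition \ref{prop_L}(2) and Cauchy--Schwarz, is correct (the paper uses exactly this in the proof of Lemma \ref{lem:KL-QG}). The gap is in the second step, which you flag but do not close. The bound $\cL(x)-\cL_\mathcal{S}^\ast\geq\frac{1}{2L'}\normfro{\nabla\cL(x)}^2$ needs (i) $L'$-smoothness of $\cL$ along the segment from $x$ to $x-\frac{1}{L'}\nabla\cL(x)$, and (ii) $\cL\geq\cL_\mathcal{S}^\ast$ at the endpoint. Neither is available: smoothness with constant $L_\cL\leq L'$ was only asserted on $\St(d,r)^\epsilon$, and the gradient step has length $\frac{1}{L'}\normfro{\nabla\cL(x)}\leq\frac{C}{L'}\normfro{\Lambda(x)}\leq C\delta$ with $C\geq 2$ and $\delta$ possibly equal to $1$, so it can leave both $\St(d,r)^\epsilon$ (whose width is $\epsilon<3/4$) and any fixed neighborhood $\mathcal{D}(\mathcal{S},2\delta)$ on which $\cL_\mathcal{S}^\ast$ is a lower bound. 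Hence the ``sharper'' constant $2L'/\rho^2$ is not justified, and neither sketched repair is carried out: the claim that the gradient step ``is short enough to remain in $\St(d,r)^\epsilon$'' is false in general, and the projection route would additionally require lower-bounding $\cL(x)-f(\Proj_{\St}(x))$ by a multiple of $\normfro{x^\top x-I_r}^2$ in order to control the $\lambda\nabla p(x)$ component of $\Lambda(x)$, which you do not do.

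The paper resolves exactly this by stepping along the landing field rather than $\nabla\cL$: it sets $z=x-\alpha\Lambda(x)$ with $\alpha=\rho/(6L')$. The hypotheses $x\in\mathcal{D}(\mathcal{S},\delta)$ and $\delta\leq 1$ are used precisely here: since $\Lambda$ vanishes on $\mathcal{S}$ and is $L'$-Lipschitz, $\normfro{\Lambda(x)}\leq L'\delta\leq L'$, so the step is short; a direct computation of $z^\top z-I_r$ shows $z\in\St(d,r)^{3+3\epsilon}$, where $\cL$ is smooth with the \emph{larger} constant $6L'$ (not $L'$), and $z\in\mathcal{D}(\mathcal{S},2\delta)$ so that $\cL(z)\geq\cL_\mathcal{S}^\ast$. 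Combining $6L'$-smoothness with $\langle\Lambda(x),\nabla\cL(x)\rangle\geq\rho\normfro{\Lambda(x)}^2$ yields $\cL(z)\leq\cL(x)-\frac{\rho^2}{12L'}\normfro{\Lambda(x)}^2$, which is where the constant $12L'/\rho^2$ comes from. To salvage your route you would need a descent direction whose norm is bounded by $O(\delta)$ so the perturbed point provably stays in the region where both the (enlarged-constant) smoothness and the lower bound $\cL\geq\cL_\mathcal{S}^\ast$ hold; the landing field is the natural such choice.
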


\begin{proof}
Consider $z = x - \alpha \Lambda(x)$ where $\alpha = \frac{\rho}{6L'}$ and recall that $\Lambda(x)$ is Lipschitz continuous with a factor $L'$, implying $$\normfro{\Lambda(x)}=\normfro{\Lambda(x)-\Lambda(x^\ast )}\leq L'\normfro{x-x^\ast}\leq L'\delta \leq L',$$ 
for a local minimizer $x^\ast \in \mathcal{S}$. We then have 
\begin{equation*}
    \begin{aligned}
        \normfro{z^\top z - I_r} =& \normfro{(x - \alpha \Lambda(x))^\top (x - \alpha \Lambda(x)) - I_r}\\
        \leq&\normfro{x^\top x - I_r} + 2\alpha\normfro{ x^\top \Lambda(x) } + \alpha^2 \normfro{\Lambda(x)}^2\\
        \leq & \epsilon + 2\alpha (1+\epsilon) \normfro{\Lambda(x)}+ \alpha^2 \normfro{\Lambda(x)}^2\\
        \leq & \epsilon + 2\alpha (1+\epsilon) L'+ \alpha^2 L'^2.
    \end{aligned}
\end{equation*}
Since $\alpha \leq \frac{1}{L'},$ we have $\normfro{z^\top z - I_r} \leq 3 + 3\epsilon$, and $z\in \St(d, r)^{3+3\epsilon}$. Following the analysis of Proposition 8 in \cite{ablin2023infeasible}, it is easy to show that the function $\cL(z)$ is Lipschitz smooth with a factor 
$L+ (11+9\epsilon)\gamma \leq 6 L'$ when $z\in \St(d, r)^{3+3\epsilon}$
(see Eq. 88 of \cite{ablin2023infeasible}). Using the second condition in Proposition \ref{prop_L} with $\alpha = \frac{\rho}{6L'}$, we get
\begin{align*}
    \cL(z) 
    \leq & \cL(x) - \alpha \rho \normfro{\Lambda(x)}^2 + 3\alpha^2 L' \normfro{\Lambda(x)}^2\\
    \leq & \cL(x) - \frac{\rho^2}{12 L'} \normfro{\Lambda(x)}^2.
\end{align*}
Since $z \in \mathcal{D}(\mathcal{S}, 2\delta) $, we have $\cL_\mathcal{S}^\ast \leq \cL(z)$; therefore,
\begin{align*}
        \normfro{\Lambda(x)}^2 
        \leq &  \frac{12L'}{\rho^2}(\cL(x) - \cL_\mathcal{S}^\ast ).
    \end{align*}
\end{proof}

\subsection{Proofs in Section \ref{sec:linear_system}}

\noindent
\textbf{Proof of Proposition \ref{prop:safe_dist}:} 
We start the proof by observing that for any set of matrices $\mathbf{Z}^\top=[Z_1^\top,\ldots, Z_n^\top]$ and $\mathbf{\bar{Z}}^\top=[\bar{Z}^\top,\ldots, \bar{Z}^\top]$, we have
\begin{equation}
    \label{eq:z_bound}
    \normfro{\mathbf{Z}-\mathbf{\bar{Z}}}\leq \normtwo{I_n-\one\one^\top/n}\normfro{\mathbf{Z}} \leq \normfro{\mathbf{Z}}.
\end{equation}
Next, we show that given the assumption on $\bar{x}_k$ and $\normfro{\X_k - \bar{\X}_k}$, we get $x_{i,k} \in \St(d,r)^{\epsilon}$ since
\begin{equation}\label{eq:conbine_neighborhood}
\resizebox{0.99\hsize}{!}{$
    \begin{aligned}
        &\normfro{x_{i,k}^\top x_{i,k} - I_r} =  \vphantom{\frac{\epsilon}{2}}\normfro{(x_{i,k} - \bar{x}_k + \bar{x}_k)^\top (x_{i,k} - \bar{x}_k + \bar{x}_k) - I_r}\\
        &~~~~~~~~\leq  \vphantom{\frac{\epsilon}{2}} \normfro{\bar{x}_k^\top \bar{x}_k - I_r }+2\normfro{ \bar{x}_k^\top (x_{i,k} - \bar{x}_k )} + \normfro{x_{i,k} - \bar{x}_k}^2\\
        &~~~~~~~~\leq \frac{\epsilon}{2}+2(1+\epsilon)\frac{\epsilon}{10} + (\frac{\epsilon}{10})^2 \leq \epsilon.
    \end{aligned}
    $}
\end{equation}

Applying \eqref{eq:z_bound} to $\mathbf{Z}=\mathbf{\Lambda}_{k} - \mathbf{\Lambda}_{k-1}$ in updates of $\Y$, we get
    \begin{align*}
        \normfro{\Y_{k} - \Bar{\Y}_{k}} 
        \leq &\normfro{\W \Y_{k-1} - \Bar{\Y}_{k-1} }+  \normfro{\mathbf{\Lambda}_{k} - \mathbf{\Lambda}_{k-1}}\\
        \leq & \sigma_W \normfro{\Y_{k-1} - \Bar{\Y}_{k-1} }+  \normfro{\mathbf{\Lambda}_k} +  \normfro{\mathbf{\Lambda}_{k-1}}\\
        \leq & \sigma_W \normfro{\Y_{k-1} - \Bar{\Y}_{k-1}}+ 2\sqrt{n} (G + \lambda \epsilon (1+\epsilon)).
    \end{align*}
    The last inequality is derived from 
    $x_{i,k} \in \St(d,r)^{\epsilon}$, and that $\normfro{\grad f_i(x_{i,k})} \leq G$. Therefore
    $$
    \normfro{\Y_{k} - \Bar{\Y}_k }\leq \frac{2\sqrt{n}(G+\lambda \epsilon (1+\epsilon))}{1-\sigma_W}.
    $$
    We also know that $\X_{k+1} = \W \X_{k} - \alpha \Y_k$; then, $\X_{k+1} - \Bar{\X}_{k+1} = \W (\X_{k} - \Bar{\X}_k) - \alpha (\Y_k - \Bar{\Y}_k)$, and we get,
    \begin{align*}
        \normfro{\X_{k+1} - \Bar{\X}_{k+1}} \vphantom{\frac{\sqrt{n}(G )}{1-\sigma_W}}
        = & \normfro{\W (\X_{k} - \Bar{\X}_k) - \alpha (\Y_k - \Bar{\Y}_k)}\\
        \leq &\normfro{\W (\X_{k} - \Bar{\X}_k) }+ \alpha \normfro{(\Y_k - \Bar{\Y}_k)}\vphantom{\frac{\sqrt{n}(G )}{1-\sigma_W}}\\
        \leq & \sigma_W \normfro{\X_{k} - \Bar{\X}_k }+ \alpha \normfro{\Y_k - \Bar{\Y}_k}\vphantom{\frac{\sqrt{n}(G )}{1-\sigma_W}}\\
        \leq & \sigma_W \normfro{\X_{k} - \Bar{\X}_k }+ \frac{2 \alpha\sqrt{n}(G+\lambda \epsilon (1+\epsilon))}{1-\sigma_W}.
    \end{align*}
    If we choose
    \begin{equation}
        \alpha \leq \frac{ (1-\sigma_W)^2\epsilon }{20 \sqrt{n}(G+\lambda \epsilon (1+\epsilon))},
    \end{equation}
    we have $\normfro{\X_{k+1} - \Bar{\X}_{k+1}} \leq \frac{2\sqrt{n}\alpha(G+\lambda \epsilon (1+\epsilon))}{(1-\sigma_W)^2} \leq \frac{\epsilon}{10}$.

    Next, we bound the distance of $\bar{x}_{k+1}$ from $\St(d,r)$. Since $\Lambda(\Bar{x}_k)=\grad f(\Bar{x}_k) + \lambda \nabla p(\Bar{x}_k)$, from the algorithm update we have 
\begin{align*}
    \Bar{x}_{k+1} = & 
    \Bar{x}_k - \alpha (\grad f(\Bar{x}_k) + \lambda \nabla p(\Bar{x}_k) +   \Bar{y}_k - \Lambda(\Bar{x}_k) ).
\end{align*}
For the ease of notation, we denote $\Delta_k \triangleq \Bar{x}_k^\top \Bar{x}_k - I_r$ and $D_k \triangleq  \Bar{y}_k - \Lambda(\Bar{x}_k) $, and we then have
\begin{equation*}
\resizebox{0.99\hsize}{!}{$
    \begin{aligned}
        \Delta_{k+1} = &\Bar{x}_{k+1}^\top \Bar{x}_{k+1} - I_r\\
        =& (\Bar{x}_k - \alpha (\Lambda(\Bar{x}_k)+ D_k ))^\top (\Bar{x}_k - \alpha (\Lambda(\Bar{x}_k)+ D_k )) - I_r\\
        =& \Bar{x}_{k}^\top \Bar{x}_{k} - I_r + \alpha^2 (\Lambda(\Bar{x}_k) + D_k)^\top (\Lambda(\Bar{x}_k) + D_k) \\
        &- \alpha (\Bar{x}_k^\top (\lambda \nabla p(\Bar{x}_k) + D_k ) + (\lambda \nabla p(\Bar{x}_k) + D_k )^\top \Bar{x}_k )\\
        & - \alpha (\Bar{x}_{k}^\top \grad f(\Bar{x}_k) + \grad f(\Bar{x}_k)^\top \Bar{x}_{k}).
    \end{aligned}
    $}
\end{equation*}
For the last line, given the definition of $\grad f(x)$,
$$\Bar{x}_{k}^\top \grad f(\Bar{x}_k) + \grad f(\Bar{x}_k)^\top \Bar{x}_{k} = 0.$$
Since $\nabla p(\Bar{x}_k) = \Bar{x}_k\Delta_k$ we can write
\begin{equation*}
    \begin{aligned}
        \Delta_{k+1} 
        =& \Delta_k - \alpha ( 2 \lambda\Delta_k(I+\Delta_k) +\Bar{x}_k^\top  D_k  +  D_k^\top \Bar{x}_k )\\
        &+ \alpha^2 (\Lambda(\Bar{x}_k) + D_k)^\top (\Lambda(\Bar{x}_k) + D_k),
    \end{aligned}
\end{equation*}
which implies
\begin{equation*}
    \begin{aligned}
        \normfro{\Delta_{k+1}} \leq& (1- 2\alpha \lambda )\normfro{\Delta_{k}} + 2 \alpha \lambda \normfro{\Delta_{k}}^2+ 4 \alpha\normfro{D_k} \\
        &+ 2\alpha^2 (\normfro{\Lambda(\Bar{x}_k)}^2 + \normfro{D_k}^2).
    \end{aligned}
\end{equation*}
Given $\alpha \leq \frac{1}{2\lambda}, 1-2\alpha \lambda \geq 0$, and $\normfro{\Lambda(\Bar{x}_k)}^2 \leq G^2 + \lambda^2(1+\epsilon)\epsilon^2$ for $\Bar{x}_k \in \St(d, r)^{\frac{\epsilon}{2}}$, we have
\begin{equation*}
    \begin{aligned}
        \normfro{\Delta_{k+1}} &\leq (1- 2\alpha \lambda )\frac{\epsilon}{2} + 2 \alpha \lambda \frac{\epsilon^2}{4}+ 4 \alpha\normfro{D_k}\\
        &+ 2\alpha^2 (G^2 + \lambda^2(1+\epsilon)\epsilon^2 + \normfro{D_k}^2).  
    \end{aligned}
\end{equation*}
Observe that due to the initialization 
\begin{align}\label{eq:lipcont}
\normfro{\bar{y}_{k}-\Lambda(\Bar{x}_k)}&=\normfro{\frac{1}{n}\sum_{i=1}^n\Lambda_i(x_{i,k})-\Lambda(\Bar{x}_k)}\nonumber\\
&=\normfro{\frac{1}{n}\sum_{i=1}^n\Lambda_i(x_{i,k})-\frac{1}{n}\sum_{i=1}^n\Lambda_i(\Bar{x}_k)}\nonumber\\
&\leq \frac{L'}{n}\sum_{i=1}^n \normfro{x_{i,k}-\Bar{x}_k}\nonumber\\
&\leq \frac{L'}{\sqrt{n}}\normfro{\X_{k} -\Bar{\X}_k}.
\end{align}
We need $\normfro{\Delta_{k+1}}\leq \frac{\epsilon}{2}$ and we know from above that
\begin{align*}
      \normfro{D_k} &= \normfro{ \Bar{y}_k - \Lambda(\Bar{x}_k) }\leq \frac{L'}{\sqrt{n}}\frac{2 \alpha\sqrt{n}(G+\lambda \epsilon (1+\epsilon))}{(1-\sigma_W)^2}.
\end{align*}
If we set $\alpha \leq  \frac{\lambda \epsilon^2 (1 - \sigma_W)^2}{16 L'(G+\lambda \epsilon (1+\epsilon))}$ such that $\normfro{D_k} \leq \frac{\lambda\epsilon^2}{8}$,
a sufficient condition for $\alpha$ such that $\normfro{\Delta_{k+1}}\leq \frac{\epsilon}{2}$ is obtained as follows
\begin{equation*}
    \begin{aligned}
        2\alpha (G^2 + \lambda^2 &(1+\epsilon)\epsilon^2 + \frac{\epsilon^4 \lambda^2}{16}  )  -  \lambda {\epsilon} +    \lambda\epsilon^2\leq 0\\
        \Rightarrow\alpha \leq &\frac{\lambda \epsilon (1-\epsilon)}{2(G^2 + \lambda^2 (1+\epsilon)\epsilon^2 + \frac{\epsilon^4 \lambda^2}{16})}.
    \end{aligned}
\end{equation*}
Combining all the requirements above on $\alpha$, we get
\begin{equation*}
\begin{aligned}
    \alpha_{safe} =& \min\bigg\{
    \frac{ (1-\sigma_W)^2\epsilon }{20 \sqrt{n}(G+\lambda \epsilon (1+\epsilon))}, \frac{\lambda \epsilon^2 (1 - \sigma_W)^2}{16 L'(G+\lambda \epsilon (1+\epsilon))},\\
    &\frac{1}{2\lambda}, \frac{\lambda \epsilon (1-\epsilon)}{2(G^2 + \lambda^2 (1+\epsilon)\epsilon^2 + \frac{\epsilon^4 \lambda^2}{16})}
    \bigg\}.
\end{aligned}
\end{equation*}


\noindent
\textbf{Proof of Lemma \ref{lem:bound_x}:}
Since $\sigma_W$ is the second largest singular value of $W$, from Assumption \ref{assump_net} the following holds for all $\kappa >0$ by AM-GM inequality
\begin{align*}
\normfro{\X_k - \Bar{\X}_k}^2  
    &= \vphantom{\frac{1 + \kappa}{\kappa}\alpha^2}\normfro{\W  (\X_{k-1}- \Bar{\X}_{k-1} ) - \alpha (\Y_{k-1} - \Bar{\Y}_{k-1}) }^2\\
    &\vphantom{\frac{1 + \kappa}{\kappa}\alpha^2}\leq (1+\kappa)\normfro{\W  (\X_{k-1}- \Bar{\X}_{k-1} )}^2 \\
    &\quad+ \frac{1 + \kappa}{\kappa}\alpha^2 \normfro{\Y_{k-1} - \Bar{\Y}_{k-1}}^2 \\
    &\leq (1+\kappa) \sigma_W^2\normfro{ \X_{k-1}- \Bar{\X}_{k-1} }^2 \\
    &\quad+ \frac{1 + \kappa}{\kappa}\alpha^2 \normfro{\Y_{k-1} - \Bar{\Y}_{k-1}}^2.
\end{align*}
We let $\kappa = \frac{1-\sigma_W^2}{2 \sigma_W^2}$ and get:
\begin{align*}
    \normfro{\X_k - \Bar{\X}_k}^2\leq &\frac{1+\sigma_W^2}{2} \normfro{ \X_{k-1}- \Bar{\X}_{k-1} }^2\\
    &+ \frac{1 + \sigma_W^2}{1-\sigma_W^2}\alpha^2 \normfro{\Y_{k-1} - \Bar{\Y}_{k-1}}^2,
\end{align*}
which completes the proof.

\qed

\noindent
\textbf{Proof of Lemma \ref{lem:bound_y}:}  
By definition of the update, we have
\begin{align*}
    &\normfro{\Y_k - \Bar{\Y}_k}^2 \\
    = &
    \normfro{\W(\Y_{k-1} - \Bar{\Y}_{k-1}) + (\mathbf{\Lambda}_{k}- \Bar{\mathbf{\Lambda}}_k - \mathbf{\Lambda}_{k-1} + \Bar{\mathbf{\Lambda}}_{k-1})}^2.
    \end{align*}
    Using the same technique as Lemma \ref{lem:bound_x}, we get
\begin{equation*}
\resizebox{0.99\hsize}{!}{$
\begin{aligned}
& \frac{1 + \sigma_W^2}{1-\sigma_W^2} \normfro{\mathbf{\Lambda}_{k}- \Bar{\mathbf{\Lambda}}_k - \mathbf{\Lambda}_{k-1} + \Bar{\mathbf{\Lambda}}_{k-1}}^2\\
    \leq &  \frac{1 + \sigma_W^2}{1-\sigma_W^2} \normfro{\mathbf{\Lambda}_{k} - \mathbf{\Lambda}_{k-1} }^2\\
    \leq &   \frac{1 + \sigma_W^2}{1-\sigma_W^2}L'^2 \normfro{\X_{k} - \X_{k-1} }^2\\
    = &  \frac{1 + \sigma_W^2}{1-\sigma_W^2}L'^2 \normfro{ (\W - \bI)\X_{k-1}- \alpha \Y_{k-1}  }^2\\
    \leq & \frac{1 + \sigma_W^2}{1-\sigma_W^2}L'^2 (2\normfro{(\W - \bI)\X_{k-1} }^2 + 2\normfro{\alpha \Y_{k-1}  }^2)\\
    \leq & \frac{1 + \sigma_W^2}{1-\sigma_W^2}L'^2 (2\normfro{(\W - \bI)(\X_{k-1}-\bar{\X}_{k-1}) }^2 + 2\normfro{\alpha \Y_{k-1}  }^2)\\
    \leq &  8L'^2 \frac{1 + \sigma_W^2}{1-\sigma_W^2}\normfro{\X_{k-1} - \Bar{\X}_{k-1} }^2 + 2L'^2\alpha^2\frac{1 + \sigma_W^2}{1-\sigma_W^2}\normfro{ \Y_{k-1}  }^2.
  \end{aligned}
  $}
\end{equation*}
Therefore, we have
\begin{equation}\label{eq:y_consensus_bound_w_yk1}
\resizebox{0.99\hsize}{!}{$
\begin{aligned}
\normfro{\Y_k -& \Bar{\Y}_k}^2 
    \leq 8L'^2 \frac{1 + \sigma_W^2}{1-\sigma_W^2}\normfro{\X_{k-1} - \Bar{\X}_{k-1} }^2  \\
    &+\frac{1+\sigma_W^2}{2} \normfro{\Y_{k-1} - \Bar{\Y}_{k-1}}^2
    + 2L'^2\alpha^2\frac{1 + \sigma_W^2}{1-\sigma_W^2}\normfro{ \Y_{k-1}  }^2.
  \end{aligned}
  $}
\end{equation}

Now, for $\normfro{ \Y_{k-1}  }^2$, we provide the following decomposition:
\begin{align*}
        \normfro{ \Y_{k-1}  }^2 = &\normfro{ \Y_{k-1} -  \Bar{\Y}_{k-1} + \Bar{\Y}_{k-1} }^2\\
        \leq & 2 \normfro{ \Y_{k-1} -   \Bar{\Y}_{k-1} }^2 + 2\normfro{ \Bar{\Y}_{k-1} }^2.
\end{align*}
Plugging the above inequality into \eqref{eq:y_consensus_bound_w_yk1}, we get
\begin{equation*}
\resizebox{0.99\hsize}{!}{$
\begin{aligned}
    &\normfro{\Y_k - \Bar{\Y}_k}^2 \vphantom{\left(\frac{1+\sigma_W^2}{2} + 4L'^2\alpha^2\frac{1 + \sigma_W^2}{1-\sigma_W^2}\right)}\\
    \leq & \left(\frac{1+\sigma_W^2}{2} + 4L'^2\alpha^2\frac{1 + \sigma_W^2}{1-\sigma_W^2}\right) \normfro{\Y_{k-1} - \Bar{\Y}_{k-1}}^2 \\
    &+8L'^2 \frac{1 + \sigma_W^2}{1-\sigma_W^2}\normfro{\X_{k-1} - \Bar{\X}_{k-1} }^2 + 4L'^2\alpha^2\frac{1 + \sigma_W^2}{1-\sigma_W^2}\normfro{  \Bar{\Y}_{k-1}  }^2 \vphantom{\left(\frac{1+\sigma_W^2}{2} + 4L'^2\alpha^2\frac{1 + \sigma_W^2}{1-\sigma_W^2}\right)}.
  \end{aligned}
  $}
\end{equation*}

\qed

\noindent
\textbf{Proof of Theorem \ref{thm:stable}:} Combining the results of Lemma \ref{lem:bound_x} and Lemma \ref{lem:bound_y}, we can easily verify the linear system relationship. Then, we need to ensure that $\rho(\Tilde{G})<1$. Based on Lemma \ref{lem:spectral_rad}, $\rho(\Tilde{G})<1$ holds if there exists a solution $s_1,s_2>0$ for the following inequalities
\begin{align*}
    \left(\frac{1+\sigma_W^2}{2} + 4L'^2\alpha^2\frac{1 + \sigma_W^2}{1-\sigma_W^2}\right)s_1 + 8 \frac{1 + \sigma_W^2}{1-\sigma_W^2}s_2 &< s_1,\\
    \frac{1 + \sigma_W^2}{1-\sigma_W^2}\alpha^2 L'^2s_1 + \frac{1+\sigma_W^2}{2}s_2 &< s_2.
\end{align*}
Simplifying these equations, we get 
\begin{align*}
        2\frac{1 + \sigma_W^2}{(1-\sigma_W^2)^2}\alpha^2 L'^2s_1 &< s_2,\\
        8\frac{1 + \sigma_W^2}{1-\sigma_W^2}s_2 &< \left(\frac{1-\sigma_W^2}{2} - 4L'^2\alpha^2\frac{1 + \sigma_W^2}{1-\sigma_W^2}\right)s_1.
\end{align*}
After removing $s_2$, we get a sufficient condition for $\alpha$ such that 
$$\alpha < \frac{(1-\sigma_W^2)^2}{1+\sigma_W^2} \frac{1}{16L'}.$$

\qed

\noindent
\textbf{Proof of Lemma \ref{lem:decent}:} We first start with the Lipschitz smoothness of $\mathcal{L}$ and the updates of the algorithm to get
\begin{equation*}
\begin{aligned}
&\cL(\Bar{x}_k) -  \cL(\Bar{x}_{k-1}) \vphantom{\frac{L' }{2}}\\
\leq &   \langle \nabla \cL (\Bar{x}_{k-1}), \Bar{x}_{k} - \Bar{x}_{k-1}\rangle + \frac{L' }{2}\normfro{\Bar{x}_{k} - \Bar{x}_{k-1}}^2\\
= &   - \alpha\langle \nabla \cL (\Bar{x}_{k-1}), \bar{y}_{k-1}\rangle + \frac{\alpha^2 L' }{2}\normfro{\bar{y}_{k-1}}^2\\
= &    - \alpha\langle \nabla \cL (\Bar{x}_{k-1}), \bar{y}_{k-1} - \Lambda(\Bar{x}_{k-1}) + \Lambda(\Bar{x}_{k-1})\rangle \vphantom{\frac{L' }{2}}\\
&+ \frac{\alpha^2 L' }{2}\normfro{\bar{y}_{k-1}}^2\\
\leq & \vphantom{\frac{L' }{2}}  - \alpha \rho \normfro{\Lambda(\Bar{x}_{k-1})}^2 - \alpha\langle \nabla \cL (\Bar{x}_{k-1}), \bar{y}_{k-1} - \Lambda(\Bar{x}_{k-1}) \rangle \\
&+ \frac{\alpha^2 L' }{2}\normfro{\bar{y}_{k-1}}^2\\
\leq & \vphantom{\frac{L' }{2}}  - \alpha \rho \normfro{\Lambda(\Bar{x}_{k-1})}^2 + \frac{\alpha}{2 \kappa } \normfro{\nabla \cL (\Bar{x}_{k-1})}^2 \\
&+ \frac{\alpha \kappa}{2} \normfro{\bar{y}_{k-1} - \Lambda(\Bar{x}_{k-1})}^2 + \frac{\alpha^2 L' }{2}\normfro{\bar{y}_{k-1}}^2,
  \end{aligned}
\end{equation*}
 where the second inequality used the results from Proposition \ref{prop_L} and the last inequality holds for any $\kappa>0$ due to AM-GM inequality. Now, with the help of Lemma \ref{lem:bound_C} and setting $\kappa = \frac{C^2}{\rho }$ in above, we get
\begin{equation*}
\resizebox{0.99\hsize}{!}{$
\begin{aligned}
&\cL(\Bar{x}_k) -  \cL(\Bar{x}_{k-1}) \vphantom{\frac{\alpha C^2}{2 \rho}} \\
\leq &   - \alpha \rho \normfro{\Lambda(\Bar{x}_{k-1})}^2 + \frac{\alpha \rho}{2 C^2 } \normfro{\nabla \cL (\Bar{x}_{k-1})}^2 \\
&+ \frac{\alpha C^2}{2 \rho} \normfro{\bar{y}_{k-1} - \Lambda(\Bar{x}_{k-1})}^2 + \frac{\alpha^2 L' }{2}\normfro{\bar{y}_{k-1}}^2\\
\leq &   - \frac{\alpha \rho}{2 } \normfro{\Lambda(\Bar{x}_{k-1})}^2  + \frac{\alpha C^2}{2 \rho} \normfro{\bar{y}_{k-1} - \Lambda(\Bar{x}_{k-1})}^2  + \frac{\alpha^2 L' }{2}\normfro{\bar{y}_{k-1}}^2\\
\leq &   - \frac{\alpha \rho}{2 } \normfro{\Lambda(\Bar{x}_{k-1})}^2  + \frac{\alpha C^2 L'^2}{2 \rho n} \normfro{\bar{\X}_{k-1} - \X_{k-1}}^2 + \frac{\alpha^2 L' }{2}\normfro{\bar{y}_{k-1}}^2,
  \end{aligned}
  $}
\end{equation*}
where the last inequality is due to \eqref{eq:lipcont}. The proof is completed.
\qed

\subsection{Proofs in Section \ref{sec:conv}}

\noindent
\textbf{Proof of Corollary \ref{cor:sum_x}:} We first calculate the determinant of 
$I - \Tilde{G}$. Given that
\begin{align*}
    I - \Tilde{G} & =\begin{bmatrix}
    1-\frac{1+\sigma_W^2}{2} - 4L'^2\alpha^2\frac{1 + \sigma_W^2}{1-\sigma_W^2} & -8\frac{1 + \sigma_W^2}{1-\sigma_W^2}\\
        -\frac{1 + \sigma_W^2}{1-\sigma_W^2}\alpha^2 L'^2&1-\frac{1+\sigma_W^2}{2}
    \end{bmatrix},
\end{align*}
we can explicitly write
\begin{equation*}
\resizebox{0.99\hsize}{!}{$
\begin{aligned}
det(I - \Tilde{G}) 
     = &(\frac{1-\sigma_W^2}{2})^2 -2\alpha^2 L'^2 \Big((1 + \sigma_W^2)+4 \frac{ (1+\sigma_W^2)^2}{(1-\sigma_W^2)^2}\Big).
  \end{aligned}
  $}
\end{equation*}
Then, in order to have $det(I - \Tilde{G}) \geq \frac{(1-\sigma_W^2)^2}{8}$, we need
\begin{align*}
    2\alpha^2 L'^2 \Big((1 + \sigma_W^2)+4 \frac{ (1+\sigma_W^2)^2}{(1-\sigma_W^2)^2}\Big) & \leq \frac{(1-\sigma_W^2)^2}{8},
\end{align*}
which is satisfied when $\alpha < \frac{(1-\sigma_W^2)^2}{1+\sigma_W^2} \frac{1}{16L'}$. Next, we note that
\begin{align*}
    &(I - \Tilde{G} )^{-1} = \frac{(I - \Tilde{G} )^*}{det (I - \Tilde{G} )}  \\
     \leq &\frac{8}{(1-\sigma_W^2)^2} \begin{bmatrix}
    \frac{1-\sigma_W^2}{2} - 4L'^2\alpha^2\frac{1 + \sigma_W^2}{1-\sigma_W^2} & -8\frac{1 + \sigma_W^2}{1-\sigma_W^2}\\
        -\frac{1 + \sigma_W^2}{1-\sigma_W^2}\alpha^2 L'^2&\frac{1-\sigma_W^2}{2}
    \end{bmatrix}^*\\
      = &\frac{8}{(1-\sigma_W^2)^2} \begin{bmatrix}
     \frac{1-\sigma_W^2}{2}& 8\frac{1 + \sigma_W^2}{1-\sigma_W^2}\\
        \frac{1 + \sigma_W^2}{1-\sigma_W^2}\alpha^2 L'^2&\frac{1-\sigma_W^2}{2} - 4L'^2\alpha^2\frac{1 + \sigma_W^2}{1-\sigma_W^2}
    \end{bmatrix}\\
     =  &\begin{bmatrix}
     \frac{4}{1-\sigma_W^2}& 64\frac{1 + \sigma_W^2}{(1-\sigma_W^2)^3}\\
        \frac{1 + \sigma_W^2}{(1-\sigma_W^2)^3} 8\alpha^2 L'^2&\frac{4}{1-\sigma_W^2} - 32 L'^2\alpha^2\frac{1 + \sigma_W^2}{(1-\sigma_W^2)^3}
    \end{bmatrix}.
\end{align*}
The above inequality is element-wise, with each entry non-negative.

Now, given that $\Tilde{\xi}_k \leq \Tilde{G}\Tilde{\xi}_{k-1} + \Tilde{u}_{k-1}$ and that $\Tilde{\xi}_0 = 0$, we have
\begin{align*}
    \Tilde{\xi}_k \leq   \sum_{t=0}^{k-1} \Tilde{G}^t\Tilde{u}_{k-t-1}
   \Rightarrow \sum_{k=1}^K\Tilde{\xi}_k \leq (I - \Tilde{G} )^{-1}  \sum_{k=0}^{K-1} \Tilde{u}_{k} .
\end{align*}
Given the inequality above, we know that
\begin{equation*}
\resizebox{0.99\hsize}{!}{$
\begin{aligned}
\sum_k L' &\normfro{\X_k -  \Bar{\X}_k}^2 \leq \frac{1 + \sigma_W^2}{(1-\sigma_W^2)^3} 8\alpha^2 L'^2  \sum_k 4L'\alpha^2\frac{1 + \sigma_W^2}{1-\sigma_W^2}\normfro{  \Bar{\Y}_{k-1}  }^2.
  \end{aligned}
  $}
\end{equation*}
Therefore, we can write
\begin{align*}
    \sum_k  \normfro{\X_k -  \Bar{\X}_k}^2 &\leq  \frac{(1 + \sigma_W^2)^2}{(1-\sigma_W^2)^4} 32\alpha^4 L'^2  \sum_k\normfro{  \Bar{\Y}_{k-1}  }^2,
\end{align*}
which completes the proof. 

\qed

\noindent
\textbf{Proof of Theorem \ref{thm:ergodic}:} We first state that $\alpha  < \frac{(1-\sigma_W^2)^2}{1+\sigma_W^2} \frac{1}{16L'}$ guarantees the stability of the system \eqref{eq:system} and enables the use of Corollary \ref{cor:sum_x}. We have that
\begin{align}\label{eq:ybarminuslambda}
    \normfro{\bar{y}_{k-1}}^2 = & \normfro{\Lambda(\Bar{x}_{k-1}) + (\bar{y}_{k-1} - \Lambda(\Bar{x}_{k-1}))}^2\nonumber\\
    \leq & 2 \normfro{\Lambda(\Bar{x}_{k-1})}^2 + 2 \normfro{\bar{y}_{k-1} - \Lambda(\Bar{x}_{k-1})}^2.
\end{align}
Combining this with Lemma \ref{lem:decent}, we get
\begin{equation*}
\resizebox{0.99\hsize}{!}{$
\begin{aligned}
\vphantom{\Big(\frac{\alpha C^2 L'^2}{2 \rho n} + \frac{2\alpha^2 L'^3 }{n} \Big) }&\cL(\Bar{x}_k) -  \cL(\Bar{x}_{k-1})\\
\vphantom{\Big(\frac{\alpha C^2 L'^2}{2 \rho n} + \frac{2\alpha^2 L'^3 }{n} \Big) }\leq &   - \frac{\alpha \rho}{2 } \normfro{\Lambda(\Bar{x}_{k-1})}^2  + \frac{\alpha C^2 L'^2}{2 \rho n} \normfro{\bar{\X}_{k-1} - \X_{k-1}}^2 + \frac{\alpha^2 L' }{2}\normfro{\bar{y}_{k-1}}^2\\
\vphantom{\Big(\frac{\alpha C^2 L'^2}{2 \rho n} + \frac{2\alpha^2 L'^3 }{n} \Big) }\leq &  - \frac{\alpha \rho}{2 } \normfro{\Lambda(\Bar{x}_{k-1})}^2  + \frac{\alpha C^2 L'^2}{2 \rho n} \normfro{\bar{\X}_{k-1} - \X_{k-1}}^2 - \frac{\alpha^2 L' }{2}\normfro{\bar{y}_{k-1}}^2\\
\vphantom{\Big(\frac{\alpha C^2 L'^2}{2 \rho n} + \frac{2\alpha^2 L'^3 }{n} \Big) }& + 2\alpha^2 L' \normfro{\Lambda(\Bar{x}_{k-1})}^2 + 2\alpha^2 L' \normfro{\bar{y}_{k-1} - \Lambda(\Bar{x}_{k-1})}^2\\
\vphantom{\Big(\frac{\alpha C^2 L'^2}{2 \rho n} + \frac{2\alpha^2 L'^3 }{n} \Big) }\leq &  - \left( \frac{\alpha \rho}{2 } - 2\alpha^2 L'\right) \normfro{\Lambda(\Bar{x}_{k-1})}^2  \\
\vphantom{\Big(\frac{\alpha C^2 L'^2}{2 \rho n} + \frac{2\alpha^2 L'^3 }{n} \Big) }&+ \Big(\frac{\alpha C^2 L'^2}{2 \rho n} + \frac{2\alpha^2 L'^3 }{n} \Big) \normfro{\bar{\X}_{k-1} - \X_{k-1}}^2 - \frac{\alpha^2 L' }{2}\normfro{\bar{y}_{k-1}}^2 .
  \end{aligned}
  $}
\end{equation*}
Note that $\alpha \leq \frac{\rho}{8L'}$, and in addition we know from Proposition \ref{prop_L} that $ \rho \leq \frac{1}{2}$. Therefore,
  \begin{align*}
&\cL(\Bar{x}_k) -  \cL(\Bar{x}_{k-1}) \leq -  \frac{\alpha \rho}{4 } \normfro{\Lambda(\Bar{x}_{k-1})}^2  \\
& 
+ \frac{\alpha  L'^2C^2}{\rho n}  \normfro{\bar{\X}_{k-1} - \X_{k-1}}^2 - \frac{\alpha^2 L' }{2}\normfro{\bar{y}_{k-1}}^2 .
    \end{align*}
Rearranging and summing both sides over $k = 1, ..., K$ and using Corollary \ref{cor:sum_x}, we have
\begin{equation*}
    \begin{aligned}
&\sum_k\frac{\alpha \rho}{4 } \normfro{\Lambda(\Bar{x}_{k-1})}^2 -(\cL(\Bar{x}_{0}) -\cL(\Bar{x}_K))\\
    \leq &   - \frac{\alpha^2 L' }{2}\sum_k \normfro{\bar{y}_{k-1}}^2   +  \frac{\alpha  L'^2C^2}{\rho n}\sum_k \normfro{\bar{\X}_{k-1} - \X_{k-1}}^2 \\
    \leq &   - \frac{\alpha^2 L' }{2}\sum_k \normfro{\bar{y}_{k-1}}^2 \\
    &+ \frac{\alpha  L'^2C^2}{\rho n}\frac{(1 + \sigma_W^2)^2}{(1-\sigma_W^2)^4} 32\alpha^4 L'^2  \sum_k\normfro{  \Bar{\Y}_{k-1}  }^2 \\
    =&  - \Big(\frac{\alpha^2 L' }{2} - \frac{\alpha  L'^2C^2}{\rho }\frac{(1 + \sigma_W^2)^2}{(1-\sigma_W^2)^4} 32\alpha^4 L'^2 \Big) \sum_k\normfro{  \Bar{y}_{k-1}  }^2 .
    \end{aligned}
\end{equation*}
Selecting $\alpha \leq \frac{1}{4L'}\sqrt[3]{\frac{\rho (1-\sigma_W^2)^4}{(1 + \sigma_W^2)^2 C^2}}$ such that $ \frac{\alpha^2 L' }{2} - \frac{\alpha  L'^2C^2}{\rho }\frac{(1 + \sigma_W^2)^2}{(1-\sigma_W^2)^4} 32\alpha^4 L'^2\geq 0$, we get the desired result
\begin{align*}
    &\frac{\sum_k \normfro{\Lambda(\Bar{x}_{k-1})}^2 }{K}
    \leq \frac{1}{K}\frac{4}{\alpha \rho}(\cL(\Bar{x}_{0}) -\cL(\Bar{x}_K)).
\end{align*}

In addition, the consensus error can be bounded with 
\begin{equation*}
    \begin{aligned}
    \sum_k  \normfro{\X_k -  \Bar{\X}_k}^2 
    &\leq  \frac{(1 + \sigma_W^2)^2}{(1-\sigma_W^2)^4} 32\alpha^4 L'^2  \sum_k\normfro{  \Bar{\Y}_{k-1}  }^2\\
    &\leq   \frac{(1 + \sigma_W^2)^2}{(1-\sigma_W^2)^4} 64\alpha^4 L'^2 n  \sum_k\normfro{\Lambda(\Bar{x}_{k-1})}^2 \\
    &+ \frac{(1 + \sigma_W^2)^2}{(1-\sigma_W^2)^4} {64\alpha^4 L'^4} \sum_k  \normfro{\X_{k-1} -  \Bar{\X}_{k-1}}^2,
    \end{aligned}
\end{equation*}
where we used \eqref{eq:lipcont} and \eqref{eq:ybarminuslambda} again. Since we have $$\alpha \leq \frac{1}{4L'}\sqrt[3]{\frac{\rho (1-\sigma_W^2)^4}{(1 + \sigma_W^2)^2 C^2}},$$
and $\rho \leq \frac{1}{2}, C>1$, it is easy to show that $\frac{(1 + \sigma_W^2)^2}{(1-\sigma_W^2)^4} {64\alpha^4 L'^4} \leq \frac{1}{2}$; hence, we can write
\begin{equation*}
\resizebox{0.99\hsize}{!}{$
\begin{aligned}
    \sum_k  \normfro{\X_k -  \Bar{\X}_k}^2 
    \leq&  \frac{(1 + \sigma_W^2)^2}{(1-\sigma_W^2)^4} 128n\alpha^4 L'^2  \sum_k\normfro{\Lambda(\Bar{x}_{k-1})}^2,
  \end{aligned}
  $}
\end{equation*}
which completes the proof.

\qed

\subsection{Proofs in Section \ref{sec:conv_PL}}
\noindent
\textbf{Proof of Lemma
 \ref{lem:KL-QG}:} Since $x \in \St(d, r)^\epsilon$, given Proposition \ref{prop_L}, it is clear that,
    $$\normfro{\nabla \cL(x)} \geq \rho \normfro{ \Lambda(x)}.$$
Combined with Lemma \ref{lem:pseudo-grad-dominate}, we have,
$$\mathcal{L}(x) -\cL_\mathcal{S}^\ast  \leq \frac{1}{\mu'} \normfro{\Lambda(x)}^2 \leq \frac{1}{\mu' \rho^2} \normfro{\nabla \cL(x)}^2.$$
    Therefore, we know that the local Euclidean PŁ condition for $\cL(x)$ holds on $\forall x \in \mathcal{D}(\mathcal{S}, \delta) \cap \St(d, r)^\epsilon$. Given Proposition 2.2 in \cite{rebjock2023fast}, 
the local Euclidean PŁ condition on $\cL(x)$ implies the quadratic growth relationship below,
$$\cL(x) - \cL_\mathcal{S}^\ast\geq \frac{\mu' \rho^2}{4} \text{dist}( \mathcal{S}, x)^2.$$
\qed

\noindent
\textbf{Proof of Lemma \ref{lem:dist_V}:} Without  loss of generality and for the ease of notation, in this proof, we consider the case where $ \cL_{\mathcal{S}}^*=0$. By the smoothness of $\cL$ and the algorithm update, we have
\begin{equation*}
\resizebox{0.99\hsize}{!}{$
    \begin{aligned}
        \cL(\Bar{x}_k)  \leq & \cL(\Bar{x}_{k-1})  + \langle \nabla \cL (\Bar{x}_{k-1}), \Bar{x}_{k} - \Bar{x}_{k-1}\rangle + \frac{L' }{2}\normfro{\Bar{x}_{k} - \Bar{x}_{k-1}}^2\\
        \vphantom{\frac{L' }{2}}= & \cL(\Bar{x}_{k-1})  - \langle \nabla \cL (\Bar{x}_{k-1}), \alpha \Bar{y}_{k-1}\rangle + \frac{L' \alpha^2}{2}\normfro{\Bar{y}_{k-1}}^2\\
        \vphantom{\frac{L' }{2}}= & \cL(\Bar{x}_{k-1})  - \alpha\langle \nabla \cL (\Bar{x}_{k-1}), \Lambda(\Bar{x}_{k-1})\rangle \\
        \vphantom{\frac{L' }{2}}+& \alpha\langle \nabla \cL (\Bar{x}_{k-1}), \Lambda(\Bar{x}_{k-1}) - \Bar{y}_{k-1}\rangle + \frac{L' \alpha^2}{2}\normfro{\Bar{y}_{k-1}}^2.
    \end{aligned}
$}
\end{equation*}
        Applying Proposition \ref{prop_L}, we derive
        \begin{align*}
        \vphantom{\frac{L' }{2}}\cL(\Bar{x}_k)  \leq & \cL(\Bar{x}_{k-1}) -\alpha\rho \normfro{\Lambda(\Bar{x}_{k-1})}^2  \\
       \vphantom{\frac{L' }{2}} +& \alpha\langle \nabla \cL (\Bar{x}_{k-1}), \Lambda(\Bar{x}_{k-1}) - \Bar{y}_{k-1}\rangle + \frac{L'\alpha^2}{2}\normfro{\Bar{y}_{k-1}}^2\\
       \vphantom{\frac{L' }{2}} = & \cL(\Bar{x}_{k-1}) -\alpha\rho \normfro{\Lambda(\Bar{x}_{k-1})}^2  \\
       \vphantom{\frac{L' }{2}} +& \alpha\langle \nabla \cL (\Bar{x}_{k-1}), \Lambda(\Bar{x}_{k-1}) - \Bar{y}_{k-1}\rangle \\
       \vphantom{\frac{L' }{2}} +& \frac{L'\alpha^2}{2}\normfro{\Bar{y}_{k-1} -\Lambda(\Bar{x}_{k-1}) +\Lambda(\Bar{x}_{k-1})}^2.
        \end{align*}
Applying the AM-GM inequality on the last term as well as the inner product term (with a factor $\eta > 0$), we get 
\begin{equation*}
\resizebox{0.99\hsize}{!}{$
\begin{aligned}
\vphantom{\frac{C^2}{2}}\cL(\Bar{x}_k)
        \leq & \cL(\Bar{x}_{k-1}) -(\alpha\rho - \alpha^2 L') \normfro{\Lambda(\Bar{x}_{k-1})}^2   + \frac{\eta \alpha}{2}\normfro{\nabla \cL (\Bar{x}_{k-1})}^2 \\
    \vphantom{\frac{C^2}{2}}    +&  L'\alpha^2\normfro{\Bar{y}_{k-1} -\Lambda(\Bar{x}_{k-1}) }^2 
        + \frac{\alpha}{2\eta}\normfro{\Lambda(\Bar{x}_{k-1}) - \Bar{y}_{k-1}}^2\\
        \vphantom{\frac{C^2}{2}} \leq & \cL(\Bar{x}_{k-1}) -(\alpha\rho - \alpha^2 L' - \eta \alpha C^2/2) \normfro{\Lambda(\Bar{x}_{k-1})}^2 \\
        \vphantom{\frac{C^2}{2}}+&  (L'\alpha^2+\frac{\alpha}{2\eta})\normfro{\Bar{y}_{k-1} -\Lambda(\Bar{x}_{k-1}) }^2, 
  \end{aligned}
  $}
\end{equation*}
where the last inequality is due to Lemma \ref{lem:bound_C}. We then apply Lemma \ref{lem:pseudo-grad-dominate} on the term $\normfro{\Lambda(\Bar{x}_{k-1})}^2$,
set $\eta = \frac{\rho}{2 C^2}$, consider $\alpha \leq \frac{\rho}{2L'}$, and use \eqref{eq:lipcont} on the last term to get the final result.



\qed

\noindent
\textbf{Proof of Lemma \ref{lem:bound_y_PL}:} We first recall \eqref{eq:y_consensus_bound_w_yk1}, showing that the consensus error on $\Y_k$ can be bounded using terms related to previous iterations, including $\normfro{ \Y_{k-1}  }^2$ for which we provide the following decomposition,$$\normfro{ \Y_{k-1}  }^2 \leq 2 \normfro{ \Y_{k-1} -  \Bar{\Y}_{k-1} }^2 + 2n\normfro{ \Bar{y}_{k-1} }^2,$$
and observe that
\begin{align*}
        2n\normfro{ \Bar{y}_{k-1} }^2 \leq &  4n\normfro{ \Bar{y}_{k-1} - \Lambda(\Bar{x}_{k-1}) }^2 + 4n\normfro{ \Lambda(\Bar{x}_{k-1}) }^2\\
        \leq &  4 L'^2\normfro{ \X_{k-1} -  \Bar{\X}_{k-1} }^2 + 4n\normfro{ \Lambda(\Bar{x}_{k-1}) }^2\\
        \leq & 4 L'^2\normfro{ \X_{k-1} -  \Bar{\X}_{k-1} }^2 + 48nL'\rho^{-2}\cL(\Bar{x}_{k-1}),
\end{align*}
where last inequality is derived from Lemma \ref{lem:descent_lemma} assuming $ \cL_{\mathcal{S}}^*=0$ for notation convenience. Plugging the inequalities above into \eqref{eq:y_consensus_bound_w_yk1}, we get
\begin{align*}
    \normfro{\Y_k - \Bar{\Y}_k}^2 
    &\leq  
    \Big(\frac{1+\sigma_W^2}{2} + 4L'^2\alpha^2\frac{1 + \sigma_W^2}{1-\sigma_W^2}\Big)\normfro{\Y_{k-1} - \Bar{\Y}_{k-1}}^2 \\
    &+8L'^2\frac{1 + \sigma_W^2}{1-\sigma_W^2} 
    (1+\alpha^2 L'^2)
    \normfro{\X_{k-1} - \Bar{\X}_{k-1} }^2\\
    &+ \frac{1 + \sigma_W^2}{1-\sigma_W^2}\frac{96n \alpha^2L'^3}{\rho^2}\cL(\Bar{x}_{k-1}).
\end{align*}

\qed

\noindent
\textbf{Proof of Theorem \ref{thm:distributed_convergence}:} The matrix $M$ is constructed by recalling the definition of the state vector in \eqref{eq:state_vector},
\begin{equation*}
    \xi_k \triangleq \begin{bmatrix}
        \normfro{\Y_k -  \Bar{\Y}_k}^2/L',\ 
        L'\normfro{\X_k -  \Bar{\X}_k}^2,\ 
        n(\cL(\Bar{x}_k) - \cL_{\mathcal{S}}^*)
    \end{bmatrix}^\top,
\end{equation*}
and applying Lemmas \ref{lem:bound_x}, \ref{lem:dist_V}, \ref{lem:bound_y_PL} on the three terms. For the ease of notation, we define $\Theta \triangleq \frac{1 + \sigma_W^2}{1-\sigma_W^2}$ to get
\begin{equation*}
\resizebox{0.99\hsize}{!}{$
\begin{aligned}
        &M \triangleq
    \begin{bmatrix}
        \frac{1+\sigma_W^2}{2} + 4L'^2\alpha^2\Theta &
        8 \alpha^2 L'^2\Theta + 8\Theta &
        96 L'^2\alpha^2\Theta \frac{1}{\rho^2}\\
        \alpha^2 L'^2\Theta&
        \frac{1+\sigma_W^2}{2}&
        0\\
        0&
        \alpha^2L'^2 + \frac{\alpha L' C^2 }{\rho}&
        1  - \frac{\alpha \rho\mu'}{4}
    \end{bmatrix}.
\end{aligned}
$}
\end{equation*}

We then prove that the system converges with a linear rate $1  - \frac{\alpha \rho \mu'}{8}$. Suppose that we have a positive vector $\boldsymbol{\delta} = [\delta_1, \delta_2, \delta_3]^\top$, such that 
    \begin{equation*}
        M \boldsymbol{\delta} \leq (1  - \frac{\alpha \rho \mu'}{8}) \boldsymbol{\delta},
    \end{equation*}
with element-wise inequality. Then, in lieu of Lemma \ref{lem:spectral_rad}, the spectral radius of $M$ is upper bounded by $1  - \frac{\alpha \rho \mu'}{8}$. 

In the next few lines of the proof, we solve these three inequalities to obtain sufficient conditions on $\alpha$ and find such $\boldsymbol{\delta}$. We start from the third row to get
\begin{align*}
    \Big( \alpha^2L'^2 + \frac{\alpha L' C^2 }{\rho} \Big) \delta_2
        + \Big(1  - \frac{\alpha \rho\mu'}{4}\Big) \delta_3
        &\leq \Big(1  - \frac{\alpha \rho\mu' }{8} \Big) \delta_3,
\end{align*}
for which, given $\alpha \leq \frac{1}{2L'}$, a sufficient condition is 
\begin{align*}
    \Big(\frac{4 L'}{\rho\mu'} + \frac{ 8L' C^2  }{\rho^2\mu'}\Big) \delta_2
        &\leq \delta_3.
\end{align*}
We define $\Phi \triangleq \frac{4 L'}{\rho\mu'} + \frac{ 8L' C^2  }{\rho^2\mu'}$, and fix $\delta_2 = 1, \delta_3 = \Phi.$ Next, we look at the first equation
\begin{align*}
    \Big(\frac{1+\sigma_W^2}{2} + 4L'^2\alpha^2\Theta\Big) \delta_1 
        &+ \Big(8 \alpha^2 L'^2\Theta + 8\Theta\Big)\delta_2 \\
        + \Big(96 L'^2\alpha^2\Theta \frac{1}{\rho^2}\Big) \delta_3
        &\leq \Big(1  - \frac{\alpha \rho\mu'}{8}\Big)\delta_1.
\end{align*}
We choose $\alpha > 0$ such that $4L'^2\alpha^2\Theta \leq \frac{1-\sigma_W^2}{8}$ and $\frac{\alpha \rho \mu'}{8} \leq \frac{1-\sigma_W^2}{8}$. This puts constraint on $\alpha$ such that 
\begin{equation}\label{eq:cond1}
\alpha \leq \min \left\{\sqrt{\frac{1-\sigma_W^2}{32 L'^2 \Theta}}, \frac{1-\sigma_W^2}{\rho\mu'}\right\}.    
\end{equation}
Then, with $\delta_2 = 1, \delta_3 = \Phi$, a sufficient condition for the inequality is 
\begin{align*}
     \left(8 \alpha^2 L'^2\Theta + 8\Theta\right)
        + \left(96 L'^2\alpha^2\Theta \frac{\Phi}{\rho^2}\right) 
        &\leq \frac{1-\sigma_W^2}{4}\delta_1.
\end{align*}
Therefore, we fix 
\begin{equation*}
    \begin{aligned}
        \delta_1 =& \frac{32}{1-\sigma_W^2}\left( \alpha^2 L'^2\Theta + \Theta+12 L'^2\alpha^2\Theta \frac{\Phi}{\rho^2}\right).
    \end{aligned}
\end{equation*}
Finally, we look at the second equation using our choices of $\delta_1$ and $\delta_2$ to get,
\begin{equation*}
\begin{aligned}
         \alpha^4 L'^4\Theta^2(1 + \frac{12\Phi}{\rho^2}) + \alpha^2 L'^2\Theta^2 
        &\leq  \frac{(1-\sigma_W^2)^2}{128}.
\end{aligned}
\end{equation*}
A sufficient condition for the above inequality to hold is that
\begin{align*}
     \alpha^4 L'^4\Theta^2(1 + \frac{12\Phi}{\rho^2}) &\leq  \frac{(1-\sigma_W^2)^2}{256}\\
    \alpha^2 L'^2\Theta^2 
        &\leq  \frac{(1-\sigma_W^2)^2}{256}.
\end{align*}
This can be achieved when 
\begin{equation*}
    \alpha \leq \sqrt[4]{\frac{(1-\sigma_W^2)^2}{256 L'^4\Theta^2 (1 + \frac{12\Phi}{\rho^2})}} = \frac{\sqrt{1-\sigma_W^2}}{4L' \sqrt{\Theta} \sqrt[4]{1 + \frac{12\Phi}{\rho^2}}},
\end{equation*}
and
\begin{equation*}
    \alpha \leq  \frac{1-\sigma_W^2}{16  L'\Theta}.
\end{equation*}
Combining the above constraints on $\alpha$ with \eqref{eq:cond1} and simplifying the sufficient conditions, we obtain
\begin{equation*}
\begin{aligned}
    \alpha \leq \min & \left\{\frac{1-\sigma_W^2}{\rho\mu'} , \frac{\sqrt{1-\sigma_W^2}}{4L' \sqrt{\Theta} \sqrt[4]{1 + \frac{12\Phi}{\rho^2}}}, \frac{1-\sigma_W^2}{16  L'\Theta} \right\}.
\end{aligned}
    \end{equation*}
We further want $\alpha \leq \frac{\rho}{2 L'}$ to apply Lemma \ref{lem:dist_V}, which together with above completes the proof for step size requirement.

We now need to verify that $\Bar{x}_k \in \mathcal{D}(\mathcal{S}, \delta) \cap \St(d, r)^{\frac{\epsilon}{2}}$ for every iteration $k$. We know $\Bar{x}_k \in \St(d,r)^\frac{\epsilon}{2}$ is already satisfied due to the safe step size constraint. Recall that $\mathbf{v}$ is the eigenvector corresponding to the spectral radius of $M$, which has all positive elements since $M$ is non-negative and irreducible. 
We prove by induction that if $\Bar{x}_k \in \mathcal{D}(\mathcal{S}, \delta)$ and $\xi_k \leq \frac{1}{8}n \mu' \rho^2 \delta^2 \mathbf{v}$, then $\Bar{x}_{k+1} \in \mathcal{D}(\mathcal{S}, \delta)$ and $\xi_{k+1} \leq\frac{1}{8} n \mu' \rho^2 \delta^2\mathbf{v}$, where the inequality is element-wise and $\normone{\mathbf{v}}=1$. We have
\begin{align*}
    \text{dist}(\mathcal{S}, \Bar{x}_{k+1}) &\leq\text{dist}(\mathcal{S}, \Bar{x}_k) + \normfro{\Bar{x}_k - \Bar{x}_{k+1}}\\
    &=\text{dist}(\mathcal{S}, \Bar{x}_k) + \alpha \normfro{ \Bar{y}_{k}}\\
    &\leq\text{dist}(\mathcal{S}, \Bar{x}_k) + \alpha \normfro{ \Bar{y}_{k} - \Lambda(\Bar{x}_k)} + \alpha \normfro{\Lambda(\Bar{x}_k)}\\
    &\leq(1+\alpha L')\text{dist}(\mathcal{S}, \Bar{x}_k) + \alpha L'n^{-1/2} \normfro{ \X_{k} - \Bar{\X}_k}\\
    &\leq 1.25~\text{dist}(\mathcal{S}, \Bar{x}_k)+0.25 n^{-1/2} \normfro{ \X_{k} - \Bar{\X}_k},
\end{align*}
where we used $\alpha L'\leq \rho/2 \leq 0.25$, Lipschitz continuity of $\Lambda$, and \eqref{eq:lipcont} in the last two lines. Applying Lemma \ref{lem:KL-QG}, we know that 
$$\text{dist}(\mathcal{S}, \Bar{x}_k)^2 \leq \frac{4(\cL(\Bar{x}_k) - \cL_\mathcal{S}^\ast)}{\mu' \rho^2}\leq \frac{\delta^2}{2},$$
where the last inequality is due to the induction assumption of $\xi_k \leq \frac{1}{8}n \mu' \rho^2 \delta^2 \mathbf{v}$, which also guarantees that $n^{-1/2} \normfro{ \X_{k} - \Bar{\X}_k} \leq \frac{\delta}{2\sqrt{2}}$. Therefore, we have $\text{dist}(\mathcal{S}, \Bar{x}_{k+1}) \leq \delta$, i.e., $\Bar{x}_{k+1} \in \mathcal{D}(\mathcal{S}, \delta)$. It is also immediate that $$\xi_{k+1} \leq M\xi_k \leq \frac{1}{8} n \mu' \rho^2 \delta^2 M\mathbf{v} < \frac{1}{8} n \mu' \rho^2 \delta^2\mathbf{v}.$$ 

Therefore, given the initial conditions on $x_0$ and $\xi_{0}$, by induction $\Bar{x}_k  \in \mathcal{D}(\mathcal{S}, \delta) \cap \St(d, r)^{\frac{\epsilon}{2}}$  and $\xi_k \leq \frac{1}{8}n \mu' \rho^2 \delta^2 \mathbf{v}$ are guaranteed to be satisfied for all $k$. 
\qed

\bibliographystyle{IEEEtran}
\bibliography{ref}
\begin{IEEEbiography}
	[{\includegraphics[width=0.95in,height=1in,clip,keepaspectratio]{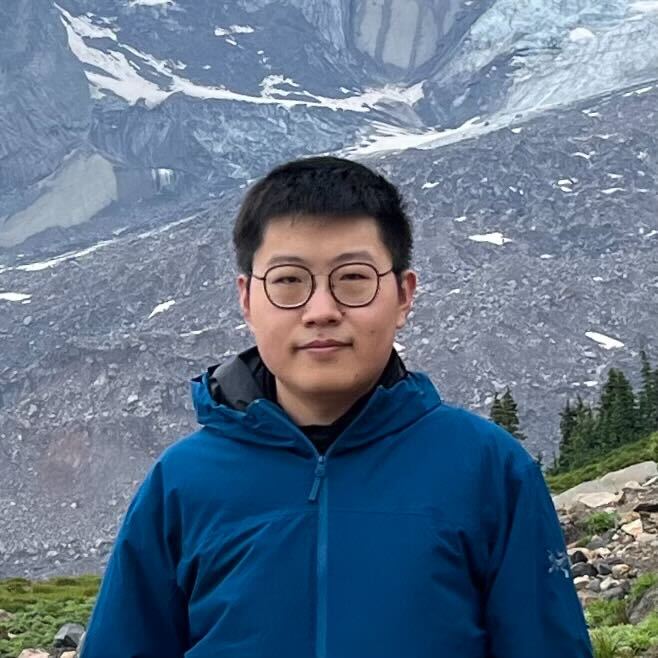}}] 
	{Youbang Sun}   is currently a Ph.D. candidate in the Department of Mechanical and Industrial Engineering at Northeastern University. His research interests include areas in machine learning and optimization with emphasis on distributed and multi-agent systems. He is interested in topics such as distributed optimization, Riemannian optimization, federated learning, and multi-agent reinforcement learning.
\end{IEEEbiography}

\begin{IEEEbiography}
	[{\includegraphics[width=0.95in,height=1in,clip,keepaspectratio]{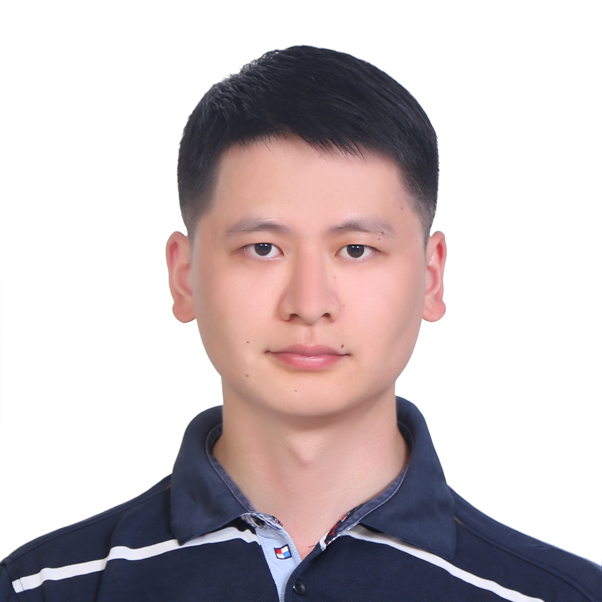}}] 
	{Shixiang Chen}   received the Ph.D. degree in Systems Engineering and Engineering Management from The Chinese University of Hong Kong in July, 2019.  He is   an assistant professor in the School of Mathematical Sciences, University of Science and Technology of China. He was a postdoctoral associate in the Department of Industrial \& Systems Engineering at Texas A\&M University.  His current research interests include design and analysis of optimization algorithms, and their applications in machine learning and signal processing.
\end{IEEEbiography}
\begin{IEEEbiography}
[{\includegraphics[width=0.95in,height=1.25in,clip,keepaspectratio]{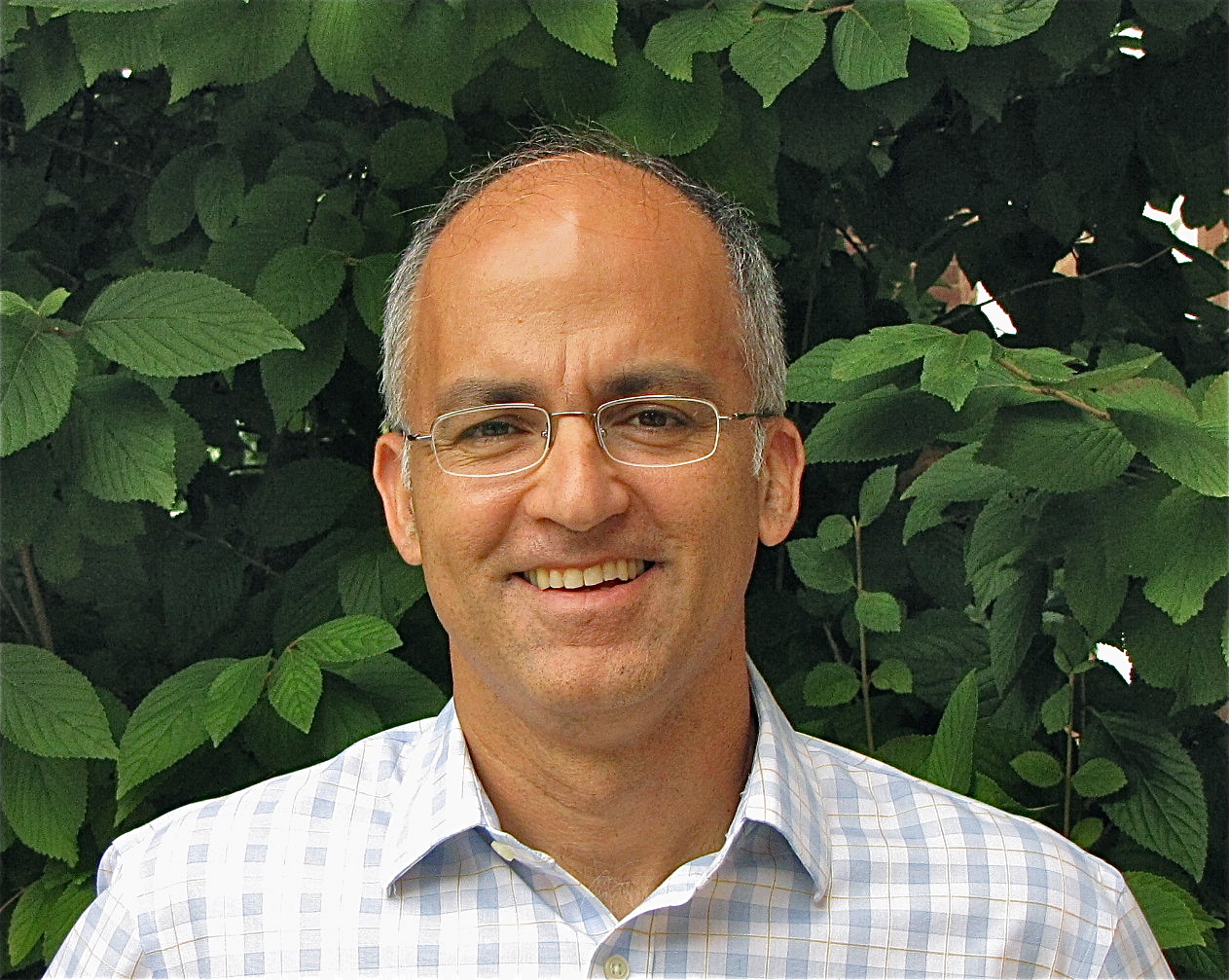}}] 
{Alfredo Garcia} received the Degree in electrical engineering from the Universidad de los Andes, Bogot\'a,Colombia,in 1991,the Dipl\^ome d’Etudes Approfondies in automatic control from the Universit\'e Paul Sabatier, Toulouse, France, in 1992, and the Ph.D. degree in industrial and operations engineering from the University of Michigan, Ann Arbor, MI, USA, in 1997.
From 1997 to 2001, he was a consultant to government agencies and private utilities in the electric power industry. From 2001 to 2015, he
was a Faculty with the Department of Systems and Information Engineering, University of Virginia, Charlottesville, VA, USA. From 2015 to 2017, he was a Professor with the Department of Industrial and Systems Engineering, University of Florida, Gainesville, FL, USA. In 2018, he joined the Department of Industrial and System Engineering, Texas A\&M University, College Station, TX, USA. His research interests include game theory and dynamic optimization with applications in communications and energy networks.
\end{IEEEbiography}

\begin{IEEEbiography}
	[{\includegraphics[width=1.35in,height=1.1in,clip,keepaspectratio]{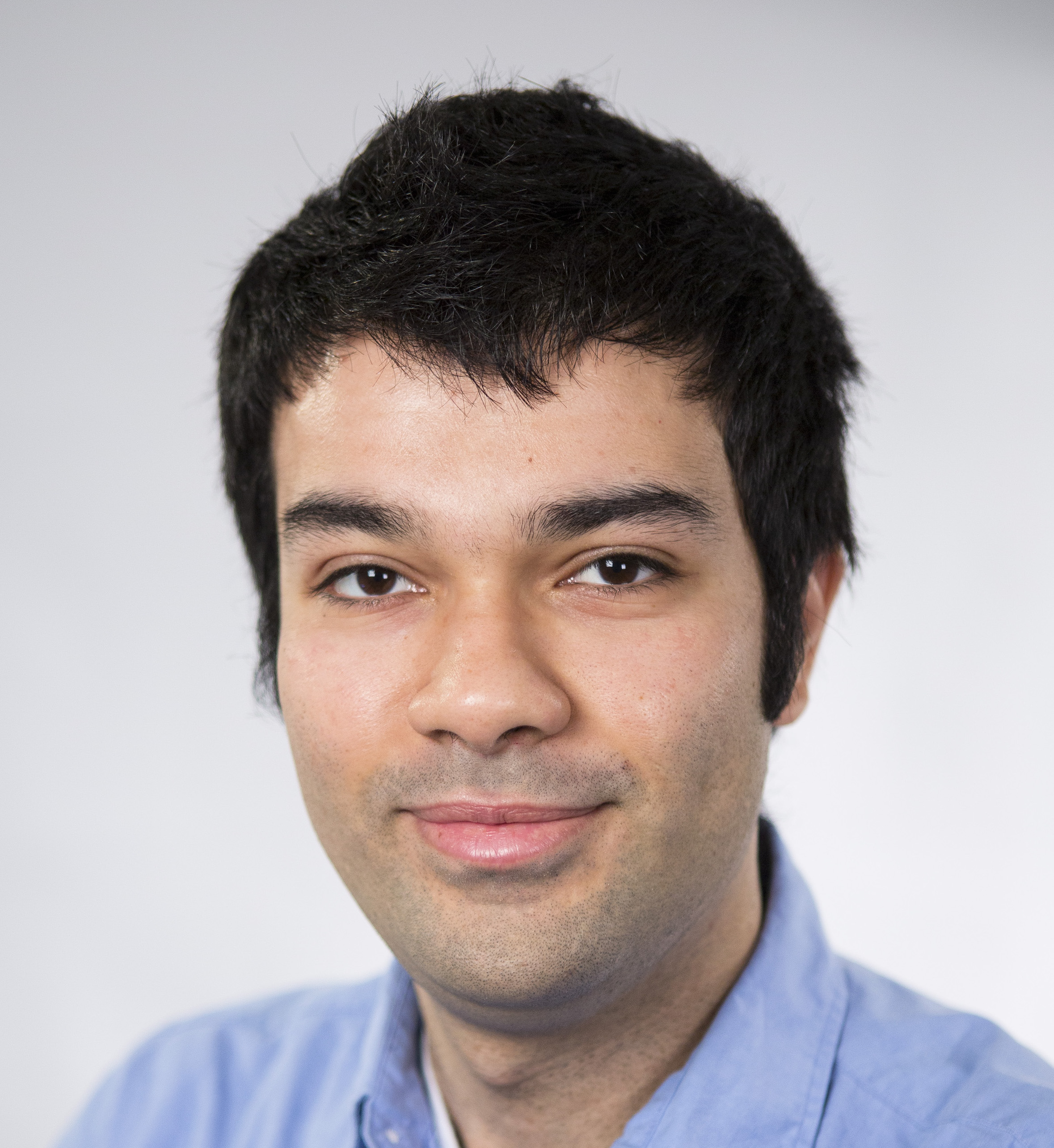}}] {Shahin Shahrampour} received the Ph.D. degree in Electrical and Systems Engineering, the M.A. degree in Statistics (The Wharton School), and the M.S.E. degree in Electrical Engineering, all from the University of Pennsylvania, in 2015, 2014, and 2012, respectively. He is currently an Assistant Professor in the Department of Mechanical and Industrial Engineering at Northeastern University. His research interests include machine learning, optimization, sequential decision-making, and distributed learning, with a focus on developing computationally efficient methods for data analytics. He is a Senior Member of the IEEE.
	\end{IEEEbiography}

\end{document}